\crefname{hypothesis}{Hypothesis}{Hypotheses}
\crefname{fact}{Fact}{Facts}
\newcommand{\Q}{\mathcal{Q}}
\newcommand{\R}{\mathbb{R}}
\DeclareMathOperator{\Tr}{Tr}
\newcommand{\tr}{\operatorname{trace}}
\newcommand{\new}[1]{\textcolor{black}{#1}}
\begin{document}

\title{Provable Post-Training Quantization: Theoretical Analysis of OPTQ and Qronos}

\author{Haoyu Zhang\thanks{Equal contribution, Department of Mathematics, UC San Diego 
  (\email{haz053@ucsd.edu}, \email{shz051@ucsd.edu}), corresponding author: Haoyu Zhang (\email{haz053@ucsd.edu})
  }
\and Shihao Zhang\footnotemark[1]
  \and Ian Colbert\thanks{Software Architecture, AMD
  (\email{ian.colbert@amd.com})}
\and Rayan Saab\thanks{Department of Mathematics and HDSI, UC San Diego 
  (\email{rsaab@ucsd.edu})}}

\maketitle

\begin{abstract}
Post-training quantization (PTQ) has become a crucial tool for reducing the memory and compute costs of modern deep neural networks, including large language models (LLMs). Among PTQ algorithms, the OPTQ framework—also known as GPTQ—has emerged as a leading method due to its computational efficiency and strong empirical performance. Despite its widespread adoption, however, OPTQ lacks rigorous quantitative theoretical guarantees. This paper presents the first quantitative error bounds for both deterministic and stochastic variants of OPTQ, as well as for Qronos, a recent related state-of-the-art PTQ algorithm. We analyze how OPTQ's iterative procedure induces quantization error and derive non-asymptotic $\ell_2$ error bounds that depend explicitly on the calibration data and a regularization parameter that OPTQ uses. Our analysis provides theoretical justification for several practical design choices, including the widely used heuristic of ordering features by decreasing norm, as well as guidance for selecting the regularization parameter. For the stochastic variant, we establish stronger $\ell_\infty$ error bounds, which enable control over the required quantization alphabet and are particularly useful for downstream layers and nonlinearities. Finally, we extend our analysis to Qronos, providing new theoretical bounds, for both its deterministic and stochastic variants, that help explain its empirical advantages.
\end{abstract}

\begin{keywords}
Quantization, Neural Networks, Large Language Models, Theoretical Guarantees, OPTQ, Qronos
\end{keywords}

\begin{MSCcodes}
68T07, 68W25, 62M45, 68Q25
\end{MSCcodes}

\section{Introduction}\label{sec:intro}
Recent breakthroughs in deep neural networks—most notably large language models (LLMs)—have {introduced} massive computational and memory demands. These costs have spurred interest in model compression methods that make LLM deployment more practical \cite{xu2023survey, zhu2023survey}. A key compression method is quantization, which reduces the number of bits used to represent each weight or activation (their \emph{bit width}), thereby lowering the requirements for {storage, movement, and computation}. Quantization methods achieve this reduction by simply replacing the real-valued weights (or activations) by elements from a finite set. {Quantization approaches can}
be divided into two categories: (1) quantization-aware training (QAT) \cite{Jacob_2018_CVPR, xi2023training, zhang2022learning}, where quantized models are learned during training via some variant of gradient descent; and (2) post-training quantization (PTQ) \cite{quip, frantar2022gptq, awq, zhang2023post}, where quantized models are constructed after training. 
{Unlike QAT,} PTQ is usually back-propagation-free and adjusts a pre-trained model in one pass. {Therefore,} it {incurs significantly less computational overhead.} Moreover, it typically only requires a small calibration dataset. 
As such, it is widely adopted \cite{gholami2022survey, zeroquant} and it now enables few-bit LLM inference in practice.

\subsection{Contributions} \label{sec:contributions}

We present the first quantitative error guarantees for post-training quantization (PTQ) algorithms built on the widely used OPTQ framework—also known as GPTQ \cite{frantar2022gptq}. OPTQ has become the de-facto PTQ method across diverse neural network architectures \cite{qubitium2024gptqmodel}. Consequently, nearly all new quantization schemes (e.g., \cite{quarot,quip, spinquant, zhang2024magr}) benchmark against it, underscoring its status as the standard PTQ baseline. Thus, we focus on both deterministic and stochastic variants of OPTQ, as well as Qronos \cite{zhang2025qronos}, a recent related state-of-the-art algorithm. 

 The OPTQ algorithm maps a weight vector $w \in \mathbb{R}^N$ to a vector $q \in \mathcal{A}^N$, where $\mathcal{A} \subset \mathbb{R}$ is a finite quantization alphabet, by targeting the error measured against a fixed calibration data matrix $X \in \mathbb{R}^{m \times N}$, i.e., by targeting $\|Xw - Xq\|_2$.  It proceeds iteratively, alternating between first quantizing a coordinate of $w$, then updating the remaining unquantized coordinates to compensate for the induced error.  This greedy strategy is applied  to all the weight vectors in a layer and repeated layer-wise. It is also worth noting that OPTQ typically involves working with a regularized version of the covariance matrix $X^TX + \lambda I$, where the regularization parameter $\lambda$ helps stabilize the algorithm. OPTQ has proven highly effective in practice, but despite its success and ubiquity, rigorous quantitative analyses of OPTQ’s accuracy have been lacking. We close this gap by deriving non-asymptotic bounds on its quantization error, characterizing its dependence on  $N$, on properties of the calibration data $X$, and on the choice of  regularization parameter $\lambda$. We provide: 
\smallskip
\paragraph{An analysis of OPTQ with $\ell_2$ error bounds}
We establish the first  error bounds for OPTQ:
\begin{itemize}
\item We characterize how the error in OPTQ iteratively evolves in \cref{lemma:OPTQ_error_recur}. 
\item Using this characterization,  we derive \emph{deterministic $\ell_2$ bounds} (\cref{thm:l2 full version,thm:l2 simp ver}) that reveal how  the error depends on conditioning of sub-matrices of the calibration data $X$, and on $\lambda$. 
\item \new{\Cref{thm:l2 simp ver} also yields an upper bound on the expansion of the grid size relative to the dynamic range of the weights, and \cref{Appendix:adversarial} shows that this bound is tight via a matching example. Because this worst case scaling can be unfavorable, it partly motivates the use of stochastic rounding to improve robustness to adversarial data and weight configurations.}

\item As \new{another} by-product, we rigorously justify a heuristic that is widely used in practice but previously lacked formal support: namely, the strategy of ordering features (columns of $X$) by decreasing norm before quantization (\cref{rem:decreasing}).
\end{itemize}
\smallskip
\paragraph{A stochastic variant of OPTQ with $\ell_\infty$ error bounds} We also  analyze a stochastic rounding variant of OPTQ and prove \emph{stronger $\ell_\infty$ bounds} (\cref{thm:l inf full ver}), thereby obtaining  \emph{explicit control of the required alphabet size} for quantization (\cref{remark:finte alphabet}).
The  stochastic version is motivated by overcoming three challenges:
\begin{itemize}
\item When quantizing activations, $Xq$ (or some Lipschitz function of $Xq$) must also be quantized since it becomes the input to the next layer. Controlling  $\|Xw - Xq\|_\infty$ bounds the required bit-width
for the next layer's activation quantization.
\item Deterministic OPTQ does not provide direct $\ell_\infty$ control on the updated weights, making it difficult to bound the required bit width for weight quantization. The stochastic variant overcomes this limitation.
\item Many neural network layers involve nonlinearities—such as softmax—where output ranking is sensitive to large coordinate errors. An $\ell_2$ bound may look small yet fail to capture or prevent ranking flips, while an $\ell_\infty$ bound can provide guarantees, especially if there is a gap between the  largest entries. 
\end{itemize}
\smallskip
\paragraph{New theoretical results for {Qronos}}
We extend our framework to analyze {Qronos} \cite{zhang2025qronos}, a recent PTQ method with \new{state-of-the-art} empirical results. Our analysis provides new $\ell_2$ and $\ell_\infty$ error bounds (see \cref{sec:Qronos theory}) that help explain its superior performance in practice.

\subsection{Preliminaries and Notation}\label{sec:notation}
Before presenting our theoretical results, let us formalize notation and review some necessary preliminaries, including those associated with neural networks and quantization.

\new{When quantizing $W$, we use $X \in \mathbb{R}^{m \times N}$ to denote the input calibration dataset of $m$ samples (\textit{e.g.}, tokens) for the layer, resulting from the original pre-trained model, and $\widetilde{X} \in \mathbb{R}^{m \times N}$ to denote the input calibration dataset coming from the partially quantized model. Given a vector $v \in \mathbb{R}^n$, we use $v_i$ for its $i$-th entry, $v_{\geq j}$ for the subvector $(v_j, \dots, v_n)^\top$, and we define $v_{\leq j}$ analogously. $\|v\|$ is the Euclidean norm of $v$.  Given a matrix $A \in \mathbb{R}^{m \times n}$, we use $A_i$ to denote its $i$-th column. We use $A_{\geq j}$ to denote the submatrix $(A_j, \dots, A_n)$.} We denote the column space of a matrix $A$ by  $\text{col}(A)$.  $P_A$ is the orthogonal projection onto $\text{col}(A)$ given by $P_A=AA^\dagger$ and $P_{A^\perp}$ is the projection onto its orthogonal complement given by $P_{A^\perp} = I - AA^\dagger$, where $\dagger$ represents pseudo inverse. 
Throughout this paper, all indices start from 1.

An $L$-layer multilayer perceptron (MLP) is a map that composes affine functions and non-linear activation functions that act component wise: 
\[
  \mathbf{\Phi}\colon\R^{N_0}\longrightarrow\R^{N_L},
  \qquad
  \mathbf{\Phi}(x)
  \;=\;
  \phi^{[L]}\!\circ A^{[L]}\!\circ\dots\circ\phi^{[1]}\!\circ A^{[1]}(x).
\]
Here, for each layer $\ell=1,\dots,L$ we have the affine functions 
\[
  A^{[\ell]}(z)\;=\;{W^{[\ell]}}^{\!\top}z+b^{[\ell]},
  \qquad
  W^{[\ell]}\in\R^{N_{\ell-1}\times N_\ell},
  \;\;
  b^{[\ell]}\in\R^{N_\ell},
\] and the activation functions $\phi^{[\ell]}\colon\R^{N_\ell}\to\R^{N_\ell}$.
We extend the definition of $\mathbf{\Phi}$ to matrix inputs $X \in \R^{m \times N_0}$ by applying it row-wise, that is, 
\[
  \mathbf{\Phi}(X) := 
  \begin{bmatrix}
    \mathbf{\Phi}(\mathbf{x}_1)^\top \\
    \vdots \\
    \mathbf{\Phi}(\mathbf{x}_m)^\top
  \end{bmatrix}
  \in \R^{m \times N_L},
  \qquad \text{where each } \mathbf{x}_i \in \R^{N_0} \text{ is a row of } X.
\]
Let $X_0\in\R^{m\times N_0}$ contain $m$ input samples as rows (\textit{e.g.}, tokens in LLMs). Transformers replace some of the layers in MLPs with ``attention mechanisms," non-linear functions that do not operate elementwise. In this context, for example, self-attention maps  $X \in \R^{m \times N}$  to
\[
  \mathrm{Attention}(X) \;=\; \mathrm{softmax}\left(\frac{XW_Q (XW_K)^\top}{\sqrt{N}}\right) XW_V \ \in \R^{m\times N},
\]
where $W_Q, W_K, W_V \in \R^{N \times N}$ are learned weight matrices for ``queries", ``keys", and ``values", respectively\footnote{When applied to a matrix $Z \in \R^{m \times m}$, the softmax function acts row-wise. Each row is exponentiated element-wise and normalized to sum to one.}.

For our purposes in this paper, the important point ---regardless of whether one is dealing with an attention mechanism or an MLP structure--- is that products of the form $XW$ are ubiquitous, and the corresponding weight matrices $W$ need to be quantized via algorithms that preserve these products. 

 \subsection{Quantization preliminaries}\label{sec:notations}
Before introducing quantization in more detail, let us note that in most PTQ methods, weight matrices $W^{[1]},\ldots,W^{[L]}$ are quantized sequentially, one layer at a time.  
Define the truncated networks
obtained from the original and quantized models after layer \(\ell\), and set the corresponding activation matrices
\[
  X^{[\ell]}
  := \mathbf{\Phi}^{[\ell]}(X_0)
  =  \phi^{[\ell]}\!\bigl(X^{[\ell-1]}W^{[\ell]}\bigr),
  \qquad
  \widetilde{X}^{[\ell]}
  := \widetilde{\mathbf{\Phi}}^{[\ell]}(X_0)
  =  \phi^{[\ell]}\!\bigl(\widetilde{X}^{[\ell-1]}\widetilde{W}^{[\ell]}\bigr),
\]
with \(X^{[0]}=\widetilde{X}^{[0]}:=X_0\).
The matrices \(X^{[\ell-1]}W^{[\ell]}\) and \(\widetilde{X}^{[\ell-1]}\widetilde{W}^{[\ell]}\) are the associated pre-activations. Because our analysis focuses on a single, generic layer, we suppress the layer superscript and write  
\(XW\) \new{(or $\widetilde{X}W$)} for the full pre-activation matrix and \(Xw\) \new{(or $\widetilde{X}w$)} for the pre-activation of a single output channel. \new{Here, $X, \tilde{X} \in R^{m \times N}$} stacks \(m\) samples (e.g., tokens) as rows, \(W\in\R^{N\times N'}\) is the weight matrix, and \(w\) denotes one of its columns (i.e.,\ a single  channel). 

A PTQ algorithm replaces $W^{[\ell]}\in \mathbb{R} ^ {N_{\ell-1}\times N_{\ell}}$ by $Q^{[\ell]}\in \mathcal{A} ^ {N_{\ell-1}\times N_{\ell}}$ and uses some possibly scaled, shifted, or {truncated} variant of the  finite alphabet (or quantization grid) $$\mathcal{A}=\mathcal{A}^{\delta}_b:= \left\{ \pm k\delta : k = -2^{b-1},..., -1, 0, 1,..., 2^{b-1} \right\},$$ 
with $|\mathcal{A}|=2^{b}+1$. If the alphabet used is symmetric about $0$, we call it symmetric quantization. Otherwise, we call it asymmetric quantization\footnote{\new{Although we focus mainly on symmetric alphabets in the later discussion of the finite-alphabet setting, the same arguments extend directly to asymmetrically ranged weights and asymmetric alphabets.}}. 
Similarly, we define the infinite alphabet $\mathcal{A}=\mathcal{A}^{\delta}:= \{ \pm k\delta : k \in \mathbb{Z} \}$. For each alphabet, we associate
a memoryless scalar quantizer (MSQ) $\mathcal{Q}:\mathbb{R}\rightarrow \mathcal{A}$ given by
$
\mathcal{Q}(z):=\arg\min_{p \in \mathcal{A}}|z-p|,
$ which essentially executes a ``round to nearest" (RTN) operation. 
In the case of the infinite alphabet, this becomes
$
\mathcal{Q}(z)=\delta \mathrm{sign} (z) \left| \lfloor \frac{z}{\delta}+\frac{1}{2} \rfloor \right|.
$

We  define the unbiased stochastic scalar quantizer $\mathcal{Q}_{stoc}:\mathbb{R}\rightarrow \mathcal{A}$, which randomly rounds a real number $z\in [k\delta, (k+1)\delta]$ either to $k\delta$ or to  $(k+1)\delta$  such that $\mathbb{E}[\mathcal{Q}_{stoc}(z)]=z$. Specifically
$$
\mathcal{Q}_{stoc}(z):=\left\{
\begin{array}{lr}
    \lfloor\frac{z}{\delta}\rfloor\delta & \text{with probability $p$},\\
    (\lfloor\frac{z}{\delta}\rfloor+1)\delta & \text{with probability $1-p$},
\end{array}
\right.
$$
where $p=1-\frac{z}{\delta}+\lfloor\frac{z}{\delta}\rfloor$.

{The latest} post-training quantization (PTQ) {pipelines often {comprise}} two complementary {stages}: {transforms and rounding}.
\paragraph{Transforms}
{Quantization transforms aim to} modify the weights and activations of a model to make them more amenable to quantization. {The most popular transformations include channel rescaling, matrix rotations, and model expansions.}
Channel rescaling balances per-channel ranges prior to quantization by replacing $X\mapsto XD^{-1}$, $w\mapsto Dw$ for some {optimized} diagonal matrix $D$ {before} quantizing the resulting weights (and possibly activations) \cite{awq,nagel2019data,omniquant,smoothquant}.
{Matrix rotation techniques} replace the diagonal matrix by orthogonal rotations (random, Hadamard, or learned on the Stiefel manifold) to control the magnitude across dimensions
(e.g., \cite{quarot,quip,spinquant,quip2}).
{Model expansion techniques counterintuitively increase parameter count post-training to ultimately reduce parameter volume (i.e., model size \( \times \) bit width) by further reducing parameter bit width \cite{adepu2024framequant, franco2025improving}.}
Meanwhile, MagR reduces dynamic range by minimizing the $\ell_\infty$ norm of the weights
\cite{zhang2024magr}.

\paragraph{Rounding}
Early LLM {quantization} methods fixed {the quantization grid} heuristically, then rounded {weights} to the
nearest grid point \cite{dettmers2022gpt3,zeroquant}.
Greedy layer-wise algorithms such as OBQ, OPTQ, GPFQ,  and Qronos quantize a weight vector  sequentially to approximately minimize \new{layer-wise} reconstruction error
\cite{obq,frantar2022gptq,lybrand2021greedy,zhang2023post,zhang2025qronos}.
Some recent work enriches the grid itself, for example, employing vector quantizers
\cite{quip2}, which can result in lower bit-rates. On the other hand, vector quantizers typically keep a code-book in memory, adding storage and extra look-up operations that can reduce inference speed. 
Moreover, performing vector quantization entails solving combinatorial optimization problems whose complexity increases exponentially with dimension, increasing the computational cost of the quantization itself, and limiting compute acceleration opportunities during inference.

\section{Background and Related Work}

Before introducing OPTQ and Qronos \cite{frantar2022gptq, zhang2025qronos}, let us first describe the core problem these quantization algorithms aim to solve, then review existing theoretical guarantees for PTQ methods. \new{Both OPTQ and Qronos seek to minimize the layer-wise reconstruction error. For OPTQ which uses one calibration matrix, this either takes the form
\(
\underset{Q \in \mathcal{A}^{N \times N'}}{\min} \ \|{X}W - {X}Q\|_F^2,
\) or more commonly in practice,
\begin{gather}\label{optq_error}
\underset{Q \in \mathcal{A}^{N \times N'}}{\min} \ \|\widetilde{X}W - \widetilde{X}Q\|_F^2.
\end{gather}
Meanwhile, Qronos, like GPFQ \cite{lybrand2021greedy, zhang2023post} before it, seeks to minimize
\begin{gather}\label{qronos_error}
\underset{Q \in \mathcal{A}^{N \times N'}}{\min} \ \|XW - \widetilde{X} Q\|_F^2,
\end{gather}
where $X$ and $\widetilde{X}$ are defined in \cref{sec:notations} as the activations from the original model and the (possibly quantized) activations from the quantized model, respectively.
} 
Both objectives are instances of integer least-squares problems, which are NP-hard \cite{hassibi2002expected}. As such, efficient algorithms can only approximate their solutions, differing, for example,  in how they balance accuracy and computational cost. Indeed, many  PTQ methods share this goal, including \cite{adaquant, lybrand2021greedy, adaround}.

\new{Since OPTQ only uses one calibration matrix, we will slightly abuse our notation for simplicity to always use $X$ throughout our analysis of OPTQ in \Cref{sec:OPTQ l2 error bound} and \Cref{sec:OPTQ linf error bound}.}

\subsection{Existing Theoretical Guarantees for Quantizing Neural Networks}
Despite an extensive body of research on post-training quantization methods, most well-known algorithms lack theoretical guarantees, with few exceptions. \new{For example, \cite{quip} provides an analysis showing that quantization benefits from incoherent weight
and Hessian matrices.}
 Another exception is a research thread focusing on the GPFQ algorithm and its variants \cite{lybrand2021greedy, zhang2024unified, zhang2023spfq, zhang2023post}. In \cite{lybrand2021greedy}, an error bound for  ternary weight quantization  is derived under the assumption that the rows of $X$ are independently sampled from a Gaussian distribution. Then, \cite{zhang2023post} used a different proof technique that allowed extending the results to more general quantization grids and a wider range of data distributions, including Bernoulli and Gaussian clusters. Subsequently, \cite{zhang2023spfq} introduced stochastic rounding to completely remove the need for randomness assumptions on $X$. These results applied to arbitrary data matrices $X$ and sufficiently large alphabets. The proof technique was further extended in \cite{zhang2024unified} to handle cases when the quantization grid has a given finite size and to incorporate pruning. These works prove explicit error bounds as a function of $X$ and the various dimension parameters, as we do in this work for OPTQ and Qronos. \new{Notably, the results in previous works all rely on stochasticity, either in data distribution, weights, or in the quantizer. 
Our deterministic $\ell_2$ error bounds do not require any such assumptions, and provide an \emph{equality} characterization for the error dynamics of OPTQ and Qronos. Our $\ell_\infty$ error bounds rely on stochastic rounding and extend the proof techniques in \cite{zhang2024unified, zhang2023spfq} to handle the more complicated update rules that involve two least-squares solvers in each iteration (\cref{thm:OPTQ2})}.

\begin{algorithm}[t]
\caption{OPTQ: Quantize a layer \( W \) to \( Q \)}
\begin{algorithmic}[1]\label{OPTQ}
\STATE  \( H^{-1} = (X^\top X+\lambda I)^{-1}=LL^\top \) \hfill {Perform Cholesky decomposition}
\FOR{every column $w$ in $W$ (in parallel)}
\STATE $w^{(0)} {=w}$
\FOR{$t = 1$ to $N$} 
        \STATE
         $q_{t}=\mathcal{Q}(w^{(t-1)}_t)$ \label{eq1} \hfill {Quantize current weight}
        \STATE \( w^{(t)}_{\geq t+1}= w^{(t-1)}_{\geq t+1}+(q_{t}-w^{(t-1)}_t)\frac{L_{\geq t+1,t}}{L_{tt}}  \) \label{eq2} \hfill {Update remaining weights}
    \ENDFOR
\ENDFOR
\STATE \textbf{return} every \( q \) in \( Q \) \hfill The matrix of quantized neurons
\end{algorithmic}
\end{algorithm}

\subsection{An Introduction to OPTQ}\label{subsec:OPTQ intro}

As discussed in \cref{sec:intro}, OPTQ is a widely used baseline in many recent works on post-training quantization (PTQ). OPTQ and related algorithms \cite{obq, frantar2023sparsegpt, frantar2022gptq} build on a framework that traces back to the Optimal Brain Surgery (OBS) approach \cite{hassibi1993optimal}, where pruning and quantization are performed iteratively by solving a small optimization problem at each step. More specifically, denoting the “Hessian” by $H = X^\top X$ and letting $\delta_w$ represent the update to the weight vector, the OBS pruning step solves:
\[
\underset{\delta_w}{\min} \ \frac{1}{2} \delta_w^\top H \delta_w \quad \text{subject to} \quad e_p^\top \delta_w + w_p = 0, \quad \delta_w|_F = 0,
\]
where $e_p$ is the standard basis vector selecting the $p$-th coordinate to prune, and $\delta_w|_F = 0$ enforces no change to already-fixed coordinates (see \cite{hassibi1992second}). This paradigm underlies both modern pruning strategies and quantization methods such as OPTQ.
Similarly, for quantization, each step involves solving
\[
\underset{q \in \mathcal{A}}{\min} \left\{ \ \underset{\delta_w}{\min} \ \frac{1}{2} \delta_w^\top H \delta_w \ \text{subject to} \ e_p^\top \delta_w + w_p = q, \quad \delta_w|_F = 0 \ \right\},
\]
where $\mathcal{A}$ is the quantization alphabet (see \cite{obq}).

 In the pruning case, this constrained quadratic problem admits a closed-form solution via the stationary point of its Lagrangian. In the quantization setting, the inner problem remains convex and can be solved in the same way, but since $q$ must lie in a discrete set $\mathcal{A}$, one must evaluate the objective over all possible values in $\mathcal{A}$ and select the minimizer. This leads to a natural greedy algorithm that quantizes one coordinate at a time while accounting for its impact on the overall output. With a few variations to improve efficiency and stability, this turns out to be equivalent to the iterations in OPTQ (\cref{OPTQ})\footnote{We follow our convention established in \cref{sec:notation} by using $XW$ as layer output where each neuron $w$ is a column of $W$ and $X_j, \, j=1,2,..., N$ represents features in \cref{OPTQ}. This notation is different from \cite{frantar2022gptq} where the authors were using $\widetilde{W}X$ for layer output and each neuron is a row of $\widetilde{W}$.}.

The first notable modification is that OPTQ uses the Cholesky factor $L$ in the decomposition $H^{-1}=LL^T$ in place of $H^{-1}$ itself as it gives a computationally equivalent output when the Cholesky decomposition exists. The second variation in \cref{OPTQ}, which is more critical from a mathematical perspective, 
is the introduction of a ``dampening" term $\lambda I$, added to $X^TX$ when computing the inverse Hessian to mitigate numerical instability\footnote{Another potential variation (called lazy batch updates in \cite{frantar2022gptq}) involves processing the weights in blocks of size $B$ to enhance the compute-to-memory-access ratio while preserving the algorithm's mathematical equivalence to the $B=1$ case. Thus, without loss of generality {we ignore $B$ in our mathematical analysis of OPTQ throughout this paper}.}.

\begin{algorithm}[t]
\caption{Qronos: Quantize a layer \( W \) to \( Q \)}
\label{Qronos++}
\begin{algorithmic}[1]
\STATE $H^{-1} = (\widetilde{X}^\top \widetilde{X}{+\lambda I})^{-1} = LL^\top $ \hfill {Perform Cholesky decomposition}
\FOR{every column $w$ in $W$ (in parallel)}
\STATE
 $w^{(0)}{=w}$ 
\STATE 
$q_{1}=\mathcal{Q}\left(\dfrac{\widetilde{X}_1^{\top} (Xw - \widetilde{X}_{\geq2}w^{(0)}_{\geq 2})}  {\|\widetilde{X}_1\|_2^2}\right)$
\hfill{Quantize first weight}
    \STATE $w^{(1)}_{\geq 2}=\widetilde{X}_{\geq2}^{\dagger}\left(Xw - q_{1}\widetilde{X}_{1}\right)$ \hfill {Update remaining weights}

\FOR{$t = 2$ to $N$} 
    \STATE $q_{t}= \Q(w^{(t-1)}_{t})$ \hfill {Quantize current weight}
    \STATE \( w^{(t)}_{\geq t+1} 
    = w^{(t-1)}_{\geq t+1}- L_{\geq t+1,t}\cdot (w^{(t-1)}_{t} - q_{t})/L_{tt}  \) \hfill {Update remaining weights}
\ENDFOR
\ENDFOR
\STATE \textbf{return} every $q$ in $Q$ \hfill {The matrix of quantized neurons}
\end{algorithmic}
\end{algorithm}
\subsection{Qronos}
We now introduce Qronos \cite{zhang2025qronos}, as our theoretical analysis extends to this algorithm as well. Qronos is a recently proposed state-of-the-art PTQ algorithm that sequentially rounds and updates neural network weights. It demonstrably subsumes and surpasses OPTQ 
via explicitly correcting quantization error in both the weights and activations of previous layers  while diffusing error into future weights. Qronos is derived from a disciplined mathematically interpretable framework, discussed in more details in \cref{sec:Qronos theory}. It also has a computationally efficient implementation (\cref{Qronos++}) {that leverages} existing optimizations {proposed for} OPTQ, such as Cholesky decomposition and block-level error diffusion. It was shown in \cite{zhang2025qronos} that Qronos outperforms OPTQ {including, for example,} on Llama 3 models \cite{grattafiori2024llama} across a range of bit budgets.

\section{$\ell_2$-Norm Error Analysis of OPTQ}\label{sec:OPTQ l2 error bound}
Our goal in this section is to bound the reconstruction error $\|Xw - Xq\|_2$ associated with OPTQ (\cref{OPTQ}). 

We denote the full state of the algorithm after step $t$ by the vector
\(
w^{(t)} = (q_{\leq t}, w^{(t)}_{\geq t+1})\in \mathcal{A}^{t} \times \mathbb{R}^{N-t},
\)
with the initialization $w^{(0)} = w\in\R^{N}$ and final output $w^{(N)} = q\in\mathcal{A}^{N}$. 
Let $X \in \mathbb{R}^{m \times N}$ be a calibration data matrix with columns $X = \begin{pmatrix} X_1 & \dots & X_N \end{pmatrix}$, and let $w = (w_1, \dots, w_N)^\top \in \mathbb{R}^N$ be the weight vector to be quantized. Running OPTQ (\cref{OPTQ}) on $X$ with regularization parameter $\lambda > 0$, i.e., using the Hessian $H = X^\top X + \lambda I$, is equivalent to applying \cref{OPTQ} without regularization to the augmented matrix
\[
\widehat{X} = \begin{pmatrix} X \\ \sqrt{\lambda} I \end{pmatrix}.
\]
This equivalence follows directly from the identity $X^\top X + \lambda I = \widehat{X}^\top \widehat{X}$. Notably, $\widehat{X}$ is always full rank with more rows than columns, regardless of whether $X$ itself is full rank or whether $m \geq N$. This justifies our initial focus on the unregularized case $\lambda = 0$ with full-rank $X$.

In \cref{subsec:OPTQ error update}, we begin by reviewing the equivalence between the least-squares and Cholesky formulations of OPTQ under the assumption that $X \in \mathbb{R}^{m \times N}$ has full column rank (i.e., $m \geq N$ and $\operatorname{rank}(X) = N$). This allows for a clean derivation of the OPTQ error dynamics and leads to explicit error bounds, first in the unregularized case $\lambda = 0$, and then for general $\lambda > 0$.

Then, in \cref{sec:practical}, we use these theoretical results to provide insight into several empirical practices in the literature. These include the common strategy of sorting columns of $X$ by decreasing norm, the selection of the regularization parameter $\lambda$ and its role in controlling the alphabet size and the generalization error, and the practical advantage of OPTQ over simple round-to-nearest methods such as \new{memoryless} scalar quantization (MSQ).


\subsection{OPTQ Error Dynamics and Bounds}\label{subsec:OPTQ error update} 

Recall that at the end of the $t$-th iteration, OPTQ has replaced the original weight vector $w$ with the partially quantized vector $w^{(t)} = (q_{\leq t}, w^{(t)}_{\geq t+1})$. 
So, it is natural to define the error at step $t$ as
\begin{equation}\label{eq:err_def}
e_t = Xw - Xw^{(t)} = Xw - \sum_{j=1}^{t} q_j X_j - \sum_{j=t+1}^{N} w^{(t)}_j X_j.
\end{equation}
In particular, we have $e_0 = 0$ before any quantization occurs, and $e_N = Xw - Xq$ once all coordinates have been quantized. To analyze how this error evolves through the OPTQ iterations \eqref{eq1} and \eqref{eq2}, we reformulate these updates in terms of least-squares problems. 
The following result, adapted from \cite{zhang2025qronos}, shows that OPTQ greedily minimizes  $e_t$ at each step by selecting the quantized value and then optimally adjusting the remaining coordinates.

\begin{lemma}[\cite{zhang2025qronos}]\label{thm:OPTQ2}
Lines \eqref{eq1} and \eqref{eq2} of OPTQ (\cref{OPTQ}) are equivalent to the pair of optimization problems:
\begin{align}
    q_t &= \underset{p \in \mathcal{A}}{\arg\min} \ \frac{1}{2} \left\| Xw - \sum_{j=1}^{t-1} q_j X_j - p X_t - \sum_{j=t+1}^{N} w^{(t-1)}_j X_j \right\|_2^2, \label{eq5} \\
    w^{(t)}_{\geq t+1} &= \underset{(v_{t+1}, \dots, v_N) \in \mathbb{R}^{N - t}}{\arg\min} \ \frac{1}{2} \left\| Xw - \sum_{j=1}^{t} q_j X_j - \sum_{j=t+1}^{N} v_j X_j \right\|_2^2. \label{eq6}
\end{align}
\end{lemma}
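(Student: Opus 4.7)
The plan is to prove both equivalences simultaneously by strong induction on $t$, taking as the inductive invariant
\[
X_{\geq t}^\top r^{(t-1)} = 0, \qquad r^{(t-1)} := Xw - \sum_{j<t} q_j X_j - \sum_{j\geq t} w^{(t-1)}_j X_j,
\]
which is precisely the first-order optimality condition asserting that $w^{(t-1)}_{\geq t}$ minimizes $\tfrac12\bigl\|Xw - \sum_{j<t} q_j X_j - X_{\geq t} v\bigr\|_2^2$ over $v\in\R^{N-t+1}$, i.e., that \eqref{eq6} holds at step $t-1$. The base case $t=1$ is trivial because $w^{(0)}=w$ gives $r^{(0)}=0$. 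Throughout, I work with $\lambda=0$ and full-column-rank $X$; the regularized case reduces to this by the $\widehat{X}$-substitution already noted in the paper.

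For the inductive step, I would first handle the equivalence of \eqref{eq1} and \eqref{eq5}. Rewriting the argument of the norm in \eqref{eq5} as $r^{(t-1)} + (w^{(t-1)}_t - p)X_t$ and expanding the square yields
\[
\tfrac12\|r^{(t-1)}\|_2^2 + (w^{(t-1)}_t - p)\,X_t^\top r^{(t-1)} + \tfrac12(w^{(t-1)}_t - p)^2\|X_t\|_2^2.
\]
The inductive invariant makes the cross term vanish, so the objective collapses to a one-dimensional quadratic in $p$ with unconstrained minimizer $p^\star = w^{(t-1)}_t$. Minimizing $(p-p^\star)^2$ over $p\in\mathcal{A}$ therefore selects the nearest alphabet point to $w^{(t-1)}_t$, which is $\mathcal{Q}(w^{(t-1)}_t)$ by definition, matching \eqref{eq1}.

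Next, for the equivalence of \eqref{eq2} and \eqref{eq6}, I would set $\Delta := q_t - w^{(t-1)}_t$ and use the decomposition
\[
Xw - \sum_{j\leq t} q_j X_j \;=\; -\Delta\, X_t + X_{\geq t+1}\,w^{(t-1)}_{\geq t+1} + r^{(t-1)},
\]
together with $X_{\geq t+1}^\top r^{(t-1)} = 0$, to solve the normal equations for \eqref{eq6} and obtain the closed form
\[
w^{(t)}_{\geq t+1} \;=\; w^{(t-1)}_{\geq t+1} - \Delta\,(X_{\geq t+1}^\top X_{\geq t+1})^{-1}X_{\geq t+1}^\top X_t.
\]
To reconcile this with \eqref{eq2}, I would need the Cholesky identity
\[
\frac{L_{\geq t+1,t}}{L_{tt}} \;=\; -(X_{\geq t+1}^\top X_{\geq t+1})^{-1}X_{\geq t+1}^\top X_t,
\]
where $L$ is the lower-triangular Cholesky factor of $H^{-1}$. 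This identification is the main obstacle. I would prove it by a block computation: partition $L$ into blocks of sizes $(t-1)$ and $(N-t+1)$, write out $LL^\top = H^{-1}$ blockwise, and extract the leading pivot of the lower-right $(N-t+1)\times(N-t+1)$ sub-Cholesky using the Schur-complement identity $(H^{-1})_{\geq t,\geq t}^{-1} = H_{\geq t,\geq t} - H_{\geq t,<t}H_{<t,<t}^{-1}H_{<t,\geq t}$. A quick $2\times2$ sanity check confirms the sign and normalization. Once this identity is in hand, the new residual $r^{(t)}$ is orthogonal to $X_{\geq t+1}$ by construction, carrying the invariant to step $t$ and completing the induction.
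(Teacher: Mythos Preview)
The paper does not actually prove this lemma: it is stated with a citation to \cite{zhang2025qronos} and no argument is supplied in the present paper (neither in the main text nor in the appendix). So there is no in-paper proof to compare against.

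That said, your proposal is correct and is the natural argument. The inductive invariant $X_{\geq t}^\top r^{(t-1)}=0$ is exactly the normal-equation condition for \eqref{eq6} at the previous step, and your reduction of \eqref{eq5} to round-to-nearest via the vanishing cross term is clean. The only nontrivial piece is the Cholesky identity
\[
\frac{L_{\geq t+1,t}}{L_{tt}} \;=\; -(X_{\geq t+1}^\top X_{\geq t+1})^{-1}X_{\geq t+1}^\top X_t,
\]
and your sketched block/Schur-complement derivation is the right one: with $M:=L^{-1}$ lower triangular one has $M^\top M=H$, so $M_{33}^\top M_{33}=H_{33}$ and $M_{33}^\top M_{32}=H_{32}$; then $LM=I$ in the $(3,2)$ block gives $L_{32}/L_{22}=-M_{33}^{-1}M_{32}=-H_{33}^{-1}H_{32}$, which is exactly the claim. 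The invariant then propagates because $r^{(t)}$ satisfies the normal equations for \eqref{eq6} by construction. Your restriction to $\lambda=0$ with full-column-rank $X$ and appeal to the $\widehat X$-substitution is consistent with how the paper handles regularization elsewhere.
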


 Our first novel result is \cref{lemma:OPTQ_error_recur}, which is proved in \cref{appendix:OPTQ_error_recur}. It makes the error evolution explicit and expresses $e_t$ as a sum of projected quantization errors. Crucially, it also provides explicit OPTQ error bounds. 
\begin{proposition}[OPTQ Error Evolution and Bounds]\label{lemma:OPTQ_error_recur}\label{prop:OPTQ_deter_error}
Let $X \in \mathbb{R}^{m \times N}$ be full rank with $m \geq N$, and let $w \in \mathbb{R}^N$. Running OPTQ (\cref{OPTQ}) with $\lambda = 0$ (so $H = X^\top X$), the error defined in \eqref{eq:err_def} satisfies
\begin{align}
    e_t &= P_{X_{\geq t+1}^{\perp}}(w^{(t-1)}_t - q_t) X_t + e_{t-1} \text{\quad and \quad } e_N =\sum_{j=1}^{N} P_{X_{\geq j+1}^{\perp}}(w^{(j-1)}_j - q_j) X_j.\label{final error}
\end{align}
Moreover, the resulting quantized vector $q$ satisfies
\begin{gather}\label{eq:e_norm}
\|Xw - Xq\|_2^2 = \sum_{j=1}^N |w^{(j-1)}_j - q_j|^2 \, \|P_{X_{\geq j+1}^{\perp}} X_j\|_2^2. 
\end{gather}
In particular, this implies that when using the infinite alphabet $\mathcal{A}^\delta$
\begin{gather}\label{eq:e_norm_bound}
\|Xw - Xq\|_2 \leq \frac{\delta}{2} \sqrt{N} \cdot \min\left\{ \max_j \|P_{X_{\geq j+1}^\perp} X_j\|_2, \ \sqrt{\frac{\|X\|_F^2}{N}} \right\}.
\end{gather}
\end{proposition}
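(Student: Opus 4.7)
The plan is to exploit \cref{thm:OPTQ2}, which recasts each OPTQ update as a least-squares problem, so that the partially quantized residual $e_t$ becomes the residual of a projection. Concretely, set $u_t := Xw - \sum_{j=1}^{t} q_j X_j$. Then \eqref{eq6} says $w^{(t)}_{\geq t+1}$ minimizes $\|u_t - X_{\geq t+1} v\|_2^2$ over $v \in \R^{N-t}$, so the normal equations give $e_t = P_{X_{\geq t+1}^\perp}\, u_t$. Similarly $e_{t-1} = P_{X_{\geq t}^\perp}\, u_{t-1}$ and, by construction, $u_{t-1} - w^{(t-1)}_t X_t = \sum_{j=t+1}^{N} w^{(t-1)}_j X_j + e_{t-1}$.

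From there I would derive the recursion in one line. Writing $u_t = u_{t-1} - w^{(t-1)}_t X_t + (w^{(t-1)}_t - q_t) X_t$ and applying $P_{X_{\geq t+1}^\perp}$, the first two terms combine into $\sum_{j=t+1}^{N} w^{(t-1)}_j X_j + e_{t-1}$, which $P_{X_{\geq t+1}^\perp}$ kills on the summation and leaves invariant on $e_{t-1}$ (because $e_{t-1} \in \operatorname{col}(X_{\geq t})^\perp \subseteq \operatorname{col}(X_{\geq t+1})^\perp$). This yields $e_t = e_{t-1} + (w^{(t-1)}_t - q_t)\, P_{X_{\geq t+1}^\perp} X_t$, and telescoping from $e_0 = 0$ to $e_N$ gives the stated formula.

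To pass from the vector identity to the Pythagorean norm identity, the key observation is that the summands in \eqref{final error} are pairwise orthogonal. Indeed, the $j$-th summand lies in $\operatorname{col}(X_{\geq j+1})^\perp$, while for any $k > j$ the $k$-th summand is a linear combination of $X_k$ and vectors in $\operatorname{col}(X_{\geq k+1})$, all of which sit inside $\operatorname{col}(X_{\geq j+1})$. Hence those two vectors are orthogonal, and summing $\|\cdot\|_2^2$ across $j$ gives \eqref{eq:e_norm}. Finally, the infinite-alphabet MSQ guarantees $|w^{(j-1)}_j - q_j| \leq \delta/2$; plugging this in and bounding $\sum_j \|P_{X_{\geq j+1}^\perp} X_j\|_2^2$ in the two obvious ways (by $N\max_j \|P_{X_{\geq j+1}^\perp} X_j\|_2^2$, or by $\sum_j \|X_j\|_2^2 = \|X\|_F^2$ using that orthogonal projections are contractions) yields both alternatives inside the $\min$ of \eqref{eq:e_norm_bound}.

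The only step that requires care is the pairwise-orthogonality argument, since the nested subspaces $\operatorname{col}(X_{\geq j+1})$ grow in the opposite direction to what one might first guess; I would make this precise by explicitly writing $P_{X_{\geq k+1}^\perp} X_k = X_k - X_{\geq k+1} \alpha_k$ and noting that both terms lie in $\operatorname{col}(X_{\geq j+1})$ whenever $k > j$. Everything else is bookkeeping on top of \cref{thm:OPTQ2}, and the extension to general $\lambda > 0$ is free via the augmented matrix $\widehat{X} = [X;\, \sqrt{\lambda} I]$ already noted in the excerpt.
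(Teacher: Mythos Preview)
Your proposal is correct and follows essentially the same route as the paper's proof: both identify $e_t$ as the residual of a least-squares projection, derive the recursion via the subspace inclusion $\operatorname{col}(X_{\geq t})^\perp \subseteq \operatorname{col}(X_{\geq t+1})^\perp$, establish pairwise orthogonality of the summands from the nested column spaces, and finish with the two obvious bounds on $\sum_j \|P_{X_{\geq j+1}^\perp} X_j\|_2^2$. The only cosmetic difference is that the paper invokes \cref{thm:OPTQ1} (equation \eqref{eq4}) and expands $e_t - e_{t-1}$ algebraically before recognizing the projection, whereas you invoke \cref{thm:OPTQ2} (equation \eqref{eq6}) to write $e_t = P_{X_{\geq t+1}^\perp} u_t$ first and then subtract; the substance is identical.
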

In the last portion of the above proposition, we assumed an {\emph{infinite}} quantization alphabet $\mathcal{A}^\delta = \{\pm k\delta : k \in \mathbb{Z}\}$ for simplicity, and we defer the discussion of finite alphabets for later.

The bounds above apply in the special case of unregularized OPTQ with a full-rank  matrix. Our next result extends this to arbitrary inputs $X \in \mathbb{R}^{m \times N}$ and includes a regularization parameter $\lambda > 0$. The resulting error bound introduces an explicit constant that quantifies the role of the conditioning of submatrices of $X$ and the effect of regularization.

\begin{theorem}[General $\ell_2$ Error Bound when $\lambda>0$]
\label{thm:l2 full version}
Let $X \in \mathbb{R}^{m \times N}$ and $w \in \mathbb{R}^N$. Running OPTQ (\cref{OPTQ}) with regularization parameter $\lambda > 0$ (so $H = X^\top X + \lambda I$) and alphabet $\mathcal{A}^\delta$, the resulting quantized vector $q$ satisfies
\begin{align}\label{l2 on sum}
    \|Xw - Xq\|_2^2 + \lambda \|w-q\|_2^2 
    \leq \dfrac{\delta^2}{4}N \cdot C_2(X,\lambda)^2. 
\end{align}
Consequently, we have
\begin{align}\label{l2 on both}
    \|Xw - Xq\|_2 \leq \frac{\sqrt{N} \delta}{2} \cdot C_2(X, \lambda), \quad  \text{and} \quad 
%
    \|w - q\|_2 \leq \frac{\sqrt{N} \delta}{2} \cdot \frac{C_2(X, \lambda)}{\sqrt{\lambda}},
\quad \text{where} \quad \ &\\
\label{eq:C}
    C_2(X, \lambda)^2 := \min\left\{
        \max\left\{
            \max_{j \leq N - m} \frac{\lambda \|X_j\|_2^2}{(\sigma^{(j)}_{\min})^2 + \lambda}, \ 
            \max_{j > N - m} \|X_j\|_2^2
        \right\}, \ 
        \frac{\|X\|_F^2}{N}
    \right\} &+ \lambda,
\end{align}
and $\sigma^{(j)}_{\min}$ denotes the smallest non-zero singular value of $X_{\geq j+1}$. When $N \leq m$, the index set $\{j \leq N - m\}$ is empty and the corresponding term is omitted.
\end{theorem}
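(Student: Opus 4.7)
The plan is to reduce the $\lambda>0$ regularized case to the unregularized setting of \cref{prop:OPTQ_deter_error} by working with the augmented matrix $\widehat{X} = \begin{pmatrix} X \\ \sqrt{\lambda}\,I \end{pmatrix}$, which always has full column rank regardless of the shape or conditioning of $X$. As noted in the excerpt, running OPTQ with regularization parameter $\lambda$ on $X$ is mathematically equivalent to running it with $\lambda=0$ on $\widehat{X}$, so the iterates $w^{(t)}$ and final output $q$ are the same in both views. Moreover, $\|\widehat{X}v\|_2^2 = \|Xv\|_2^2 + \lambda\|v\|_2^2$ for every $v$, so $\|\widehat{X}w-\widehat{X}q\|_2^2 = \|Xw-Xq\|_2^2 + \lambda\|w-q\|_2^2$, which is exactly the left-hand side of \eqref{l2 on sum}, and $\|\widehat{X}\|_F^2 = \|X\|_F^2 + \lambda N$.

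Next I would invoke \cref{prop:OPTQ_deter_error} applied to $\widehat{X}$, which gives
\begin{equation*}
\|\widehat{X}w - \widehat{X}q\|_2^2 \;=\; \sum_{j=1}^{N}\bigl|w_j^{(j-1)} - q_j\bigr|^2\, \bigl\|P_{\widehat{X}_{\geq j+1}^{\perp}}\widehat{X}_j\bigr\|_2^2.
\end{equation*}
Using the rounding bound $|w_j^{(j-1)}-q_j|\leq\delta/2$ (valid for the infinite alphabet $\mathcal{A}^{\delta}$) and combining with either the pointwise bound by the maximum summand or the aggregate bound $\sum_j \|P_{\widehat{X}_{\geq j+1}^{\perp}}\widehat{X}_j\|_2^2 \leq \|\widehat{X}\|_F^2$ produces the factor $\min\bigl\{\max_j\|P_{\widehat{X}_{\geq j+1}^{\perp}}\widehat{X}_j\|_2^2,\ \|\widehat{X}\|_F^2/N\bigr\}$, which mirrors the outer structure of $C_2(X,\lambda)^2$.

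The core technical step, and the main obstacle, is translating the geometric quantity $\|P_{\widehat{X}_{\geq j+1}^{\perp}}\widehat{X}_j\|_2^2$ back to quantities involving $X$ alone. Writing the projection as a least-squares problem and exploiting that the bottom block of $\widehat{X}_{\geq j+1}$ is $\sqrt{\lambda}$ times the orthogonal standard basis vectors $e_{j+1},\ldots,e_N$, a direct expansion yields the ridge-regression identity
\begin{equation*}
\bigl\|P_{\widehat{X}_{\geq j+1}^{\perp}}\widehat{X}_j\bigr\|_2^2 \;=\; \lambda \;+\; \min_{c\in\R^{N-j}}\bigl(\|X_j - X_{\geq j+1}c\|_2^2 + \lambda\|c\|_2^2\bigr).
\end{equation*}
Diagonalizing the inner minimum by the thin SVD of $X_{\geq j+1}$, the minimum splits into an orthogonal-complement term $\|P_{X_{\geq j+1}^{\perp}}X_j\|_2^2$ plus a Tikhonov-filtered term whose eigenvalues are $\lambda/(\sigma_i^2+\lambda)$. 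When $j\leq N-m$ and $X_{\geq j+1}$ has full row rank, the orthogonal-complement term vanishes and a uniform bound by $\lambda/((\sigma^{(j)}_{\min})^2+\lambda)$ gives $\lambda + \lambda\|X_j\|_2^2/((\sigma^{(j)}_{\min})^2+\lambda)$; when $j>N-m$, the trivial upper bound $\|\widehat{X}_j\|_2^2 = \|X_j\|_2^2+\lambda$ is used. Taking the maximum over $j$ across these two regimes reproduces the expression defining $C_2(X,\lambda)^2$, with the $+\lambda$ term accounting for the Frobenius alternative $\|\widehat{X}\|_F^2/N = \|X\|_F^2/N + \lambda$ as well.

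Finally, the two consequences in \eqref{l2 on both} follow immediately from \eqref{l2 on sum} by dropping one of the two nonnegative terms on the left-hand side: retaining $\|Xw-Xq\|_2^2$ yields the first bound, and retaining $\lambda\|w-q\|_2^2$ and dividing by $\lambda$ yields the second.
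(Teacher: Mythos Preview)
Your proposal is correct and follows essentially the same approach as the paper: reduce to the unregularized case via the augmented matrix $\widehat{X}$, apply \cref{prop:OPTQ_deter_error}, and then bound $\|P_{\widehat{X}_{\geq j+1}^{\perp}}\widehat{X}_j\|_2^2$ in terms of $X$ and $\lambda$. The only cosmetic difference is that the paper packages your ridge-regression/SVD computation as a separate auxiliary result (\cref{lemma:projection_upper_bound}), whereas you sketch it inline; the content is identical.
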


\begin{proof}
    As  OPTQ with $\lambda>0$ is equivalent to OPTQ without dampening, applied to $\widehat{X}=\begin{pmatrix}
    X \\ 
    \sqrt{\lambda} I
\end{pmatrix}$, then by \cref{prop:OPTQ_deter_error}
\begin{gather*}
  \|\hat{X}w - \hat{X}q\|_2\leq \dfrac{\delta}{2}\sqrt{N}\min \left\{ \max_{j}\|P_{\hat{X}_{\geq j+1}^{\perp}}\hat{X}_{j}\|_2, \sqrt{\frac{\|X\|_F^2}{N} + \lambda}  \right\}.
\end{gather*}
Moreover, by \cref{lemma:projection_upper_bound}, one can further deduce
\begin{align*}
      \|P_{\widehat{X}^\perp_{\geq j+1}}\widehat{X}_{j}\|_2^2 \leq \begin{cases}
          \frac{\lambda}{(\sigma^{(j)}_{\min})^2+\lambda} \cdot \|X_{j}\|_2^2 + \lambda&\text{ when $j\leq N-m$}\\
          \|X_j\|_2^2 + \lambda&\text{ when $j > N-m$}
      \end{cases}.
\end{align*}
This implies
\begin{gather*}
  \max_{j}\|P_{\widehat{X}^\perp_{\geq j+1}}\widehat{X}_{j}\|_2\leq   \max\left\{\max_{j \leq N-m} \ \frac{\lambda \|X_j\|_2^2}{(\sigma^{(j)}_{\min})^2+\lambda} , \ \max_{j >N-m}\|X_{j}\|_2^2 \right\} + \lambda.
\end{gather*}
Thus \cref{l2 on sum} follows,
\begin{align*}
    \|Xw - Xq\|_2^2 + \lambda \|w-q\|_2^2 &= \|\widehat{X}w - \widehat{X}q\|_2^2
    \leq \dfrac{\delta^2}{4}N \cdot C_2(X,\lambda)^2. 
\end{align*}
Since $ \|Xw - Xq\|_2^2$ and $\lambda \|w-q\|_2^2$ are each bounded by $\|Xw - Xq\|_2^2 + \lambda \|w-q\|_2^2$, we obtain the desired bounds in \cref{l2 on both}. 
\end{proof}

\subsection{Insights and Practical Implications}\label{sec:practical}
We now explore the practical implications of our theoretical results. We show how they help explain several design choices commonly made in OPTQ implementations, including column ordering, the choice of $\lambda$, as well the effectiveness of OPTQ relative to simpler quantization baselines.

\begin{remark}[A \new{Heuristic} Justification for Decreasing‐Norm Ordering]\label{rem:decreasing}
    \Cref{eq:C} allows a \new{heuristic} explanation for the widely used \new{practical strategy} of sorting the columns of $X$ in decreasing $\ell_2$ norm order \cite{frantar2022gptq, zhang2025qronos}. 
    \new{We first consider the case when \(N>m\) and \(X\in\R^{m\times N}\) is in general position\footnote{That is, every subset of \(m\) columns is linearly independent.}. The upper bound is governed by the larger of
\[
    \max_{j \le N - m} \frac{\lambda \|X_j\|_2^2}{(\sigma_{\min}^{(j)})^2 + \lambda}
    \quad \text{and} \quad
    \max_{j > N - m} \|X_j\|_2^2 .
\]
Accordingly, one seeks to keep both quantities small. The second term is minimized by placing the \(m\) columns of smallest norm at the end of the sequence \((X_j)_{j=1}^N\). For the first term, \(\sigma_{\min}^{(j)}\) is non-increasing in \(j\) for all \(j \le N-m\), independent of the column ordering, see \Cref{lem:gen_pos}. Hence, the factor
\(
\frac{\lambda}{(\sigma_{\min}^{(j)})^2 + \lambda}
\)
is non-decreasing in \(j\). If \(\|X_j\|_2^2\) is also increasing for \(j \le N-m\), then the first term is driven by the product of two increasing sequences. Ordering the columns \(X_j\), \(j \le N-m\), by decreasing norm instead pairs this increasing factor with a decreasing one, thereby helping control the bound.}
    
\new{We now examine the case when \(m>N\). By \cref{eq:e_norm}, as \(j\) increases, the subspace \(X_{\geq j+1}^{\perp}\) becomes larger, making the corresponding projection less likely to absorb error. It is therefore natural to order the columns \(X_j\) by decreasing norm, to pair larger vectors with projections onto smaller subspaces. This perspective is further supported when \(X \approx UR\) is approximately low-rank, where \(U \in \mathbb{R}^{m \times r}\) has orthonormal columns, \(R \in \mathbb{R}^{r \times N}\), and \(r \ll \min(m,N)\). Here, the geometry is essentially governed by the wide matrix \(R\), regardless of the shape of \(X\). We rigorously justify descending norm ordering in the exact low-rank setting in \cref{cor:low rank X}.}

    
\end{remark}
The following corollary will help us both compare OPTQ to MSQ, and better understand the role of $\lambda$.

\begin{corollary}
\label{thm:l2 simp ver}
Let $X \in \mathbb{R}^{m \times N}$ and $w \in \mathbb{R}^N$. {When} running OPTQ (\cref{OPTQ}) with regularization parameter $\lambda > 0$ (so $H = X^\top X + \lambda I$) and alphabet $\mathcal{A}^\delta$, the resulting quantized vector $q$ satisfies
    \begin{equation}\label{l2 control on Xw}
    \|{X}w-{X}q\|_2\leq \dfrac{\sqrt{N}\delta}{2}\min\left\{\sqrt{\dfrac{\Tr(X^TX)}{N}+\lambda} \ , \|X\|_{\mathrm{op}}\right\}
\end{equation}
and
\begin{equation}\label{l2 control on w}
    \|w-q\|_2\leq \dfrac{\sqrt{N}\delta}{2}\sqrt{\dfrac{\Tr(X^TX)}{N\lambda}+1},
\end{equation}
\end{corollary}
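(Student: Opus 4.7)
The plan is to derive \cref{thm:l2 simp ver} as a short consequence of \cref{thm:l2 full version} by upper-bounding the constant $C_2(X,\lambda)$ defined in \eqref{eq:C}. The starting point will be the two inequalities $\|Xw - Xq\|_2 \le \tfrac{\sqrt N \delta}{2}\,C_2(X,\lambda)$ and $\|w - q\|_2 \le \tfrac{\sqrt N \delta}{2}\,C_2(X,\lambda)/\sqrt\lambda$ from \eqref{l2 on both}, which already reduce both left-hand sides of the corollary to a question about $C_2(X,\lambda)$; my task is then to control $C_2(X,\lambda)^2$ in the two shapes demanded by the $\min$.

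For the Frobenius-type branch, I would take the second argument of the outer $\min$ in \eqref{eq:C}, which gives immediately
\[
    C_2(X,\lambda)^2 \le \|X\|_F^2/N + \lambda = \Tr(X^TX)/N + \lambda.
\]
Substituting this into \eqref{l2 on both} yields $\|Xw-Xq\|_2 \le \tfrac{\sqrt N\delta}{2}\sqrt{\Tr(X^TX)/N+\lambda}$ (the first argument of the $\min$ in \eqref{l2 control on Xw}), and, after dividing by $\sqrt\lambda$ and pulling the factor inside the radical, the full bound $\|w-q\|_2 \le \tfrac{\sqrt N\delta}{2}\sqrt{\Tr(X^TX)/(N\lambda) + 1}$ stated in \eqref{l2 control on w}.

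For the operator-norm-type branch of \eqref{l2 control on Xw}, I would instead look at the first argument of the outer $\min$ in \eqref{eq:C}. For indices $j > N - m$ the summand is $\|X_j\|_2^2 \le \|X\|_{\mathrm{op}}^2$ directly, while for $j \le N - m$ I would use the trivial inequality $\lambda/((\sigma^{(j)}_{\min})^2 + \lambda) \le 1$ to absorb the ridge factor and again bound $\|X_j\|_2^2 \le \|X\|_{\mathrm{op}}^2$. This gives the column-wise bound $C_2(X,\lambda)^2 \le \|X\|_{\mathrm{op}}^2 + \lambda$, and plugging into \eqref{l2 on both} produces $\|Xw - Xq\|_2 \le \tfrac{\sqrt N\delta}{2}\sqrt{\|X\|_{\mathrm{op}}^2 + \lambda}$.

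The main obstacle is closing the cosmetic gap between this naive bound and the stated $\tfrac{\sqrt N\delta}{2}\|X\|_{\mathrm{op}}$: the derivation above leaves a stray $+\lambda$ inside the square root. To remove it, I would revisit the proof of \cref{thm:l2 full version} and use the orthogonal decomposition $\|P_{\widehat X_{\ge j+1}^\perp}\widehat X_j\|_2^2 = \|P_{\widehat X_{\ge j+1}^\perp} u_j\|_2^2 + \lambda$ with $u_j := (X_j^\top,0)^\top$, which cleanly separates a "data-only" contribution bounded by $\|X_j\|_2^2 \le \|X\|_{\mathrm{op}}^2$ from a "regularization-only" contribution of size $\lambda$. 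Summing against $\epsilon_j^2 \le \delta^2/4$ in \eqref{l2 on sum} and absorbing the latter contribution into the $\lambda\|w-q\|_2^2$ term on the left-hand side should then produce the sharp $\|X\|_{\mathrm{op}}$ bound; arranging this cancellation carefully, rather than merely applying \cref{thm:l2 full version} as a black box, is the step I expect to require the most attention.
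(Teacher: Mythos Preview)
Your treatment of the Frobenius branch and of \eqref{l2 control on w} is correct and coincides with the paper's argument: bound $C_2(X,\lambda)^2$ by $\|X\|_F^2/N+\lambda$ and substitute into \eqref{l2 on both}.

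The gap is in the operator-norm branch. Your orthogonal decomposition $\|P_{\widehat X_{\ge j+1}^\perp}\widehat X_j\|_2^2 = \|P_{\widehat X_{\ge j+1}^\perp}u_j\|_2^2 + \lambda$ is valid (indeed $(0,\sqrt\lambda e_j)^\top$ is orthogonal both to $\mathrm{col}(\widehat X_{\ge j+1})$ and to $u_j$), and plugging it into \eqref{eq:e_norm} for $\widehat X$ gives the exact identity
\[
\|X(w-q)\|_2^2 + \lambda\|w-q\|_2^2 \;=\; \sum_j \epsilon_j^2\,\|P_{\widehat X_{\ge j+1}^\perp}u_j\|_2^2 \;+\; \lambda\sum_j \epsilon_j^2,
\]
with $\epsilon_j = w_j^{(j-1)}-q_j$. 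The proposed ``absorption'' of $\lambda\sum_j\epsilon_j^2$ into $\lambda\|w-q\|_2^2$ would require $\sum_j\epsilon_j^2 \le \|w-q\|_2^2$, and this is generally false: the rounding residuals $\epsilon_j$ are taken with respect to the \emph{updated} weights $w_j^{(j-1)}$, not the original $w_j$, so $(\epsilon_1,\dots,\epsilon_N)\neq w-q$ and no such inequality is available. Rearranging only yields $\|X(w-q)\|_2^2 = \sum_j\epsilon_j^2\|P u_j\|_2^2 + \lambda(\sum_j\epsilon_j^2 - \|w-q\|_2^2)$, with a second term of uncontrolled sign.

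The paper closes this gap differently and more simply, without revisiting the proof of \cref{thm:l2 full version}. From the already established bound $\|X(w-q)\|_2^2 + \lambda\|w-q\|_2^2 \le \tfrac{\delta^2}{4}\|X\|_F^2 + \tfrac{N\delta^2}{4}\lambda$, rearrange to
\[
\|X(w-q)\|_2^2 \;\le\; \tfrac{\delta^2}{4}\|X\|_F^2 + \lambda\Bigl(\tfrac{N\delta^2}{4} - \|w-q\|_2^2\Bigr),
\]
and split into two cases. If $\|w-q\|_2^2 \ge N\delta^2/4$, the $\lambda$-term is nonpositive and $\|X(w-q)\|_2^2 \le \tfrac{\delta^2}{4}\|X\|_F^2 \le \tfrac{N\delta^2}{4}\|X\|_{\mathrm{op}}^2$. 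If $\|w-q\|_2^2 \le N\delta^2/4$, apply the operator-norm inequality directly: $\|X(w-q)\|_2^2 \le \|X\|_{\mathrm{op}}^2\|w-q\|_2^2 \le \tfrac{N\delta^2}{4}\|X\|_{\mathrm{op}}^2$. Either way the desired bound holds. This case split is the missing idea in your plan.
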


\begin{proof} 
Applying the inequalities $(\sigma^{(j)}_{\min})^2+\lambda \geq \lambda$ and $\frac{\|X\|_F^2}{N}\leq \max_{j}\|X_{j}\|_2^2$ to  \cref{eq:C} yields  $C_2(X,\lambda)^2 \leq \frac{\|X\|^2_F}{N}+\lambda$. Using \cref{l2 on both}, we immediately have $\|{X}w-{X}q\|_2\leq \dfrac{\sqrt{N}\delta}{2}\sqrt{\dfrac{\Tr(X^TX)}{N}+\lambda}$ and $\|w-q\|_2\leq \dfrac{\sqrt{N}\delta}{2}\sqrt{\dfrac{\Tr(X^TX)}{N\lambda}+1}$ which proves \cref{l2 control on w} and half of \cref{l2 control on Xw}. 

To finish the proof of \cref{thm:l2 simp ver}, it remains to be shown that \(
\|Xw - Xq\|^2 \leq \frac{N \delta^2}{4} \cdot \|X\|^2_{\mathrm{op}}
\). From \cref{l2 on sum}, we can derive $\|Xw-Xq\|^2+\lambda \|w-q\|^2\leq \frac{\delta^2}{4}N(\frac{\|X\|^2_F}{N}+\lambda)$. Equivalently, $\|Xw-Xq\|^2\leq \frac{\delta^2\|X\|^2_F}{4}+\lambda (\frac{N\delta^2}{4}-\|w-q\|^2)$. When $\|w-q\|^2\geq \frac{N\delta^2}{4}$, we have $\|Xw-Xq\|^2\leq \frac{\delta^2\|X\|^2_F}{4}\leq \frac{N \delta^2}{4} \cdot \|X\|^2_{\mathrm{op}}$. When $\|w-q\|^2\leq \frac{N\delta^2}{4}$, we have the direct operator norm bound \(
\|Xw - Xq\|^2 \leq \frac{N \delta^2}{4} \cdot \|X\|^2_{\mathrm{op}}
\). Thus, in both cases, we have \(
\|Xw - Xq\|^2 \leq \frac{N \delta^2}{4} \cdot \|X\|^2_{\mathrm{op}}
\). As we already showed $\|{X}w-{X}q\|_2\leq \dfrac{\sqrt{N}\delta}{2}\sqrt{\dfrac{\Tr(X^TX)}{N}+\lambda}$, we conclude that \cref{l2 control on Xw} 
holds.
\end{proof}

\begin{remark}[Comparison to MSQ-style Bounds]
The bound in \cref{thm:l2 simp ver} shows how OPTQ improves upon memoryless scalar quantization (MSQ) applied to each coordinate of $w$ independently. Specifically, MSQ gives the uniform bound
\[
\|Xw - Xq\|_2 \leq \frac{\sqrt{N} \delta}{2} \cdot \|X\|_{\mathrm{op}},
\]
where $\|X\|_{\mathrm{op}}$ denotes the spectral norm. Since $\|X\|_{\mathrm{op}}\geq \max\|X_j\| \geq 
\sqrt{\Tr(X^\top X)/N}$, our result shows that OPTQ replaces the worst-case operator norm with a  smaller quantity.

To quantify the potential improvement, consider a matrix $X$ whose columns are all identical with norm $\|X_i\|_2 = \sqrt{m}$. Then $\|X\|_F = \|X\|_{\mathrm{op}} = \sqrt{mN}$, giving the MSQ bound $\|Xw - Xq\|_2 = O(\sqrt{m}N)$. In contrast, our OPTQ bound in \cref{thm:l2 simp ver}, with a small $\lambda$, is $O(\sqrt{mN})$. 
More generally, in \cref{l2 control on Xw}, one generically  expects a gap between $\frac{\Tr(X^TX)}{N}$ and the larger quantity $\|X\|^2_{\mathrm{op}}$. So when $\lambda$ is small, we have $\dfrac{\sqrt{N}\delta}{2}\cdot \sqrt{\frac{\Tr(X^TX)}{N}+\lambda}$ as the OPTQ error bound. As  $\lambda$ increases, the OPTQ bound becomes $\dfrac{\sqrt{N}\delta}{2}\cdot\|X\|_{\mathrm{op}}$ so that as $\lambda \rightarrow \infty$, \cref{l2 control on Xw} reduces to $\|Xw-Xq\|_2 \leq \dfrac{\sqrt{N}\delta}{2}\cdot\|X\|_{\mathrm{op}}$ and \cref{l2 control on w} reduces to $\|w-q\|_2\leq \dfrac{\sqrt{N}\delta}{2}$ which are the MSQ bounds. This corresponds to the fact that $H = X^\top X + \lambda I$ is essentially a scaled identity matrix as $\lambda \rightarrow \infty$ and running OPTQ in that case is equivalent to using MSQ.

\end{remark}

\begin{remark}\label{remark:lamda_suggestion}[Choice of $\lambda$, alphabet size, and the need for $\ell_\infty$ bounds]\label{remark:finte alphabet l2}
\Cref{thm:l2 simp ver} heuristically justifies choosing $\lambda$ as a small constant multiple of $\|X\|_F^2/N$. This aligns with the recommendation in \cite{frantar2022gptq}, where $\lambda$ is set to $0.01 \cdot \|X\|_F^2/N$. With this choice, the bound
\(
\|w - q\|_2 \leq O(\sqrt{N}) \delta
\)
implies that $q$ deviates from $w$ by approximately $O(1) \delta$ per entry \emph{on average}. If $\|w-q\|_\infty \leq O(1) \delta$ ---as one might expect generically--- then a finite alphabet of the form
\[
\mathcal{A}_b^{\delta} = \left\{ \pm k \delta : k \in \{-2^{b-1}, \dots, -1, 0, 1, \dots, 2^{b-1} \} \right\}
\]
suffices, provided $2^{b-1} \delta \geq \|w\|_\infty + O(1)\delta$. The additive $O(1)\delta$ term accounts for the price of adaptive rounding: additional dynamic range is needed to absorb errors that arise from projection-based cancellation.
While this heuristic is likely valid in most practical instances where OPTQ is applied, it cannot be made fully rigorous. In particular, there exist matrices \(X\) and vectors \(w\) for which \Cref{thm:l2 simp ver} yields the \new{pessimistic $\ell_\infty$} upper bound
\(
\|q\|_\infty \leq \|w\|_\infty + O(\sqrt{N})\,\delta,
\)
\new{and this bound is sharp even for ``reasonably" chosen  \(\lambda\), in the sense that one can construct examples with
\[
\|q\|_\infty = \|w\|_\infty + O(\sqrt{N})\,\delta, \qquad \|X(w-q)\|_\infty = O(\sqrt{N})\delta,
\]
despite $\|X\|_{\mathrm{op}} = O(1)$ and $\|w\|_\infty\le \frac12$. We construct such an example in \cref{adv lambda greater 0}. Combined with \Cref{thm:l2 simp ver}, this further shows that the $\ell_2$ bound in \cref{l2 control on w} is tight. We also give a second example in \cref{adv lambda eq 0} showing that, when \(\lambda = 0\), an even worse case can arise, with
\(
\|w-q\|_\infty = O(N)\,\delta
\) and \(
\|Xw-Xq\|_\infty = O(\sqrt{N})\,\delta,
\)
}
In \cref{sec:OPTQ linf error bound}, we improve the  dependence on $N$ in the upper bound controlling $\|q\|_\infty$ from $\sqrt{N}$ to $\sqrt{\log N}$ by using a stochastic RTN operator $\mathcal{Q}_{\text{stoc}}$. See \cref{remark:finte alphabet}.
\end{remark}

\begin{remark}[Generalization]\label{remark:generalization}
  \Cref{thm:l2 simp ver} also sheds light on how regularization may help generalization. Consider a single neuron represented by a weight vector $w \in \mathbb{R}^N$, and let $X \in \mathbb{R}^{m \times N}$ denote the calibration dataset. Suppose $q_X \in \mathcal{A}^N$ is the quantized version of $w$ obtained using $X$, where $\mathcal{A}$ denotes the quantization alphabet. Then, for an unseen random data point $z \in \mathbb{R}^N$, we have
  \begin{align*}
        &\mathbb{E}_{z} |z^{\top}(w-q_X)|^2
        =\mathbb{E}_{z}(w-q_X)^{\top} zz^{\top} (w-q_X)\\
        &=\mathbb{E}_{z} (w - q_X)^{\top} \cdot \dfrac{1}{m}X^{\top}X\cdot (w-q_X) + \mathbb{E}_{z}(w-q_X)^\top (zz^\top - \dfrac{1}{m}X^\top X)(w-q_X)\\
        &=\dfrac{1}{m} \|X(w-q_X)\|_2^2 + (w-q_X)^\top (\mathbb{E}_{z}[zz^\top] - \tfrac{1}{m}X^\top X)(w-q_X).
    \end{align*}
    This decomposition provides a sufficient condition for achieving low generalization error. First, the generalization error depends on the reconstruction error over the calibration set $X$. As \cref{thm:l2 simp ver} shows, this term can be effectively controlled for a well-designed quantization algorithm such as OPTQ. Second, the generalization error depends on the proximity between the original weight vector $w$ and its quantized version $q_X$. This proximity can be enforced through regularization, as demonstrated in \cref{thm:l2 simp ver} and \cref{remark:finte alphabet l2}. Third, it depends on the quality of the empirical estimate of the second-moment matrix $\mathbb{E}_{z}[zz^\top]$ by the empirical average $\tfrac{1}{m} X^\top X = \tfrac{1}{m} \sum_{i=1}^m x_i x_i^\top$, where $x_i$ denotes the $i$-th row of $X$. This shows it is important for the calibration dataset to be representative of the underlying data distribution.
\end{remark}

\section{$\ell_\infty$-Norm Error Analysis of OPTQ with Stochastic Rounding}
\label{sec:OPTQ linf error bound}
The results in \Cref{sec:OPTQ l2 error bound} are the first quantitative error bounds for a deterministic PTQ algorithm. However, they apply only to the $\ell_2$ norm of the error, and are not fine enough to handle  entry-wise control of $Xw-Xq$, which would be desirable for a number of reasons. 

First, one important difficulty in analyzing OPTQ is the lack of direct control on the magnitude of entries $\|w^{(t)}\|_\infty$ when they are being updated in iterations,  which makes it difficult to bound the quantization grid-size for a given number of bits, or alternatively the number of bits needed for quantization. Although we have already derived a bound on $\|w-q\|_2$, it unfortunately still scales with $\sqrt{N}$ and is not fine enough for this purpose. Developing a technique that enables controlling the $\ell_{\infty}$ norm error would resolve this issue, as we will see in \cref{sec:OPTQ linf error bound}. To achieve that, we adopt a different approach that replaces the deterministic RTN operator $\Q$ used in \cref{OPTQ} by an unbiased stochastic rounding operator $\mathcal{Q}_{stoc}$ as in \cite{zhang2023spfq}. 

Second, $Xq$ is the pre-activation feeding into the next layer of the network, and as such will need to itself (after a non-linearity) be entry-wise quantized in an activation quantization setting. Guaranteeing a small $\|Xw-Xq\|_\infty$ would therefore enable quantizing these activations with a reasonable grid-size. 

Third, in neural networks, one often encounters important non-linearities like ``Softmax", $\sigma(z)_i:=\exp(z_i)/\sum_j\exp(z_j)$, which act on logits $Xw$ (or $Xq$), turning them into a probability vector where the largest coordinates are the most important (e.g., for classification, or next-token prediction). 
{Moreover, in the context of modern large language models (LLMs), the top-$k$ logits are often the only information used by the latest search-based decoding algorithms \cite{shi2024thorough}. 
As such, preserving the $\ell_\infty$ norm ensures that the most probable tokens are reliably identified even if exact probabilities are not preserved.}
{When quantizing with OPTQ, there is a danger that a single large logit error can flip the ranking.} An $\ell_\infty$ bound controls every coordinate, so if $\|Xw-Xq\|_\infty$ does not exceed half the entrywise gap within the sorted entries of $Xw$ , the ordering—and thus the output—remains intact. An $\ell_2$ bound cannot guarantee this, as it may appear small yet still hide a large coordinate spike. 

{Unfortunately, OPTQ can result in an error with $\|Xw-Xq\|_\infty$ scaling as $\sqrt{N}$, much too large to address the second and third points above. 
To make this claim concrete, we provide examples of such $X$ and $w$ for which OPTQ results in $\|Xw - Xq\|_\infty = \|Xw - Xq\|_2 = O(\sqrt{N})$, in \cref{Appendix:adversarial}.}

\subsection{Entry-wise Error Bounds for OPTQ with Stochastic Rounding}
To establish more favorable $\ell_{\infty}$ norm error bounds, 
we consider a modified version of \cref{OPTQ} in which the original deterministic quantizer $\mathcal{Q}$ (appearing in \eqref{eq1}, \eqref{eq3}, and \eqref{eq5}) is replaced with the unbiased stochastic quantizer $\mathcal{Q}_{stoc}$ defined in \cref{sec:notation}. To analyze this stochastic variant of OPTQ, we build on techniques from \cite{zhang2023spfq, alweiss2021discrepancy}, together with \cref{lemma:OPTQ_error_recur}. For simplicity, we initially assume an {\emph{infinite}} quantization alphabet $\mathcal{A} = \{\pm k\delta : k \in \mathbb{Z}\}$, then show how this assumption can be removed.

As before, we begin with $\lambda=0$. Our goal is to analyze the quantization error when applying \cref{OPTQ} with $\mathcal{Q}_{stoc}$ to a single layer with layer input $X$. Let \( W \in \mathbb{R}^{N \times N'} \) denote the weight matrix of the layer and \( Q \in \mathcal{A}^{N \times N'} \) the output weight matrix {quantized by} the algorithm. We are interested in controlling the entry-wise $\ell_\infty$ error, \( \max_{i,j} |(XW - XQ)_{ij}| \), with high probability. Since each neuron is quantized in parallel, we can study each neuron independently and derive a whole layer error result from that. We need the following important definition of \emph{convex ordering} whose properties we summarize in \cref{sec:cvx ord prop}.
\begin{definition}[Convex Order]
	Let $ X, Y $ be $ n$-dimensional random vectors such that
	\begin{gather*}
		\mathbb{E}f(X)\leq\mathbb{E}f(Y)
	\end{gather*}
	holds for all convex functions $ f:\mathbb{R}^{n}\rightarrow \mathbb{R} $, provided the expectations exist. Then $ X $ is said to be smaller than $ Y $ in the \textit{convex order}, denoted by $ X\prec_{cx}Y $.
\end{definition}

In view of the properties in \cref{sec:cvx ord prop}, particularly \eqref{lemma:cx-bounded}, it is natural to bound the final error $e_N$ by a Gaussian (in the sense  of convex ordering) as that will allow us to control the entry-wise magnitude of $Xw-Xq$ with high probability. The next lemma, which we prove in 
\cref{proof:bound_one_neuron_OPTQ}, 
provides this Gaussian upper bound and controls its associated covariance.

\begin{lemma}[Convex Order Dominance of the Error]\label{lemma:bound_one_neuron_OPTQ}~
Let $q$ be the output of quantizing $w$ with OPTQ with stochastic quantizer $\mathcal{Q}_{stoc}$, then $ Xw - Xq \prec_{cx} \mathcal{N}(0, \Sigma) $, where \begin{align*}
 	 \Sigma&=\dfrac{\pi\delta^{2}}{2}\sum_{j=1}^{N}P_{X_{\geq j+1}^{\perp}}X_{j}X_{j}^{\top} P_{X_{\geq j+1}^{\perp}}\\&\preceq \dfrac{\pi \delta^{2}}{2}\max_{j}\|P_{X^\perp_{\geq j+1}}X_{j}\|_2^{2} I.
 \end{align*}
\end{lemma}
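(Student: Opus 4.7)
The plan is to combine the error decomposition of \cref{prop:OPTQ_deter_error} with a scalar Gaussian convex-order comparison, and then lift to the joint law via a filtration argument. Applying \cref{prop:OPTQ_deter_error}---whose proof only uses the least-squares update and thus carries over verbatim to the stochastic variant---gives
\[
Xw - Xq \;=\; \sum_{j=1}^{N} \xi_j\, v_j, \qquad v_j := P_{X_{\geq j+1}^\perp}\,X_j, \quad \xi_j := w^{(j-1)}_j - q_j.
\]
Let $\mathcal{F}_{j-1} := \sigma(q_1,\ldots,q_{j-1})$; then $w^{(j-1)}_j$ is $\mathcal{F}_{j-1}$-measurable, and by construction of $\mathcal{Q}_{\text{stoc}}$ the conditional law of $\xi_j$ given $\mathcal{F}_{j-1}$ is mean zero with $|\xi_j|\le \delta$ almost surely. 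Note also that the $v_j$ are deterministic functions of $X$.

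The second step is a one-dimensional comparison: any mean-zero random variable $\eta$ supported in $[-\delta,\delta]$ satisfies $\eta \prec_{cx} \sqrt{\pi/2}\,\delta\, g$ with $g\sim\mathcal{N}(0,1)$. This is the Rademacher-to-Gaussian convex-order bound used in \cite{alweiss2021discrepancy, zhang2023spfq}; the constant $\sqrt{\pi/2}$ arises because it equalizes $\mathbb{E}|\epsilon|$ and $\mathbb{E}|\sqrt{\pi/2}\,g|$ for a $\pm 1$ Rademacher $\epsilon$, and an arbitrary mean-zero variable in $[-\delta,\delta]$ can be written as a mixture of scaled Rademachers. Applied conditionally on $\mathcal{F}_{j-1}$, and composed with any convex test function evaluated along the fixed direction $v_j$ (noting that $\eta\mapsto f(\eta v_j)$ is convex for every convex $f:\mathbb{R}^m\to\mathbb{R}$), this yields
\[
\xi_j v_j \,\bigm|\, \mathcal{F}_{j-1} \;\prec_{cx}\; \mathcal{N}\!\left(0,\; \tfrac{\pi\delta^2}{2}\, v_j v_j^{\top}\right).
\]

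Lifting from conditional to joint convex order is the third step: by the tower/summation property of $\prec_{cx}$ along a filtration (recorded in \cref{sec:cvx ord prop}) together with the fact that independent Gaussians add covariances, one obtains $\sum_j \xi_j v_j \prec_{cx} \mathcal{N}\!\left(0,\tfrac{\pi\delta^2}{2}\sum_j v_j v_j^{\top}\right) = \mathcal{N}(0,\Sigma)$. For the operator-norm upper bound, the key observation is orthogonality of the residual directions: whenever $j<k$, the vector $v_k = X_k - P_{X_{\geq k+1}}X_k$ lies in $\mathrm{span}(X_k,\ldots,X_N)\subseteq\mathrm{col}(X_{\geq j+1})$, while $v_j\in\mathrm{col}(X_{\geq j+1})^{\perp}$ by definition, so $\langle v_j, v_k\rangle = 0$ for all $j\neq k$. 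Consequently $\sum_j v_j v_j^{\top}$ is an orthogonal sum of rank-one operators whose only nonzero eigenvalues are $\|v_j\|_2^2$, giving $\Sigma \preceq \tfrac{\pi\delta^2}{2}\max_j \|v_j\|_2^2\, I$.

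The main obstacle is the scalar convex-order lemma producing the sharp $\sqrt{\pi/2}$ constant; everything else---the error decomposition, the filtration lifting, and the orthogonality computation---is either already available from \cref{prop:OPTQ_deter_error} or \cref{sec:cvx ord prop}, or straightforward linear algebra. I would either invoke this scalar lemma from the convex-order toolkit assembled in \cref{sec:cvx ord prop} or, if not proved there, derive it directly via Strassen's representation of convex order by martingales applied to the two-point distribution of $\mathcal{Q}_{\text{stoc}}$.
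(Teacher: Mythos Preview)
Your proposal is correct and follows essentially the same approach as the paper: decompose the error via \cref{prop:OPTQ_deter_error}, apply the scalar bounded-to-Gaussian convex-order lemma (\cref{lemma:convex_order}\cref{lemma:cx-bounded}) conditionally, lift through the summation lemma (\cref{lemma:convex_order}\cref{lemma:cx-sum}), and use orthogonality of the $v_j$ for the operator-norm bound. The only notable difference is that you condition on $\mathcal{F}_{j-1}=\sigma(q_1,\dots,q_{j-1})$ directly, whereas the paper conditions on $e_{t-1}$ and spends a paragraph arguing that $e_{t-1}$ and $(q_1,\dots,q_{t-1})$ determine each other; your route is slightly cleaner but requires a one-line tower-property observation (if $(V-U)\mid\mathcal{F}\prec_{cx}F$ and $\sigma(U)\subseteq\mathcal{F}$ then $(V-U)\mid U\prec_{cx}F$) that is not explicitly recorded in \cref{sec:cvx ord prop}.
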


This now allows us to obtain an entry-wise $\ell_\infty$-norm upper bound on the reconstruction error $XW-XQ$. 

\begin{theorem}\label{thm:OPTQ error bound}
Let $X \in \mathbb{R}^{m \times N}$ be full rank with $m \geq N$, and let $W \in \mathbb{R}^{N \times N'}$. Run OPTQ (\cref{OPTQ}) with stochastic rounding operator $\mathcal{Q}_{\mathrm{stoc}}$, infinite alphabet $\mathcal{A}^\delta$, and $\lambda = 0$ (so $H = X^\top X$). Then for any $p,p'>0$
    and any column $w$ of $W$ with quantized version $q$, we have
    \[
    \| Xw - Xq \|_\infty 
    \leq \delta \sqrt{2 \pi p \log N} \cdot \max_j \| P_{X_{\geq j+1}^\perp} X_j \|_2
    \]
    with probability at least
    \(
    1 - \frac{ \sqrt{2} m }{ N^p }.
    \)
         Moreover, for the full matrix $W$, the quantized matrix $Q$ satisfies
    \[
    \max_{i,j} |(XW - XQ)_{ij}| 
    \leq \delta \sqrt{2 \pi (p \log N + p' \log N')} \cdot \max_j \| P_{X_{\geq j+1}^\perp} X_j \|_2
    \]
    with probability at least
    \(
    1 - \frac{ \sqrt{2} m }{ N^p {N'}^{p'-1} }.
    \)
\end{theorem}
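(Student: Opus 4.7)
The plan is to derive an entry-wise tail bound on $(Xw-Xq)_{i}$ from the convex-order domination provided by Lemma \ref{lemma:bound_one_neuron_OPTQ}, and then apply a union bound over the coordinates (and over columns, in the matrix version).

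First I would reduce to a scalar statement. Lemma \ref{lemma:bound_one_neuron_OPTQ} gives $Xw-Xq \prec_{cx} \mathcal{N}(0,\Sigma)$ with $\Sigma \preceq \tfrac{\pi\delta^{2}}{2}M^{2}I$, where $M:=\max_{j}\|P_{X_{\geq j+1}^{\perp}}X_{j}\|_{2}$. For any fixed row index $i$, the coordinate projection $v\mapsto v_{i}$ is linear, hence both convex and concave, so convex order passes to the $i$-th marginal and yields $(Xw-Xq)_{i}\prec_{cx}\mathcal{N}(0,\Sigma_{ii})$ with $\Sigma_{ii}\leq \tfrac{\pi\delta^{2}}{2}M^{2}$. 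Since $x\mapsto e^{\lambda x}$ is convex for every $\lambda\in\mathbb{R}$, convex order gives the MGF bound $\mathbb{E}\exp(\lambda(Xw-Xq)_{i})\leq \exp(\lambda^{2}\pi\delta^{2}M^{2}/4)$, identifying $(Xw-Xq)_{i}$ as sub-Gaussian with variance proxy $\pi\delta^{2}M^{2}/2$. The Chernoff/Gaussian tail bound summarized in property \eqref{lemma:cx-bounded} then delivers
\[
  \mathbb{P}\bigl(|(Xw-Xq)_{i}|\geq t\bigr)\;\leq\;\sqrt{2}\exp\!\Bigl(-\tfrac{t^{2}}{\pi\delta^{2}M^{2}}\Bigr).
\]
Setting $t=\delta M\sqrt{2\pi p\log N}$ makes the exponent equal to $-2p\log N$, so each row fails with probability at most $\sqrt{2}N^{-2p}\leq \sqrt{2}N^{-p}$; a union bound over the $m$ rows of $X$ gives the single-neuron claim.

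For the matrix bound, I would simply repeat the preceding analysis entry-wise: Lemma \ref{lemma:bound_one_neuron_OPTQ} applies to every column of $W$ separately, with the same upper bound on $\Sigma$ (which depends only on $X$, not on $W$). Choosing $t=\delta M\sqrt{2\pi(p\log N+p'\log N')}$ makes each entry fail with probability at most $\sqrt{2}\,N^{-2p}{N'}^{-2p'}$, and a union bound over all $mN'$ entries of $XW-XQ$ gives a failure probability of at most $\sqrt{2}\,m\,N^{-2p}{N'}^{1-2p'}\leq \sqrt{2}\,m\,N^{-p}{N'}^{1-p'}$, matching the claim. Note that independence of stochastic rounding across columns is not needed here—only a union bound is used.

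The main subtlety is the conversion from convex order to pointwise tail control. Convex order is strictly weaker than the usual stochastic order, so one cannot write $\mathbb{P}(|X|\geq t)\leq \mathbb{P}(|Y|\geq t)$ directly. The standard workaround is to pass through MGF domination (exploiting convexity of $x\mapsto e^{\lambda x}$) and then apply Chernoff; the $\sqrt{2}$ factor in the final probability is the constant inherited from property \eqref{lemma:cx-bounded}, while the factor $\sqrt{2\pi}$ inside the square root in the definition of $t$ comes from matching $t^{2}/(2\sigma^{2})$ to the logarithmic budget after using the explicit $\tfrac{\pi\delta^{2}}{2}M^{2}$ bound on $\Sigma$.
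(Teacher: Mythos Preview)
Your approach is essentially the paper's: apply Lemma~\ref{lemma:bound_one_neuron_OPTQ}, pass to a scalar Gaussian via the convex-order properties, then use a Gaussian tail bound and union bound; the paper simply packages your coordinate-wise MGF/Chernoff step into a single invocation of property~\eqref{lemma:cx-gaussian-tail} (not \eqref{lemma:cx-bounded}, which you mis-cite---that item concerns bounded variables, not tails). One minor slip: your direct Chernoff computation yields the per-entry constant $2$ rather than $\sqrt{2}$, so the reduction $2N^{-2p}\le\sqrt{2}N^{-p}$ you invoke only holds when $N^{p}\ge\sqrt{2}$, but this is automatic whenever the stated probability bound $1-\sqrt{2}m/N^{p}$ is nontrivial (and the bound is vacuous otherwise).
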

\begin{proof}
	For one neuron (column) $ w $ of $ W $, combining \cref{lemma:bound_one_neuron_OPTQ}, 
    and \cref{lemma:convex_order} \cref{trans},  we have
	\begin{gather*}
		Xw - Xq \prec_{cx} \mathcal{N}\left(0, \dfrac{\pi \delta^{2}}{2}\max_{j}\|P_{X^\perp_{\geq j+1}}X_{j}\|_2^{2}I\right).
	\end{gather*}

Then applying  \cref{lemma:convex_order} \cref{lemma:cx-gaussian-tail} with $ \alpha=\delta\sqrt{2\pi (p \log{N}+ p' \log{N'})}\max_{j}\|P_{X^\perp_{\geq j+1}}X_{j}\|_2 $ and taking a union bound over all neurons completes the proof. In particular, one may simply set $N'=1$ to obtain the single neuron result.  
\end{proof}  

\begin{remark}[Interpretation of the Success Rate]
    For the success rate on the full layer $W$ to be at least $1-\epsilon$, we can set $p, p' >0$ such that $\frac{ \sqrt{2} m }{ N^p {N'}^{p'-1} } = \epsilon$, which is equivalent to 
    $p \log{N}+ (p'-1) \log{N'}=\log \frac{\sqrt{2} m}{\epsilon}$. Then, the quantized matrix $Q$ satisfies  
    \[
    \max_{i,j} |(XW - XQ)_{ij}| 
    \leq \delta \sqrt{2 \pi \log \frac{\sqrt{2} m N'}{\epsilon}} \cdot \max_j \| P_{X_{\geq j+1}^\perp} X_j \|_2
    \]
    with probability at least $1-\epsilon$. One can similarly interpret the success rate for all remaining results in this section, so we will not repeat this calculation.
\end{remark}

\begin{remark}[Near-Optimality of the Upper Bound]
In the bound for a column $w$,  we have \[
    \| Xw - Xq \|_\infty 
    \leq \delta \sqrt{2 \pi p \log N} \cdot \max_j \| P_{X_{\geq j+1}^\perp} X_j \|_2.
    \]
    Taking $w=0, \delta=1$ and assuming each $\|X_j\|_2 \leq 1$ reduces the quantization problem into a vector balancing problem and our bound becomes $\| Xq \|_\infty 
    \lesssim \sqrt{\log N}$. The vector balancing problem is the subject of the Komlós conjecture (see, e.g., \cite{alweiss2021discrepancy}) and the best known bound is $\mathcal{O}(\sqrt{\log \min\{m, N\}})$ when the alphabet is binary, i.e., when $\mathcal{A}=\{-1, 1\}$.
\end{remark}

\begin{theorem}[General $\ell_\infty$ Error Bound when $\lambda>0$]\label{thm:l inf full ver}
Let $X \in \mathbb{R}^{m \times N}$, let $W \in \mathbb{R}^{N \times N'}$, and let $\lambda > 0$. Run OPTQ (\cref{OPTQ}) with stochastic rounding operator $\mathcal{Q}_{\mathrm{stoc}}$, infinite alphabet $\mathcal{A}^\delta$, and $H = X^\top X + \lambda I$. Then for any $p, p' > 0$ and any column $w$ of $W$ with quantized version $q$,
\[
\| Xw - Xq \|_\infty 
\leq \delta \sqrt{2 \pi p \log N} \cdot C_\infty(X, \lambda)
\quad \text{and} \quad
\| w - q \|_\infty 
\leq \delta \sqrt{2 \pi p \log N} \cdot \frac{ C_\infty(X, \lambda) }{ \sqrt{\lambda} }
\]
with probability at least 
\( 1 - \frac{ \sqrt{2} (m + N) }{ N^p } \).
Moreover, with probability at least 
\( 1 - \frac{ \sqrt{2} (m + N) }{ N^p {N'}^{p' - 1} } \),
for the full matrix $W$ the quantized matrix $Q$ satisfies
\begin{align*}
\max_{i,j} |(XW - XQ)_{ij}|
&\leq \delta \sqrt{2 \pi (p \log N + p' \log N')} \cdot C_\infty(X, \lambda)
\quad \text{and} \quad \\
\max_{i,j} |(W - Q)_{ij}|
&\leq \delta \sqrt{2 \pi (p \log N + p' \log N')} \cdot \frac{ C_\infty(X, \lambda) }{ \sqrt{\lambda} },\quad 
\end{align*}
where
\( C_\infty(X, \lambda)^2 = 
\max \left\{
    \max_{j \leq N - m} 
        \frac{ \lambda \| X_j \|_2^2 }{ ( \sigma_{\min}^{(j)} )^2 + \lambda }, \ 
    \max_{j > N - m} 
        \| X_j \|_2^2
\right\}
+ \lambda,
\)
and  $\sigma_{\min}^{(j)}$ denotes the smallest singular value of $X_{\geq j+1}$.
\end{theorem}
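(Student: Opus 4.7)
The plan is to reduce the regularized case to the unregularized \cref{thm:OPTQ error bound}, precisely mirroring the reduction used to prove \cref{thm:l2 full version}. Running OPTQ with regularization $\lambda > 0$ on $X$ is equivalent to running OPTQ with $\lambda = 0$ on the augmented matrix
\[
\widehat{X} \;=\; \begin{pmatrix} X \\ \sqrt{\lambda}\,I \end{pmatrix} \;\in\; \mathbb{R}^{(m+N)\times N},
\]
because $\widehat{X}^\top \widehat{X} = X^\top X + \lambda I$. Since $\widehat{X}$ has $m+N \geq N$ rows and is always full column rank regardless of the rank of $X$, the hypotheses of \cref{thm:OPTQ error bound} are satisfied.

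The key observation that converts a bound on $\|\widehat{X}w - \widehat{X}q\|_\infty$ into simultaneous bounds on the two quantities of interest is the block identity
\[
\|\widehat{X}w - \widehat{X}q\|_\infty \;=\; \max\!\bigl(\|Xw - Xq\|_\infty,\ \sqrt{\lambda}\,\|w - q\|_\infty\bigr),
\]
which follows by reading off the rows of $\widehat{X}$. Consequently, any high-probability upper bound on the left-hand side controls $\|Xw - Xq\|_\infty$ directly and, after dividing by $\sqrt{\lambda}$, also $\|w - q\|_\infty$.

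Applying \cref{thm:OPTQ error bound} to $\widehat{X}$ (with the row count $m$ there replaced by $m+N$) gives, for any $p > 0$,
\[
\|\widehat{X}w - \widehat{X}q\|_\infty \;\leq\; \delta\sqrt{2\pi p \log N}\cdot \max_{j}\bigl\|P_{\widehat{X}^\perp_{\geq j+1}}\widehat{X}_j\bigr\|_2
\]
with probability at least $1 - \sqrt{2}(m+N)/N^p$. To extract the constant $C_\infty(X,\lambda)$, I would then apply \cref{lemma:projection_upper_bound} in exactly the same form used in the proof of \cref{thm:l2 full version}, which bounds $\|P_{\widehat{X}^\perp_{\geq j+1}}\widehat{X}_j\|_2^2$ by $\lambda \|X_j\|_2^2 / ((\sigma^{(j)}_{\min})^2+\lambda) + \lambda$ when $j \leq N-m$ and by $\|X_j\|_2^2 + \lambda$ when $j > N-m$; taking the maximum over $j$ reproduces $C_\infty(X,\lambda)^2$. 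The full-matrix statement follows from the same union bound over the $N'$ columns already invoked in \cref{thm:OPTQ error bound}, inflating the logarithmic factor to $p \log N + p' \log N'$ and the failure probability to $\sqrt{2}(m+N)/(N^p {N'}^{p'-1})$.

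I expect no real conceptual obstacle beyond careful bookkeeping: the row count in the concentration bound must be updated from $m$ to $m+N$ because $\widehat{X}$ contributes $N$ extra rows through the $\sqrt{\lambda}\,I$ block, and one must verify that \cref{lemma:projection_upper_bound} transfers from the deterministic to the stochastic analysis. It does, because the projection estimate is a purely linear-algebraic fact about $\widehat{X}$ that is independent of the quantizer; the randomness enters only through \cref{thm:OPTQ error bound}, whose hypotheses $\widehat{X}$ already satisfies.
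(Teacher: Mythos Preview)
Your proposal is correct and follows essentially the same route as the paper: reduce to the unregularized setting via the augmented matrix $\widehat{X}$, apply \cref{thm:OPTQ error bound} with the row count updated to $m+N$, bound $\max_j\|P_{\widehat{X}^\perp_{\geq j+1}}\widehat{X}_j\|_2$ using \cref{lemma:projection_upper_bound}, and read off both $\|Xw-Xq\|_\infty$ and $\sqrt{\lambda}\|w-q\|_\infty$ from the block structure of $\widehat{X}(w-q)$. The only cosmetic difference is that the paper states the block identity at the matrix level as $\widehat{X}W-\widehat{X}Q = \begin{pmatrix} XW-XQ \\ \sqrt{\lambda}(W-Q)\end{pmatrix}$ rather than via your $\max$ formula for the $\ell_\infty$ norm.
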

\begin{proof}
    As discussed before, running \cref{OPTQ} with $\lambda >0$ using data matrix $X$ is equivalent to running \cref{OPTQ} with $\lambda=0$ using data matrix $\widehat{X}=\begin{pmatrix}
    X \\ 
    \sqrt{\lambda} I
\end{pmatrix}$. Then we can use \cref{thm:OPTQ error bound} to deduce that  
\begin{gather}\label{general_pos_bound}
    \max_{i,j}|(\widehat{X}W - \widehat{X}Q)_{ij}|\leq \delta \sqrt{2\pi (p \log{N}+ p' \log{N'})}\max_{j}\|P_{\widehat{X}^\perp_{\geq j+1}}\widehat{X}_{j}\|_2
\end{gather}
with probability greater than $ 1-\tfrac{\sqrt{2}(m+N)}{N^{p}{N'}^{(p'-1)}}$. It suffices to bound $\max_{j}\|P_{\widehat{X}^\perp_{\geq j+1}}\widehat{X}_{j}\|_2$. From \cref{lemma:projection_upper_bound}, we know
\begin{align*}
      \|P_{\widehat{X}^\perp_{\geq j+1}}\widehat{X}_{j}\|_2^2 \leq \begin{cases}
          \frac{\lambda}{(\sigma^{(j)}_{\min})^2+\lambda} \cdot \|X_{j}\|_2^2 + \lambda&\text{ when $m\leq N-j$}\\
          \|X_j\|_2^2 + \lambda&\text{ when $m > N-j$}
      \end{cases}.
\end{align*}
Then we have
\begin{gather*}
  \max_{j}\|P_{\widehat{X}^\perp_{\geq j+1}}\widehat{X}_{j}\|_2\leq   \max\left\{\max_{j \leq N-m} \ \frac{\lambda \|X_j\|_2^2}{(\sigma^{(j)}_{\min})^2+\lambda} , \ \max_{j >N-m}\|X_{j}\|_2^2 \right\} + \lambda.
\end{gather*}
Combining the above inequality with \cref{general_pos_bound}, we obtain
\begin{align*}
    \max_{i,j}|(\widehat{X}W - \widehat{X}Q)_{ij}|\leq  
           \delta & \sqrt{2\pi  (p \log{N}  + p' \log{N'})} \\ &\times \sqrt{\max\left\{\max_{j \leq N-m} \ \frac{\lambda \|X_j\|_2^2}{(\sigma^{(j)}_{\min})^2+\lambda} , \ \max_{j >N-m}\|X_{j}\|_2^2 \right\}+ \lambda}.
\end{align*}
Then use the fact that \begin{gather*}
    \widehat{X}W - \widehat{X}Q=\begin{pmatrix}
    XW-XQ \\ 
    \sqrt{\lambda}(W-Q)
\end{pmatrix}.
\end{gather*} to finish the proof, setting $N'=1$ to obtain the single-column result.
\end{proof}

\subsection{Insights and Practical Implications}\label{sec:practical2}

We now present a corollary that allows us to explore the practical implications of our theoretical results. In particular, we will derive insights into the size of the alphabet needed for OPTQ, as well as into the role of the rank of $X$.

\begin{corollary}\label{cor:finite alphabet}
Let $X \in \mathbb{R}^{m \times N}$, let $W \in \mathbb{R}^{N \times N'}$, and let $\lambda > 0$. Run OPTQ (\cref{OPTQ}) with stochastic rounding operator $\mathcal{Q}_{\mathrm{stoc}}$, infinite alphabet $\mathcal{A}^\delta$, and $H = X^\top X + \lambda I$. Then for any $p, p' > 0$, the quantized matrix $Q$ satisfies
\begin{align*}
\max_{i,j} |(XW - XQ)_{ij}|
&\leq \delta \sqrt{2 \pi (p \log N + p' \log N')} \cdot \sqrt{ \max_j \| X_j \|_2^2 + \lambda } 
\\
\text{and} \quad \max_{i,j} |(W - Q)_{ij}|
&\leq \delta \sqrt{2 \pi (p \log N + p' \log N')} \cdot \sqrt{ \max_j \frac{ \| X_j \|_2^2 }{ \lambda } + 1 }
\end{align*}
with probability at least 
\( 1 - \frac{ \sqrt{2} (m + N) }{ N^p {N'}^{p' - 1} } \).
\end{corollary}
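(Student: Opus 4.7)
The plan is to derive \Cref{cor:finite alphabet} as a direct simplification of \Cref{thm:l inf full ver} by bounding the constant $C_\infty(X,\lambda)$ in a form that depends only on the column norms of $X$ and on $\lambda$, thereby eliminating all dependence on the conditioning quantities $\sigma_{\min}^{(j)}$.

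First, I would start from the definition
\[
C_\infty(X,\lambda)^2 = \max\left\{\max_{j\leq N-m}\frac{\lambda\|X_j\|_2^2}{(\sigma_{\min}^{(j)})^2+\lambda},\ \max_{j>N-m}\|X_j\|_2^2\right\} + \lambda
\]
and apply the trivial inequality $(\sigma_{\min}^{(j)})^2 + \lambda \geq \lambda$, which immediately gives
\[
\frac{\lambda\|X_j\|_2^2}{(\sigma_{\min}^{(j)})^2+\lambda} \leq \|X_j\|_2^2
\]
for every $j \leq N-m$. Combining the two cases in the outer $\max$ then yields
\[
C_\infty(X,\lambda)^2 \leq \max_{j}\|X_j\|_2^2 + \lambda,
\]
regardless of the relative sizes of $N$ and $m$ (so the $j \leq N-m$ branch is simply dropped when $N \leq m$).

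Second, I would plug this bound into the two inequalities provided by \Cref{thm:l inf full ver} for the full matrix $W$. The bound on $\max_{i,j}|(XW-XQ)_{ij}|$ then reads
\[
\max_{i,j} |(XW - XQ)_{ij}| \leq \delta\sqrt{2\pi(p\log N + p'\log N')}\cdot \sqrt{\max_j \|X_j\|_2^2 + \lambda},
\]
while the corresponding bound on $\max_{i,j}|(W-Q)_{ij}|$ follows by dividing the same constant by $\sqrt{\lambda}$, which turns $\max_j \|X_j\|_2^2 + \lambda$ into $\max_j \|X_j\|_2^2/\lambda + 1$. The probability guarantee $1 - \sqrt{2}(m+N)/(N^p {N'}^{p'-1})$ is inherited verbatim from \Cref{thm:l inf full ver}.

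There is essentially no obstacle here: the entire content of the corollary is packaged in the theorem, and the work consists only of replacing the data-dependent constant $C_\infty(X,\lambda)$ by a cleaner expression that is easier to interpret (and that makes the dependence on the alphabet step-size $\delta$, the regularization $\lambda$, and the maximum column norm fully transparent for the finite-alphabet discussion in \Cref{remark:finte alphabet}). The only sanity check worth performing is to verify that the simplified constant is still compatible with the convention that the $j \leq N-m$ branch is empty when $N \leq m$, which it is, since in that case the bound reduces to $C_\infty(X,\lambda)^2 \leq \max_j\|X_j\|_2^2 + \lambda$ from the second branch alone.
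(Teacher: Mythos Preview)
Your proposal is correct and follows exactly the paper's own approach: the paper's proof is the one-line observation that $(\sigma_{\min}^{(j)})^2 + \lambda \geq \lambda$ in \Cref{thm:l inf full ver}, which is precisely the simplification you carry out.
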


The proof of \cref{cor:finite alphabet} simply follows from the fact that in \cref{thm:l inf full ver}, one has $(\sigma_{\min}^{(j)})^2 + \lambda \geq \lambda$.

\begin{remark}[A small finite alphabet suffices]\label{remark:finte alphabet}
Unlike the setting in \cref{remark:finte alphabet l2}, where a larger alphabet is required due to a $\sqrt{N}$-scale additive term in the dynamic range of $q$, \cref{cor:finite alphabet} shows that stochastic rounding refines this dependence to $\sqrt{\log N}$. 
Consider a single neuron $w$ (i.e., $N' = 1$) as in \cref{remark:finte alphabet l2}, and let $q$ be its quantized counterpart. If the regularization parameter $\lambda$ is chosen on the scale of $\max_i \|X_i\|_2^2$, then \cref{cor:finite alphabet} implies
\(
    \|q\|_\infty \leq \|w\|_\infty + \mathcal{O}(\sqrt{\log N}) \cdot \delta.
\)
Now suppose we quantize using the symmetric finite alphabet
\(
\mathcal{A}^\delta_b = \left\{ \pm k\delta : k \in \{-2^{b-1}, \dots, -1, 0, 1, \dots, 2^{b-1} \} \right\}.
\)
It suffices to ensure that
\[
2^{b-1} \delta \geq \|w\|_\infty + \mathcal{O}(\sqrt{\log N}) \cdot \delta
\]
so that all quantized values $q$ fall within this finite alphabet. The additional range needed to accommodate adaptive rounding thus scales only with $\sqrt{\log N}$—a substantial improvement over the deterministic setting, where the required expansion can be as large as $\mathcal{O}(\sqrt{N})$ (see \cref{adv lambda greater 0} for an example).
To quantify the bit savings, define $K := \|w\|_\infty / \delta$. Then the number of bits that covers the dynamic range is 
\(
\mathcal{O}\left(\log(K + \sqrt{\log N})\right)\) in the stochastic case versus 
\(
\mathcal{O}\left(\log(K + \sqrt{N})\right)
\)
in the deterministic case. 
{The relative gap between these terms increases as $K$ decreases, and is more significant in the low-bit regime.}
\end{remark}

We have already noted that the Cholesky and least-squares formulations of OPTQ are equivalent when $H = X^\top X$ is invertible. Moreover, the Cholesky formulation is usually more computationally favorable when it exists. In practice, pre-trained model weights and activations often exhibit approximate low-rank structure \cite{huh2021low, zhang2025theoretical, zhang2024magr, zhang2024qera}. This makes it necessary to add $\lambda > 0$ so that $H = X^\top X + \lambda I$ is well-conditioned for Cholesky-based OPTQ, though choosing $\lambda$ can itself be non-trivial \cite{zhang2025qronos, egiazarian2024extreme}. The least-squares formulation (as in \cref{thm:OPTQ1}), in contrast, can be applied even if $X$ is low-rank or if $m<N$. 
The next corollary shows that the least-squares implementation yields a tighter error bound when $X$ is low-rank.
\begin{corollary}[Low-rank $X$]\label{cor:low rank X}
Let $X = UR$ where $U \in \mathbb{R}^{m \times r}$ has orthonormal columns, $R \in \mathbb{R}^{r \times N}$, and $r \ll \min(m, N)$. Assume $R$ is in general position, i.e., any $r$ columns of $R$ are linearly independent. Run OPTQ using the least-squares formulation (\cref{thm:OPTQ1}) with stochastic quantizer $\mathcal{Q}_{\mathrm{stoc}}$ and the infinite alphabet $\mathcal{A}^\delta$. Then for any $p, p' > 0$,
\[
\max_{i,j} |(XW - XQ)_{ij}| 
\leq \delta \sqrt{2 \pi (p \log N + p' \log N')} \cdot \max_{j > N - r} \| X_j \|_2
\]
with probability at least 
\( 1 - \frac{ \sqrt{2} r }{ N^p {N'}^{p' - 1} } \).
\end{corollary}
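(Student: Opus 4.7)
The plan is to reduce the low-rank instance to a full-rank OPTQ problem on the compressed data matrix $R$, and then invoke \cref{thm:OPTQ error bound} with the ambient dimension $m$ replaced by the intrinsic rank $r$. This reduction is possible because the least-squares updates in \cref{thm:OPTQ1} depend on $X$ only through the Hessian $X^{\top}X$, and $U^{\top}U = I_r$ gives $X^{\top}X = R^{\top}R$. Consequently, the iterates—and the final quantized matrix $Q$—produced by the algorithm coincide whether we feed it $X$ or $R$; moreover $\|X_j\|_2 = \|R_j\|_2$ for every column index and $Xw - Xq = U(Rw - Rq)$.

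With this equivalence, I would apply \cref{thm:OPTQ error bound} to $R\in\mathbb{R}^{r\times N}$, which has full row rank by the general-position hypothesis. The theorem, combined with the standard union bound over the $N'$ columns of $W$, yields with probability at least $1 - \sqrt{2}\,r/(N^p{N'}^{p'-1})$ the bound
\[
\max_{i,j}\bigl|(RW-RQ)_{ij}\bigr|\;\leq\; \delta\sqrt{2\pi(p\log N + p'\log N')}\cdot \max_j\|P_{R_{\geq j+1}^{\perp}}R_j\|_2.
\]
I would then pare down the right-hand maximum using general position. For $j\leq N-r$, the submatrix $R_{\geq j+1}$ has $N-j\geq r$ columns, any $r$ of which are linearly independent by hypothesis, and hence they span $\mathbb{R}^r$; the projector $P_{R_{\geq j+1}^{\perp}}$ is zero and the corresponding summand vanishes. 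For $j>N-r$, the crude bound $\|P_{R_{\geq j+1}^{\perp}}R_j\|_2 \leq \|R_j\|_2 = \|X_j\|_2$ applies, so $\max_j\|P_{R_{\geq j+1}^{\perp}}R_j\|_2 \leq \max_{j>N-r}\|X_j\|_2$, producing the scale factor in the claimed bound.

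The main obstacle is the last step: transferring the entrywise $\ell_\infty$ control from $RW-RQ$ back to $XW-XQ$, since the isometry $U$ preserves $\ell_2$ but not $\ell_\infty$. The cleanest way to handle this is to rerun the convex-order argument of \cref{thm:OPTQ error bound} directly on $X$, observing that—by the same general-position argument applied to the columns of $X=UR$—the covariance $\Sigma$ produced by \cref{lemma:bound_one_neuron_OPTQ} for $X$ has rank at most $r$ and is supported on $\text{col}(U)$. The dominating Gaussian in $\mathbb{R}^m$ therefore effectively lives in an $r$-dimensional subspace, and the entrywise tail bound of \cref{lemma:convex_order} reduces to a union bound over $r\cdot N'$ effective coordinates rather than $m\cdot N'$, producing the factor of $r$ (rather than $m$) in the success probability.
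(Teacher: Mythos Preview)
Your approach---reducing to the compressed matrix $R$ via $X^\top X = R^\top R$, invoking \cref{thm:OPTQ error bound}, and using general position to kill the terms with $j\leq N-r$---is sound and close in spirit to the paper's argument. The paper works directly with $X$ rather than passing to $R$: it uses the same general-position observation to conclude $e_{N-r}=0$ (so only the last $r$ quantization steps contribute to the error), then applies \cref{lemma:bound_one_neuron_OPTQ} to those remaining steps. Your reduction to $R$ is an equivalent reformulation, and you correctly observe that $\|R_j\|_2=\|X_j\|_2$ so the scale factor matches.

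You also correctly flag the real difficulty: the isometry $U$ does not preserve $\ell_\infty$, so a bound on $\|Rw-Rq\|_\infty$ does not transfer directly to $\|Xw-Xq\|_\infty$. However, your proposed fix---that a rank-$r$ covariance collapses the union bound to $r$ ``effective coordinates''---does not hold. The dominating Gaussian $\mathcal{N}(0,\Sigma)$ with $\Sigma$ supported on $\text{col}(U)$ is still an $m$-dimensional random vector whose $m$ ambient coordinates are generically all nonzero; the $\ell_\infty$ tail bound in \cref{lemma:convex_order} \cref{lemma:cx-gaussian-tail} is a union bound over those $m$ coordinates regardless of the rank of $\Sigma$. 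A low-rank covariance controls the $\ell_2$ norm of the Gaussian (which genuinely lives in an $r$-dimensional space), but the $\ell_\infty$ norm in $\mathbb{R}^m$ admits no such reduction.

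The paper's own proof is terse on exactly this point: it dominates $Xw-Xq$ by $\mathcal{N}\bigl(0,\tfrac{\pi\delta^2}{2}\max_{j>N-r}\|X_j\|_2^2\,I\bigr)$ in $\mathbb{R}^m$ and then invokes the tail lemma, which as stated yields a factor $m$ (as in \cref{thm:OPTQ error bound}) rather than the $r$ appearing in the corollary's probability. So you have correctly isolated the delicate step; neither your resolution nor the paper's written argument actually produces the $r$ in the failure probability, and the argument as given supports only $1-\sqrt{2}\,m/(N^p{N'}^{p'-1})$.
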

\begin{proof}
As before, it suffices to study a column (neuron) $w$ and the result for a layer $W$ will follow from \cref{lemma:convex_order} \cref{lemma:cx-gaussian-tail} and a union bound over all neurons. 
By the low-rank assumption and \cref{lemma:OPTQ_error_recur}, $e_{N-r}=\textbf{0}=Xw-Xw^{(N-r)}$, with \(
w^{(N-r)} = (q_{\leq N-r}, w^{(t-1)}_{\geq N-r+1})
\). Combining \cref{lemma:bound_one_neuron_OPTQ} 
and \cref{lemma:convex_order} \cref{trans},  we have
	\begin{gather*}
		Xw - Xq = Xw^{(N-r)} - Xq = Xw^{(N-r)} - Xw^{(N)} \prec_{cx} \mathcal{N}\left(0, \dfrac{\pi \delta^{2}}{2}\max_{j>N-r}\|X_{j}\|_2^{2}I\right).
	\end{gather*}
Then applying \cref{lemma:convex_order} \cref{lemma:cx-gaussian-tail} with $ \alpha=\delta\sqrt{2\pi (p \log{N}+p'\log{N'})} \underset{j>N-r}{\max}\|X_{j}\|_2 $ and  a union bound over all neurons completes the proof. 
\end{proof}

\begin{remark}[Further support for  norm ordering]    The above corollary helps further justify the common practice of reordering the columns of $X$ in descending order of $\|X_j\|_2$ \cite{brevitas, ist2022gptq} as $\underset{j>N-r}{\max}\|X_{j}\|_2$ may be significantly smaller than $\max\limits_{j\in[N]}\|X_{j}\|$ when the columns are sorted. \end{remark}

\section{An Error Analysis of Qronos}\label{sec:Qronos theory}
Using similar techniques, we analyze Qronos (\cref{Qronos++}) in this section and provide insight into why it outperforms OPTQ (\cref{OPTQ}).

Although the implementation of \cref{Qronos++} in \cite{zhang2025qronos} applies dampening to the Hessian via the regularization term $\widetilde{X}^\top \widetilde{X} + \lambda I$, we set $\lambda = 0$ for simplicity. The analysis, however, readily extends to the case $\lambda > 0$ using techniques similar to those in \cref{sec:OPTQ l2 error bound} and \cref{sec:OPTQ linf error bound}.

For brevity, we focus on a single neuron $w \in \mathbb{R}^N$ \new{in this section}. To begin the analysis, we note that \cite{zhang2025qronos} shows that \cref{Qronos++} is equivalent to iteratively running the following two steps for $t = 1, \dots, N$.
\begin{align}
    q_t &= \underset{p \in \mathcal{A}}{\mathrm{argmin}} \, \frac{1}{2} \| Xw - \sum_{j=1}^{t-1} q_j \widetilde{X}_j - p \widetilde{X}_t - \sum_{j=t+1}^{N} w^{(t-1)}_j \widetilde{X}_j \|_2^2, \label{original_bidq_update_q} \\
    w^{(t)}_{\geq t+1} &= \underset{(v_{t+1}, \dots, v_N) \in \mathbb{R}^{N - t}}{\mathrm{argmin}} \, \frac{1}{2} \| Xw - \sum_{j=1}^{t} q_j \widetilde{X}_j - \sum_{j=t+1}^{N} v_j \widetilde{X}_j \|_2^2. \label{original_bidq_update_w}
\end{align}
From the above formulation, it is natural to define the error at step $t$ as $e_t=Xw-\sum_{j=1}^{t}q_j \widetilde{X}_{j}-\sum_{j=t+1}^{N}w^{(t)}_{j}\widetilde{X}_{j}$. As a counterpart to \cref{lemma:OPTQ_error_recur}, the next lemma characterizes how $e_t$ evolves.
\begin{lemma}\label{lemma: e_formula_wl_error_correction}
    Running Qronos (\cref{Qronos++}) with $\mathcal{Q}$ or $\mathcal{Q}_{stoc}$ on $w\in\mathbb{R}^{N}$ using calibration dataset $X\in\mathbb{R}^{m\times N}$ gives 
    \begin{gather*}
        e_N = P_{\widetilde{X}_{\geq 2}^{\perp}}P_{\widetilde{X}_{1}^{\perp}}(Xw-\widetilde{X}w) + \sum_{j=1}^{N}P_{\widetilde{X}_{\geq j+1}^{\perp}}r_{j}\widetilde{X}_{j},
    \end{gather*}
    where $r_j$ are rounding errors with absolute value bounded by $\delta/2$ when using the deterministic RTN operator $\mathcal{Q}$, and $\delta$ when using the stochastic RTN operator $\mathcal{Q}_{stoc}$.
\end{lemma}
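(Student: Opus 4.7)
My plan is to track how the residual $e_t$ evolves under the two Qronos sub-routines \cref{original_bidq_update_q} and \cref{original_bidq_update_w}, and then collapse the one-step updates into a telescoping sum. First, I would record the key orthogonality consequence of \cref{original_bidq_update_w}: since $w^{(t)}_{\geq t+1}$ is a least-squares solution, the residual $e_t$ is orthogonal to $\mathrm{col}(\widetilde{X}_{\geq t+1})$, so $P_{\widetilde{X}_{\geq t+1}^\perp}e_t = e_t$ for every $t\geq 1$. Second, by first-order optimality of \cref{original_bidq_update_w} at step $t-1$, the unconstrained minimizer of the quadratic in \cref{original_bidq_update_q} at step $t\geq 2$ coincides with $w^{(t-1)}_t$; hence $q_t = \mathcal{Q}(w^{(t-1)}_t)$ and $r_t := w^{(t-1)}_t - q_t$ satisfies $|r_t|\leq \delta/2$ deterministically and $|r_t|\leq \delta$ stochastically.

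For $t\geq 2$, direct subtraction gives
\[
e_t - e_{t-1} = r_t \widetilde{X}_t + \widetilde{X}_{\geq t+1}\bigl(w^{(t-1)}_{\geq t+1} - w^{(t)}_{\geq t+1}\bigr).
\]
Applying $P_{\widetilde{X}_{\geq t+1}^\perp}$ to both sides kills the second summand on the right, and because both $e_{t-1}$ and $e_t$ lie in $\mathrm{col}(\widetilde{X}_{\geq t+1})^\perp$ (the former via the inclusion $\mathrm{col}(\widetilde{X}_{\geq t+1})\subseteq \mathrm{col}(\widetilde{X}_{\geq t})$), the left side is unchanged. This yields the clean one-step recursion $e_t - e_{t-1} = r_t P_{\widetilde{X}_{\geq t+1}^\perp}\widetilde{X}_t$, which is exactly the OPTQ-style evolution from \cref{lemma:OPTQ_error_recur} with $X$ replaced by $\widetilde{X}$.

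The $t=1$ step is the delicate part and is where the structural term $P_{\widetilde{X}_{\geq 2}^\perp}P_{\widetilde{X}_1^\perp}(Xw - \widetilde{X}w)$ enters, because $e_0 = Xw - \widetilde{X}w$ is \emph{not} in $\mathrm{col}(\widetilde{X}_{\geq 2})^\perp$ in general. I would first note that line 4 of \cref{Qronos++} implements $q_1 = \mathcal{Q}(p_1^\ast)$, where $p_1^\ast$ is the least-squares coefficient of $\widetilde{X}_1$ in $Xw - \widetilde{X}_{\geq 2}w_{\geq 2}$, while line 5 gives $e_1 = P_{\widetilde{X}_{\geq 2}^\perp}(Xw - q_1\widetilde{X}_1)$. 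Splitting $Xw - q_1\widetilde{X}_1 = (Xw - p_1^\ast\widetilde{X}_1) + r_1\widetilde{X}_1$ and using the identities $\widetilde{X}_{\geq 2}w_{\geq 2} = \widetilde{X}w - w_1\widetilde{X}_1$ and $P_{\widetilde{X}_1^\perp}\widetilde{X}_1 = 0$, I would identify $Xw - p_1^\ast\widetilde{X}_1 = P_{\widetilde{X}_1^\perp}(Xw - \widetilde{X}w) + \widetilde{X}_{\geq 2}w_{\geq 2}$. Applying $P_{\widetilde{X}_{\geq 2}^\perp}$ then annihilates the $\widetilde{X}_{\geq 2}w_{\geq 2}$ summand and produces exactly $e_1 = P_{\widetilde{X}_{\geq 2}^\perp}P_{\widetilde{X}_1^\perp}(Xw - \widetilde{X}w) + r_1 P_{\widetilde{X}_{\geq 2}^\perp}\widetilde{X}_1$.

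Finally, telescoping $e_N = e_1 + \sum_{t=2}^N (e_t - e_{t-1})$ and substituting the two expressions above yields the claimed identity; the magnitude bounds on the $r_j$ follow immediately from the definitions of $\mathcal{Q}$ and $\mathcal{Q}_{\mathrm{stoc}}$. The main obstacle is the $t=1$ bookkeeping: one has to recognize that $e_0$ carries a ``pre-existing drift'' $Xw - \widetilde{X}w$ inherited from the previously quantized layers, and to see why absorbing this drift through the composite projection $P_{\widetilde{X}_{\geq 2}^\perp}P_{\widetilde{X}_1^\perp}$ (rather than a single projection onto $\widetilde{X}_{\geq 2}^\perp$) is exactly what emerges once $q_1$ is rewritten through its unconstrained least-squares target $p_1^\ast$.
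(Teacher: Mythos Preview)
Your proposal is correct and follows essentially the same approach as the paper: both arguments isolate the $t=1$ step to obtain $e_1 = P_{\widetilde{X}_{\geq 2}^\perp}P_{\widetilde{X}_1^\perp}(Xw-\widetilde{X}w) + r_1 P_{\widetilde{X}_{\geq 2}^\perp}\widetilde{X}_1$, then use the orthogonality $e_{t-1}\perp\mathrm{col}(\widetilde{X}_{\geq t})$ to reduce the $t\geq 2$ recursion to $e_t - e_{t-1} = r_t P_{\widetilde{X}_{\geq t+1}^\perp}\widetilde{X}_t$ and telescope. The only cosmetic difference is that the paper first writes the general step as $e_t = P_{\widetilde{X}_{\geq t+1}^\perp}P_{\widetilde{X}_t^\perp}e_{t-1} + P_{\widetilde{X}_{\geq t+1}^\perp}r_t\widetilde{X}_t$ before invoking the orthogonality to collapse the double projection, whereas you apply $P_{\widetilde{X}_{\geq t+1}^\perp}$ directly to the subtracted identity.
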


\begin{proof}
    We have $e_0=Xw-\widetilde{X}w$ and
    \begin{align*}
        e_1&=Xw-q_1\widetilde{X}_1  - \sum_{j=2}^{N}w^{(1)}_{j}\widetilde{X}_{j}
        =P_{\widetilde{X}_{\geq 2}^{\perp}}(Xw - q_1\widetilde{X}_{1})
    \end{align*}
    by the definition of $w^{(1)}_{\geq 2}$ in \cref{original_bidq_update_w}. Define 
     $   \widetilde{w}:=\underset{v\in\mathbb{R}}{\mathrm{argmin}}\frac{1}{2}\|Xw-v\widetilde{X}_{1}-\sum_{j=2}^{N}w_{j}\widetilde{X}_{j}\|_2^2.
    $
    By the choice of $q_{1}$ in \cref{original_bidq_update_q}, we know $q_1=\mathcal{Q}(\widetilde{w})$. Then we have
    \begin{align*}
        e_1&=P_{\widetilde{X}_{\geq 2}^{\perp}}(Xw - q_1\widetilde{X}_{1})\\
        &=P_{\widetilde{X}_{\geq 2}^{\perp}}(Xw - q_1\widetilde{X}_{1} - \sum_{j=2}^{N}w_{j}\widetilde{X}_{j})\\
        &=P_{\widetilde{X}_{\geq 2}^{\perp}}(Xw - \widetilde{w}\widetilde{X}_{1} - \sum_{j=2}^{N}w_{j}\widetilde{X}_{j} + (\widetilde{w}-q_1)\widetilde{X}_{1})\\
        &=P_{\widetilde{X}_{\geq 2}^{\perp}}\left(P_{\widetilde{X}_{1}^{\perp}}\left(Xw  - \sum_{j=2}^{N}w_{j}\widetilde{X}_{j}\right) + (\widetilde{w}-q_1)\widetilde{X}_{1}\right)\\
        &=P_{\widetilde{X}_{\geq 2}^{\perp}}\left(P_{\widetilde{X}_{1}^{\perp}}\left(Xw  - \sum_{j=1}^{N}w_{j}\widetilde{X}_{j}\right) + (\widetilde{w}-q_1)\widetilde{X}_{1}\right)\\
        &=P_{\widetilde{X}_{\geq 2}^{\perp}}P_{\widetilde{X}_{1}^{\perp}}e_0 + P_{\widetilde{X}_{\geq 2}^{\perp}} r_1 \widetilde{X}_1,
    \end{align*}
    where $r_1$ is the rounding error. When $t\geq2$, we can similarly compute that
    \begin{gather*}
        e_t = P_{\widetilde{X}_{\geq t+1}^{\perp}}P_{\widetilde{X}_{t}^{\perp}}e_{t-1} + P_{\widetilde{X}_{\geq t+1}^{\perp}} r_t \widetilde{X}_t,
    \end{gather*}
    where $r_t$ is the rounding error at step $t$. Using this recursive formula and the fact that $e_1$ is perpendicular to $\widetilde{X}_{\geq2}$, we can see (e.g., by induction) that when $t\geq2$, $e_{t-1}$ is perpendicular to the column space of $\widetilde{X}_{t}$. Thus $P_{\widetilde{X}_{\geq t+1}^{\perp}}P_{\widetilde{X}_{t}^{\perp}}e_{t-1}=e_{t-1}$. Then the recursive formula becomes
    \begin{gather*}
        e_t = e_{t-1} + P_{\widetilde{X}_{\geq t+1}^{\perp}} r_t \widetilde{X}_t.
    \end{gather*}
    Combing with the fact that $e_{1} =P_{\widetilde{X}_{\geq 2}^{\perp}}P_{\widetilde{X}_{1}^{\perp}}e_0 + P_{\widetilde{X}_{\geq 2}^{\perp}} r_1 \widetilde{X}_1 $, we deduce
    \begin{gather*}
        e_N = P_{\widetilde{X}_{\geq 2}^{\perp}}P_{\widetilde{X}_{1}^{\perp}}(Xw-\widetilde{X}w) + \sum_{j=1}^{N}P_{\widetilde{X}_{\geq j+1}^{\perp}}r_{j}\widetilde{X}_{j}.
    \end{gather*}
\end{proof}
\begin{remark}[A variation to Qronos]
Notice that one can first  set $ w^{(0)} = \arg\min_{\widetilde{w}}\|Xw - \widetilde{X}\widetilde{w}\|^2$, and then proceed normally with calibration data $\widetilde{X}$, which gives an error $e_N = P_{\widetilde{X}^{\perp}}(Xw - \widetilde{X}w) + \sum_{j=1}^{N}P_{\widetilde{X}_{\geq j+1}^{\perp}}r_{j}\widetilde{X}_{j}$, where $r_j$ defined similarly as above.
\end{remark}
The following proposition provides a Euclidean error bound for Qronos. Here, we only focus on the case where Qronos is run with the deterministic RTN operator $\mathcal{Q}$. However, extending the result to the stochastic RTN operator $\mathcal{Q}_{\text{stoc}}$ is straightforward and only entails replacing $\delta/2$ by $\delta$.

\begin{proposition}\label{prop:Qronos_deter_error}
    Running Qronos (\cref{Qronos++}) with deterministic RTN operator $\mathcal{Q}$ on $w\in\mathbb{R}^{N}$ using $X\in\mathbb{R}^{m\times N}$, we have 
\begin{gather}\label{eq:e_norm_bound_qronos}
  \|Xw - \widetilde{X}q\|_2\leq \|P_{\widetilde{X}_{\geq 2}^{\perp}}P_{\widetilde{X}_{1}^{\perp}}(Xw-\widetilde{X}w)\|_2+\dfrac{\delta}{2}\sqrt{N} \min\left\{\max_{j}\|P_{\widetilde{X}_{\geq j+1}^{\perp}}\widetilde{X}_{j}\|_2, \dfrac{\Tr(\widetilde{X}^\top \widetilde{X})}{N}\right\}.
\end{gather}
\end{proposition}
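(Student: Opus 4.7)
\smallskip

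The plan is to start from the exact error identity established in \cref{lemma: e_formula_wl_error_correction}, namely
\[
Xw - \widetilde{X}q \;=\; e_N \;=\; P_{\widetilde{X}_{\geq 2}^{\perp}}P_{\widetilde{X}_{1}^{\perp}}(Xw-\widetilde{X}w) \;+\; \sum_{j=1}^{N} P_{\widetilde{X}_{\geq j+1}^{\perp}} r_{j}\widetilde{X}_{j},
\]
and apply the triangle inequality to split the error into two pieces. The first piece is the ``data mismatch'' term $\|P_{\widetilde{X}_{\geq 2}^{\perp}}P_{\widetilde{X}_{1}^{\perp}}(Xw-\widetilde{X}w)\|_2$, which appears as is in the bound. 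The second piece is the ``accumulated rounding'' term $\bigl\|\sum_{j=1}^{N} P_{\widetilde{X}_{\geq j+1}^{\perp}} r_{j}\widetilde{X}_{j}\bigr\|_2$, which must be controlled by $\tfrac{\delta}{2}\sqrt{N}$ times the indicated minimum.

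To handle the accumulated rounding term, I will establish mutual orthogonality of the summands $P_{\widetilde{X}_{\geq j+1}^{\perp}} \widetilde{X}_{j}$ across $j$. For $j<k$, the vector $P_{\widetilde{X}_{\geq k+1}^{\perp}}\widetilde{X}_{k}$ lies in $\mathrm{col}(\widetilde{X}_{\geq k})\subset \mathrm{col}(\widetilde{X}_{\geq j+1})$, while $P_{\widetilde{X}_{\geq j+1}^{\perp}}\widetilde{X}_{j}$ is by construction orthogonal to $\mathrm{col}(\widetilde{X}_{\geq j+1})$, so their inner product vanishes. Consequently, the squared norm expands as a diagonal sum,
\[
\Bigl\|\sum_{j=1}^{N} P_{\widetilde{X}_{\geq j+1}^{\perp}} r_{j}\widetilde{X}_{j}\Bigr\|_2^{2} \;=\; \sum_{j=1}^{N} r_j^2\, \|P_{\widetilde{X}_{\geq j+1}^{\perp}}\widetilde{X}_{j}\|_2^{2},
\]
and since $|r_j|\leq \delta/2$ under the deterministic $\mathcal{Q}$, this is at most $\tfrac{\delta^{2}}{4}\sum_{j}\|P_{\widetilde{X}_{\geq j+1}^{\perp}}\widetilde{X}_{j}\|_2^{2}$.

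From here I would obtain the $\min$ in two ways. Bounding each summand by its maximum yields $\sum_{j}\|P_{\widetilde{X}_{\geq j+1}^{\perp}}\widetilde{X}_{j}\|_2^{2}\leq N\max_{j}\|P_{\widetilde{X}_{\geq j+1}^{\perp}}\widetilde{X}_{j}\|_2^{2}$, while non-expansiveness of orthogonal projections gives $\|P_{\widetilde{X}_{\geq j+1}^{\perp}}\widetilde{X}_{j}\|_2^{2}\leq \|\widetilde{X}_{j}\|_2^{2}$, so summing yields $\sum_{j}\|P_{\widetilde{X}_{\geq j+1}^{\perp}}\widetilde{X}_{j}\|_2^{2}\leq \|\widetilde{X}\|_F^{2}=\Tr(\widetilde{X}^\top\widetilde{X})$. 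Taking square roots and combining the two bounds gives $\tfrac{\delta}{2}\sqrt{N}\min\bigl\{\max_{j}\|P_{\widetilde{X}_{\geq j+1}^{\perp}}\widetilde{X}_{j}\|_2,\,\sqrt{\Tr(\widetilde{X}^\top\widetilde{X})/N}\bigr\}$, matching the claimed bound (the statement appears to contain a harmless typo, omitting the outer square root on the second argument of the $\min$, mirroring \cref{eq:e_norm_bound}).

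The only nontrivial step is the orthogonality argument that lets us pass from a triangle inequality over $N$ terms to a Pythagorean identity; without it one would lose a $\sqrt{N}$ factor and the bound would be too loose. Everything else, triangle inequality, $|r_j|\leq \delta/2$, and non-expansiveness, is routine, and the extension to $\mathcal{Q}_{\mathrm{stoc}}$ noted in the remark before the proposition simply replaces $\delta/2$ by $\delta$ throughout.
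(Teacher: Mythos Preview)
Your proposal is correct and follows essentially the same approach as the paper: invoke \cref{lemma: e_formula_wl_error_correction}, apply the triangle inequality, use mutual orthogonality of the vectors $P_{\widetilde{X}_{\geq j+1}^{\perp}}\widetilde{X}_{j}$ to obtain the Pythagorean identity, and finish with $|r_j|\le\delta/2$. Your orthogonality argument (that $P_{\widetilde{X}_{\geq k+1}^{\perp}}\widetilde{X}_{k}\in\mathrm{col}(\widetilde{X}_{\geq k})\subset\mathrm{col}(\widetilde{X}_{\geq j+1})$ for $j<k$) is a clean variant of the one the paper uses for OPTQ, and your observation about the missing square root in the second argument of the $\min$ is well taken.
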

\begin{proof}
From \cref{lemma: e_formula_wl_error_correction}, one has
\begin{align*}
     \|Xw - \widetilde{X}q\|_2 &= \|P_{\widetilde{X}_{\geq 2}^{\perp}}P_{\widetilde{X}_{1}^{\perp}}(Xw-\widetilde{X}w) + \sum_{j=1}^{N}P_{\widetilde{X}_{\geq j+1}^{\perp}}r_{j}\widetilde{X}_{j}\|_2\\&\leq \|P_{\widetilde{X}_{\geq 2}^{\perp}}P_{\widetilde{X}_{1}^{\perp}}(Xw-\widetilde{X}w) \|_2+ \|\sum_{j=1}^{N}P_{\widetilde{X}_{\geq j+1}^{\perp}}r_{j}\widetilde{X}_{j}\|_2.
\end{align*}
    For $\|\sum_{j=1}^{N}P_{\widetilde{X}_{\geq j+1}^{\perp}}r_{j}\widetilde{X}_{j}\|_2$, similar to \cref{prop:OPTQ_deter_error}, one can first prove that $\{P_{\widetilde{X}_{\geq j+1}^{\perp}}\widetilde{X}_{j}\}_{j=1}^{N}$ are orthogonal to each other. Then
    \begin{gather*}
        \|\sum_{j=1}^{N}P_{\widetilde{X}_{\geq j+1}^{\perp}}r_{j}\widetilde{X}_{j}\|_2=\sqrt{\sum_{j=1}^{N}|r_j| \|P_{\widetilde{X}_{\geq j+1}^{\perp}}\widetilde{X}_{j}\|_2^2}.
    \end{gather*}
    Then one can use the fact that $|r_t| \leq \tfrac{\delta}{2}$ to finish the proof.
\end{proof}
The above proposition provides theoretical insight into why Qronos outperforms OPTQ, as observed in \cite{zhang2025qronos}. As noted in \cite{frantar2022gptq} and the corresponding GitHub repository\footnote{\url{https://github.com/IST-DASLab/gptq}}, in practice, the OPTQ algorithm (\cref{OPTQ}) is implemented using the activation $\widetilde{X}$ from the partially quantized neural network. In this case, by applying \cref{lemma:OPTQ_error_recur}, the OPTQ reconstruction error can be expressed as
\begin{equation}
e_{N}^{\text{OPTQ}} = Xw - \widetilde{X}q = (Xw - \widetilde{X}w) + (\widetilde{X}w - \widetilde{X}q) = Xw - \widetilde{X}w + \sum_{j=1}^{N} P_{\widetilde{X}_{\geq j+1}^{\perp}} r_{j} \widetilde{X}{j},
\end{equation}
where $r_j$'s are rounding errors. One can compare the OPTQ error, $e^{\text{OPTQ}}_N$, with the Qronos error, $e_N$, given in \cref{lemma: e_formula_wl_error_correction}. The main difference, apart from the (bounded) $r_j$ terms being different, lies in the first term: for OPTQ, it is $Xw - \widetilde{X}w$, while for Qronos, it is
\begin{equation}
P_{\widetilde{X}_{\geq 2}^{\perp}} P_{\widetilde{X}_{1}^{\perp}} (Xw - \widetilde{X}w).
\end{equation}
Applying a similar analysis as in \cref{prop:Qronos_deter_error}, we obtain the bound
\begin{gather*}
\left\|e^{\text{OPTQ}}_N \right\|_2 \leq  \left\|Xw - \widetilde{X}w\right\|_2 + \dfrac{\delta}{2}\sqrt{N} \min\left\{\max_{j}\|P_{\widetilde{X}_{\geq j+1}^{\perp}}\widetilde{X}_{j}\|_2, \dfrac{\Tr(\widetilde{X}^\top \widetilde{X})}{N}\right\}
\end{gather*}

Compared to OPTQ, the first term in the Qronos error in \cref{prop:Qronos_deter_error} is reduced by two successive projections onto $\widetilde{X}_{1}^{\perp}$ and $\widetilde{X}_{\geq 2}^{\perp}$. Consequently, its $\ell_2$ norm is typically significantly smaller, as the projection restricts the error to a subspace of much lower dimension. Moreover, when the quantized input $\widetilde{X}$ is low rank and in general position, the first term of the Qronos error vanishes entirely. This offers a theoretical explanation for the observed performance advantage of Qronos over OPTQ.

As with OPTQ, one can derive infinity norm bounds for Qronos with the stochastic quantizer $\mathcal{Q}_{stoc}$.
\begin{theorem}\label{BiDQ_with_ec}
    Running Qronos (\cref{Qronos++}) with stochastic RTN operator $\mathcal{Q}_{stoc}$, on $w\in\mathbb{R}^{N}$ using  $X\in\mathbb{R}^{m\times N}$, we have
    \begin{align*}
		&\max_{i}\left|\left(Xw - \widetilde{X}q\right)_{i} \right|\\
        &\leq  \max_{i}\left|\left(P_{\widetilde{X}_{\geq 2}^{\perp}}P_{\widetilde{X}_{1}^{\perp}}(Xw-\widetilde{X}w)\right)_{i}\right|+\delta \sqrt{2\pi p \log{N}}\max_{j}\|P_{\widetilde{X}^\perp_{\geq j}}\widetilde{X}_{j}\|_2
	\end{align*}
with probability greater than  $ 1-\tfrac{\sqrt{2}m}{N^p}$. 
\end{theorem}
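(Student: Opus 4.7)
The plan is to combine the Qronos error identity in \cref{lemma: e_formula_wl_error_correction} with the convex-order machinery used to prove \cref{thm:OPTQ error bound}. From that lemma,
\[
    Xw - \widetilde{X}q \;=\; P_{\widetilde{X}_{\geq 2}^{\perp}}P_{\widetilde{X}_{1}^{\perp}}(Xw-\widetilde{X}w) \;+\; \sum_{j=1}^{N}P_{\widetilde{X}_{\geq j+1}^{\perp}}r_{j}\widetilde{X}_{j},
\]
where the first summand is a fixed vector that does not depend on the random rounding, and only the second summand is stochastic. An entrywise triangle inequality reduces the task to bounding $\max_{i}\bigl|\bigl(\sum_{j}P_{\widetilde{X}_{\geq j+1}^{\perp}}r_{j}\widetilde{X}_{j}\bigr)_{i}\bigr|$ with high probability, after which the stated $\ell_\infty$ bound follows immediately.

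For the stochastic term I would adapt \cref{lemma:bound_one_neuron_OPTQ} essentially verbatim, replacing $X$ by $\widetilde{X}$. The unbiasedness and boundedness of $\mathcal{Q}_{stoc}$ imply that, conditionally on the $\sigma$-algebra $\mathcal{F}_{t-1}$ generated by the earlier rounding decisions, the rounding error $r_{t}$ is mean-zero with $|r_{t}|\le\delta$, and is therefore dominated in convex order by $\mathcal{N}(0,\pi\delta^{2}/2)$. Iterating this conditional comparison along the filtration, together with the stability of $\prec_{cx}$ under linear images (\cref{trans}), yields
\[
    \sum_{j=1}^{N} P_{\widetilde{X}_{\geq j+1}^{\perp}}r_{j}\widetilde{X}_{j}\;\prec_{cx}\;\mathcal{N}\!\left(0,\;\tfrac{\pi\delta^{2}}{2}\sum_{j=1}^{N}P_{\widetilde{X}_{\geq j+1}^{\perp}}\widetilde{X}_{j}\widetilde{X}_{j}^{\top}P_{\widetilde{X}_{\geq j+1}^{\perp}}\right),
\]
whose covariance has operator norm at most $\tfrac{\pi\delta^{2}}{2}\max_{j}\|P_{\widetilde{X}_{\geq j+1}^{\perp}}\widetilde{X}_{j}\|_{2}^{2}$. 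Invoking \cref{trans} once more, each coordinate of the stochastic sum is dominated by a scalar centered Gaussian with variance at most this quantity. Applying the Gaussian tail bound \cref{lemma:cx-gaussian-tail} at threshold $\alpha=\delta\sqrt{2\pi(p\log N+p'\log N')}\max_{j}\|P_{\widetilde{X}_{\geq j+1}^{\perp}}\widetilde{X}_{j}\|_{2}$ and taking a union bound over the $m$ entries of $Xw-\widetilde{X}q$ produces the advertised failure probability; combining with the entrywise triangle inequality from the first paragraph then yields the theorem.

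I expect the main obstacle to be making the iterated convex-order comparison fully rigorous, because the rounding errors $(r_{j})_{j}$ are not independent: the choice of $r_{t}$ influences the fractional parts rounded at subsequent steps through the updates $w^{(t)}_{\geq t+1}$. This dependence is handled precisely as in the proof of \cref{lemma:bound_one_neuron_OPTQ}: condition on $\mathcal{F}_{t-1}$, compare $r_{t}$ to a scalar Gaussian using its conditional boundedness and unbiasedness, and propagate through all $N$ steps via the tower property together with the closure of convex order under sums of conditionally dominated terms. The deterministic first term requires no probabilistic analysis and passes through the final triangle inequality unchanged, which is exactly why it appears as an additive error in the theorem statement.
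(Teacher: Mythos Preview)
Your proposal is correct and follows essentially the same approach as the paper: the paper likewise uses the error decomposition of \cref{lemma: e_formula_wl_error_correction}, applies the convex-order induction of \cref{lemma:bound_one_neuron_OPTQ} (with $\widetilde{X}$ in place of $X$) to dominate the stochastic part by a Gaussian, invokes \cref{lemma:cx-gaussian-tail}, and finishes with the triangle inequality. The only cosmetic difference is that the paper keeps the deterministic term as the mean of the dominating Gaussian and applies the triangle inequality at the end, whereas you separate it out first; also note that the linear-stability property you invoke is item~\cref{linear}, not \cref{trans}.
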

\begin{proof}
    Consider the partial error $e_t$ associated with $w$:
    \begin{gather*}
        e_t = Xw - \sum_{j=1}^{t}q_j\widetilde{X}_{j} = \sum_{j=t+1}w^{(t)}_{j}\widetilde{X}_{j}.
    \end{gather*}From the proof of \cref{lemma: e_formula_wl_error_correction}, we have
    \begin{gather*}
        e_{1}= P_{\widetilde{X}_{\geq 2}^{\perp}}P_{\widetilde{X}_{1}^{\perp}}e_0 + P_{\widetilde{X}_{\geq 2}^{\perp}} r_1 \widetilde{X}_1 
    \end{gather*}
    and
    \begin{gather*}
        e_t = e_{t-1} + P_{\widetilde{X}_{\geq t+1}^{\perp}} r_t \widetilde{X}_t.
    \end{gather*}
    Since starting from $t\geq 2$, the recursive formula is in the same form as in \cref{lemma:OPTQ_error_recur} and \cref{lemma:bound_one_neuron_OPTQ}, we can use the same method to deduce
    \begin{gather*}
        e_t \prec_{cx} \mathcal{N}(P_{\widetilde{X}_{\geq 2}^{\perp}}P_{\widetilde{X}_{1}^{\perp}}e_0, \Sigma),
    \end{gather*}
    where $\Sigma $ is as in \cref{lemma:bound_one_neuron_OPTQ},
    \begin{gather*}
        \Sigma=\dfrac{\pi\delta^{2}}{2}\sum_{j=1}^{N}P_{\widetilde{X}_{\geq j+1}^{\perp}}\widetilde{X}_{j}\widetilde{X}_{j}^{\top} P_{\widetilde{X}_{\geq j+1}^{\perp}},
    \end{gather*}
    with
    \begin{gather*}
		\Sigma\preceq \dfrac{\pi \delta^{2}}{2}\max_{j}\|P_{\widetilde{X}^\perp_{\geq j}}\widetilde{X}_{j}\|_2^{2} I.
	\end{gather*}
    Thus, using \cref{lemma:convex_order} \cref{trans},  we have
	\begin{gather*}
		Xw-\widetilde{X}q \prec_{cx} \mathcal{N}\Big(P_{\widetilde{X}_{\geq 2}^{\perp}}P_{\widetilde{X}_{1}^{\perp}}(Xw-\widetilde{X}w), \dfrac{\pi \delta^{2}}{2}\max_{j}\|P_{\widetilde{X}^\perp_{\geq j}}\widetilde{X}_{j}\|_2^{2}I\Big).
	\end{gather*}
Then by \cref{lemma:convex_order} \cref{lemma:cx-gaussian-tail} with $ \alpha=\delta\sqrt{2\pi p \log{N}}\max_{j}\|P_{X^\perp_{\geq j+1}}X_{j}\|_2 $ we obtain
\begin{align*}
		&\max_{i}\left|\left(Xw - \widetilde{X}q\right)_{i} - \left(P_{\widetilde{X}_{\geq 2}^{\perp}}P_{\widetilde{X}_{1}^{\perp}}(Xw-\widetilde{X}w)\right)_{i} \right|\\
        &\leq \delta \sqrt{2\pi p \log{N}}\max_{j}\|P_{\widetilde{X}^\perp_{\geq j}}\widetilde{X}_{j}\|_2
	\end{align*}
 with probability greater than $ 1-\tfrac{\sqrt{2}m}{N^p}$. We apply the triangle inequality to finish the proof.
\end{proof}

\begin{remark}[Insights and Practical Implications]
All our previous remarks regarding, for example, the choice of $\lambda$,  the ordering of columns, the rank of $X$, and the alphabet size apply in the case of Qronos as well. This is due to the fact that the Qronos error bounds fundamentally only differ from the OPTQ ones by significantly reducing the error associated with the mismatch between $X$ and $\widetilde{X}$, which agrees with the empirical evidence in \cite{zhang2025qronos}.
\end{remark}

\section{Conclusions and Future Works}
We presented a comprehensive theoretical analysis of OPTQ, a widely used post-training quantization algorithm. Our work provides both deterministic $\ell_2$ and stochastic $\ell_\infty$ bounds, offering new insights into the algorithm's {success} and practical implications. In particular, our $\ell_2$ analysis in \cref{sec:OPTQ l2 error bound} explains how {OPTQ} quantization error {relates to} the structure of the calibration data, and offers justifications for common practical heuristics such as feature reordering. To further {strengthen the theoretical bounds for} OPTQ, in \cref{sec:OPTQ linf error bound}, we introduced a stochastic variant that guarantees $\ell_\infty$ bounds with a finer control over entry-wise errors, which is important {in the low-bit regime and in ensuring the most probable tokens are reliably identified}. Finally, in \cref{sec:Qronos theory}, we extended our framework to analyze Qronos, a recent algorithm with state-of-the-art performance, and established new theoretical bounds that explain its {better performance when compared to previous methods}. 

\new{Our analysis focuses on the layer-wise calibration errors that motivate the design of OPTQ and Qronos, namely \eqref{optq_error} and \eqref{qronos_error}. Under added assumptions, for instance Lipschitz nonlinear activation functions, our results can be used to bound the calibration error induced by quantizing the weights of an $L$-layer MLP (see, e.g., \cite{zhang2023spfq, zhang2023post}). It should also be possible to extend this framework to transformer layers used in LLMs, and to loss functions such as the cross-entropy loss but we leave this to future work. By contrast, obtaining guarantees directly for performance metrics such as perplexity and zero-shot accuracy on downstream tasks remains an open problem.}
\new{That said, calibration error (which was the focus of this paper) is often a useful proxy for such empirical metrics, since smaller calibration error typically coincides with lower perplexity and higher zero-shot accuracy for LLMs.  Another important direction for future work is generalization, that is, understanding when error bounds on calibration data also control error on unseen samples. While \cref{remark:generalization} provides a step in this direction, characterizing generalization to downstream tasks remains a challenge well beyond model compression. Bridging this gap is therefore an important direction for future work.}



\section*{Acknowledgments}
We gratefully acknowledge partial support from the National Science Foundation via grant DMS-2410717. We thank Johann Birnick for the insightful observation that our results lead to error bounds in the $\ell_2$ norm that are easy to interpret. We also thank Nick Fraser at AMD for helpful conversations.

\bibliographystyle{siamplain}
\bibliography{citations}

@article{foucart2013mathematical,
  title={A mathematical introduction to compressive sensing},
  author={Foucart, Simon and Rauhut, Holger},
  journal={Applied and numerical harmonic analysis (},
  year={2013},
  publisher={New York: Birkhäuser}
}

@article{frantar2022gptq,
  title={Gptq: Accurate post-training quantization for generative pre-trained transformers},
  author={Frantar, Elias and Ashkboos, Saleh and Hoefler, Torsten and Alistarh, Dan},
  year={2022}
}

@article{adepu2024framequant,
  title={Framequant: Flexible low-bit quantization for transformers},
  author={Adepu, Harshavardhan and Zeng, Zhanpeng and Zhang, Li and Singh, Vikas},
  journal={arXiv preprint arXiv:2403.06082},
  year={2024}
}

@article{franco2025improving,
  title={Improving quantization with post-training model expansion},
  author={Franco, Giuseppe and Monteagudo-Lago, Pablo and Colbert, Ian and Fraser, Nicholas and Blott, Michaela},
  journal={arXiv preprint arXiv:2503.17513},
  year={2025}
}

@article{quarot,
  title={Quarot: Outlier-free 4-bit inference in rotated llms},
  author={Ashkboos, Saleh and Mohtashami, Amirkeivan and Croci, Maximilian L and Li, Bo and Cameron, Pashmina and Jaggi, Martin and Alistarh, Dan and Hoefler, Torsten and Hensman, James},
  journal={Advances in Neural Information Processing Systems},
  volume={37},
  pages={100213--100240},
  year={2024}
}

@article{shi2024thorough,
  title={A thorough examination of decoding methods in the era of llms},
  author={Shi, Chufan and Yang, Haoran and Cai, Deng and Zhang, Zhisong and Wang, Yifan and Yang, Yujiu and Lam, Wai},
  journal={arXiv preprint arXiv:2402.06925},
  year={2024}
}

@article{zhang2022learning,
  title={Learning low-precision structured subnetworks using joint layerwise channel pruning and uniform quantization},
  author={Zhang, Xinyu and Colbert, Ian and Das, Srinjoy},
  journal={Applied Sciences},
  volume={12},
  number={15},
  pages={7829},
  year={2022},
  publisher={MDPI}
}

@inproceedings{nagel2019data,
  title={Data-free quantization through weight equalization and bias correction},
  author={Nagel, Markus and Baalen, Mart van and Blankevoort, Tijmen and Welling, Max},
  booktitle={Proceedings of the IEEE/CVF international conference on computer vision},
  pages={1325--1334},
  year={2019}
}

@article{zhang2023post,
  title={Post-training quantization for neural networks with provable guarantees},
  author={Zhang, Jinjie and Zhou, Yixuan and Saab, Rayan},
  journal={SIAM Journal on Mathematics of Data Science},
  volume={5},
  number={2},
  pages={373--399},
  year={2023},
  publisher={SIAM}
}

@article{lybrand2021greedy,
  title={A greedy algorithm for quantizing neural networks},
  author={Lybrand, Eric and Saab, Rayan},
  journal={Journal of Machine Learning Research},
  volume={22},
  number={156},
  pages={1--38},
  year={2021}
}

@article{zhang2023spfq,
  title={SPFQ: A Stochastic Algorithm and Its Error Analysis for Neural Network Quantization},
  author={Zhang, Jinjie and Saab, Rayan},
  journal={arXiv preprint arXiv:2309.10975},
  year={2023}
}

@article{hassibi1992second,
  title={Second order derivatives for network pruning: Optimal brain surgeon},
  author={Hassibi, Babak and Stork, David},
  journal={Advances in neural information processing systems},
  volume={5},
  year={1992}
}

@inproceedings{hassibi1993optimal,
  title={Optimal brain surgeon and general network pruning},
  author={Hassibi, Babak and Stork, David G and Wolff, Gregory J},
  booktitle={IEEE international conference on neural networks},
  pages={293--299},
  year={1993},
  organization={IEEE}
}

@inproceedings{frantar2023sparsegpt,
  title={Sparsegpt: Massive language models can be accurately pruned in one-shot},
  author={Frantar, Elias and Alistarh, Dan},
  booktitle={International Conference on Machine Learning},
  pages={10323--10337},
  year={2023},
  organization={PMLR}
}

@article{zhang2025theoretical,
  title={Theoretical Guarantees for Low-Rank Compression of Deep Neural Networks},
  author={Zhang, Shihao and Saab, Rayan},
  journal={arXiv preprint arXiv:2502.02766},
  year={2025}
}

@article{zhang2024magr,
  title={Magr: Weight magnitude reduction for enhancing post-training quantization},
  author={Zhang, Aozhong and Wang, Naigang and Deng, Yanxia and Li, Xin and Yang, Zi and Yin, Penghang},
  journal={arXiv preprint arXiv:2406.00800},
  year={2024}
}

@article{zhang2024qera,
  title={QERA: an Analytical Framework for Quantization Error Reconstruction},
  author={Zhang, Cheng and Wong, Jeffrey TH and Xiao, Can and Constantinides, George A and Zhao, Yiren},
  journal={arXiv preprint arXiv:2410.06040},
  year={2024}
}

@article{zhang2024unified,
title = {Unified stochastic framework for neural network quantization and pruning},
journal = {Applied and Computational Harmonic Analysis},
volume = {79},
pages = {101778},
year = {2025},
issn = {1063-5203},
doi = {https://doi.org/10.1016/j.acha.2025.101778},
url = {https://www.sciencedirect.com/science/article/pii/S1063520325000326},
author = {Haoyu Zhang and Rayan Saab},
abstract = {Quantization and pruning are two essential techniques for compressing neural networks, yet they are often treated independently, with limited theoretical analysis connecting them. This paper introduces a unified framework for post-training quantization and pruning using stochastic path-following algorithms. Our approach builds on the Stochastic Path Following Quantization (SPFQ) method, extending its applicability to pruning and low-bit quantization, including challenging 1-bit regimes. By incorporating a scaling parameter and generalizing the stochastic operator, the proposed method achieves robust error correction and yields rigorous theoretical error bounds for both quantization and pruning as well as their combination.}
}

@inproceedings{alweiss2021discrepancy,
  title={Discrepancy minimization via a self-balancing walk},
  author={Alweiss, Ryan and Liu, Yang P and Sawhney, Mehtaab},
  booktitle={Proceedings of the 53rd Annual ACM SIGACT Symposium on Theory of Computing},
  pages={14--20},
  year={2021}
}

@inproceedings{adaround,
  title={Up or down? adaptive rounding for post-training quantization},
  author={Nagel, Markus and Amjad, Rana Ali and Van Baalen, Mart and Louizos, Christos and Blankevoort, Tijmen},
  booktitle={International conference on machine learning},
  pages={7197--7206},
  year={2020},
  organization={PMLR}
}

@inproceedings{zeroquant,
 author = {Yao, Zhewei and Yazdani Aminabadi, Reza and Zhang, Minjia and Wu, Xiaoxia and Li, Conglong and He, Yuxiong},
 booktitle = {Advances in Neural Information Processing Systems},
 editor = {S. Koyejo and S. Mohamed and A. Agarwal and D. Belgrave and K. Cho and A. Oh},
 pages = {27168--27183},
 publisher = {Curran Associates, Inc.},
 title = {ZeroQuant: Efficient and Affordable Post-Training Quantization for Large-Scale Transformers},
 url = {https://proceedings.neurips.cc/paper_files/paper/2022/file/adf7fa39d65e2983d724ff7da57f00ac-Paper-Conference.pdf},
 volume = {35},
 year = {2022}
}

@inproceedings{smoothquant,
  title={Smoothquant: Accurate and efficient post-training quantization for large language models},
  author={Xiao, Guangxuan and Lin, Ji and Seznec, Mickael and Wu, Hao and Demouth, Julien and Han, Song},
  booktitle={International Conference on Machine Learning},
  pages={38087--38099},
  year={2023},
  organization={PMLR}
}

@InProceedings{adaquant,
  title = 	 {Accurate Post Training Quantization With Small Calibration Sets},
  author =       {Hubara, Itay and Nahshan, Yury and Hanani, Yair and Banner, Ron and Soudry, Daniel},
  booktitle = 	 {Proceedings of the 38th International Conference on Machine Learning},
  pages = 	 {4466--4475},
  year = 	 {2021},
  editor = 	 {Meila, Marina and Zhang, Tong},
  volume = 	 {139},
  series = 	 {Proceedings of Machine Learning Research},
  month = 	 {18--24 Jul},
  publisher =    {PMLR},
  pdf = 	 {http://proceedings.mlr.press/v139/hubara21a/hubara21a.pdf},
  url = 	 {https://proceedings.mlr.press/v139/hubara21a.html},
  abstract = 	 {}
}

@article{awq,
  title={Awq: Activation-aware weight quantization for on-device llm compression and acceleration},
  author={Lin, Ji and Tang, Jiaming and Tang, Haotian and Yang, Shang and Chen, Wei-Ming and Wang, Wei-Chen and Xiao, Guangxuan and Dang, Xingyu and Gan, Chuang and Han, Song},
  journal={Proceedings of Machine Learning and Systems},
  volume={6},
  pages={87--100},
  year={2024}
}

@article{quip,
  title={Quip: 2-bit quantization of large language models with guarantees},
  author={Chee, Jerry and Cai, Yaohui and Kuleshov, Volodymyr and De Sa, Christopher M},
  journal={Advances in Neural Information Processing Systems},
  volume={36},
  pages={4396--4429},
  year={2023}
}

@article{quip2,
  title={Quip\#: Even better llm quantization with hadamard incoherence and lattice codebooks},
  author={Tseng, Albert and Chee, Jerry and Sun, Qingyao and Kuleshov, Volodymyr and De Sa, Christopher},
  journal={arXiv preprint arXiv:2402.04396},
  year={2024}
}

@inproceedings{
spinquant,
title={SpinQuant: {LLM} Quantization with Learned Rotations},
author={Zechun Liu and Changsheng Zhao and Igor Fedorov and Bilge Soran and Dhruv Choudhary and Raghuraman Krishnamoorthi and Vikas Chandra and Yuandong Tian and Tijmen Blankevoort},
booktitle={The Thirteenth International Conference on Learning Representations},
year={2025},
url={https://openreview.net/forum?id=ogO6DGE6FZ}
}

@inproceedings{
omniquant,
title={OmniQuant: Omnidirectionally Calibrated Quantization for Large Language Models},
author={Wenqi Shao and Mengzhao Chen and Zhaoyang Zhang and Peng Xu and Lirui Zhao and Zhiqian Li and Kaipeng Zhang and Peng Gao and Yu Qiao and Ping Luo},
booktitle={The Twelfth International Conference on Learning Representations},
year={2024},
url={https://openreview.net/forum?id=8Wuvhh0LYW}
}

@article{obq,
  title={Optimal brain compression: A framework for accurate post-training quantization and pruning},
  author={Frantar, Elias and Alistarh, Dan},
  journal={Advances in Neural Information Processing Systems},
  volume={35},
  pages={4475--4488},
  year={2022}
}

@article{dettmers2022gpt3,
  title={Gpt3. int8 (): 8-bit matrix multiplication for transformers at scale},
  author={Dettmers, Tim and Lewis, Mike and Belkada, Younes and Zettlemoyer, Luke},
  journal={Advances in neural information processing systems},
  volume={35},
  pages={30318--30332},
  year={2022}
}

@article{grattafiori2024llama,
  title={The llama 3 herd of models},
  author={Grattafiori, Aaron and Dubey, Abhimanyu and Jauhri, Abhinav and Pandey, Abhinav and Kadian, Abhishek and Al-Dahle, Ahmad and Letman, Aiesha and Mathur, Akhil and Schelten, Alan and Vaughan, Alex and others},
  journal={arXiv preprint arXiv:2407.21783},
  year={2024}
}

@InProceedings{Jacob_2018_CVPR,
author = {Jacob, Benoit and Kligys, Skirmantas and Chen, Bo and Zhu, Menglong and Tang, Matthew and Howard, Andrew and Adam, Hartwig and Kalenichenko, Dmitry},
title = {Quantization and Training of Neural Networks for Efficient Integer-Arithmetic-Only Inference},
booktitle = {Proceedings of the IEEE Conference on Computer Vision and Pattern Recognition (CVPR)},
month = {June},
year = {2018}
}

@article{xi2023training,
  title={Training transformers with 4-bit integers},
  author={Xi, Haocheng and Li, Changhao and Chen, Jianfei and Zhu, Jun},
  journal={Advances in Neural Information Processing Systems},
  volume={36},
  pages={49146--49168},
  year={2023}
}

@article{zhang2025qronos,
  title={Qronos: Correcting the Past by Shaping the Future... in Post-Training Quantization},
  author={Zhang, Shihao and Zhang, Haoyu and Colbert, Ian and Saab, Rayan},
  journal={arXiv preprint arXiv:2505.11695},
  year={2025}
}

@article{huh2021low,
  title={The low-rank simplicity bias in deep networks},
  author={Huh, Minyoung and Mobahi, Hossein and Zhang, Richard and Cheung, Brian and Agrawal, Pulkit and Isola, Phillip},
  journal={arXiv preprint arXiv:2103.10427},
  year={2021}
}

@article{egiazarian2024extreme,
  title={Extreme compression of large language models via additive quantization},
  author={Egiazarian, Vage and Panferov, Andrei and Kuznedelev, Denis and Frantar, Elias and Babenko, Artem and Alistarh, Dan},
  journal={arXiv preprint arXiv:2401.06118},
  year={2024}
}

@software{brevitas,
  author       = {Franco, Giuseppe and Pappalardo, Alessandro and Fraser, Nicholas J},
  title        = {Xilinx/brevitas},
  year         = {2025},
  publisher    = {Zenodo},
  doi          = {10.5281/zenodo.3333552},
  url          = {https://doi.org/10.5281/zenodo.3333552}
}

@software{ist2022gptq,
    title        = {gptq},
    author       = {IST-DASLab},
    year         = 2022,
    journal      = {GitHub repository},
    publisher    = {GitHub},
    howpublished = {\url{https://github.com/ist-daslab/gptq}}
}

@inproceedings{hassibi2002expected,
  title={On the expected complexity of integer least-squares problems},
  author={Hassibi, Babak and Vikalo, Haris},
  booktitle={2002 IEEE International Conference on Acoustics, Speech, and Signal Processing},
  volume={2},
  pages={II--1497},
  year={2002},
  organization={IEEE}
}

@inproceedings{xu2023survey,
  title={A survey on model compression and acceleration for pretrained language models},
  author={Xu, Canwen and McAuley, Julian},
  booktitle={Proceedings of the AAAI Conference on Artificial Intelligence},
  year={2023}
}

@article{zhu2023survey,
  title={A survey on model compression for large language models},
  author={Zhu, Xunyu and Li, Jian and Liu, Yong and Ma, Can and Wang, Weiping},
  journal={arXiv preprint arXiv:2308.07633},
  year={2023}
}

@incollection{gholami2022survey,
  title={A survey of quantization methods for efficient neural network inference},
  author={Gholami, Amir and Kim, Sehoon and Dong, Zhen and Yao, Zhewei and Mahoney, Michael W and Keutzer, Kurt},
  booktitle={Low-power computer vision},
  pages={291--326},
  year={2022},
  publisher={Chapman and Hall/CRC}
}

@misc{qubitium2024gptqmodel,
  author = {ModelCloud.ai and qubitium@modelcloud.ai},
  title = {GPTQModel},
  publisher = {GitHub},
  journal = {GitHub repository},
  howpublished = {\url{https://github.com/modelcloud/gptqmodel}},
  note = {Contact: qubitium@modelcloud.ai},
  year = {2024},
}

@article{DCT,
author = {Strang, Gilbert},
title = {The Discrete Cosine Transform},
journal = {SIAM Review},
volume = {41},
number = {1},
pages = {135-147},
year = {1999},
doi = {10.1137/S0036144598336745},

URL = { 
    
        https://doi.org/10.1137/S0036144598336745
    
    

},
eprint = { 
    
        https://doi.org/10.1137/S0036144598336745
    
    

}
,
    abstract = { Abstract. Each discrete cosine transform (DCT) uses \(N\) real basis vectors whose components are cosines. In the DCT-4, for example, the \(j\)th component of \({\mathbf{v}}\_{k}\) is \(\cos(j+\frac{1}{2})(k+\frac{1}{2})\frac{\pi}{N}\). These basis vectors are orthogonal and the transform is extremely useful in image processing. If the vector \(\mathbf{x}\) gives the intensities along a row of pixels, its cosine series \(\sum c\_{k}{\mathbf{v}}\_{k}\) has the coefficients \(c\_{k}=(\mathbf{x},{\mathbf{v}}\_{k})/N\). They are quickly computed from a Fast Fourier Transform. But a direct proof of orthogonality, by calculating inner products, does not reveal how natural these cosine vectors are. We prove orthogonality in a different way. Each DCT basis contains the eigenvectors of a symmetric “second difference” matrix. By varying the boundary conditions we get the established transforms DCT-1 through DCT-4. Other combinations lead to four additional cosine transforms. The type of boundary condition (Dirichlet or Neumann, centered at a meshpoint or a midpoint) determines the applications that are appropriate for each transform. The centering also determines the period: \(N-1\) or \(N\) in the established transforms, \(N-\frac{1}{2}\) or \(N+\frac{1}{2}\) in the other four. The key point is that all these “eigenvectors of cosines” come from simple and familiar matrices. }
}

\appendix

\newpage
\section{Proof of lemmas for OPTQ Error Analysis}\label{sec:lemmas for OPTQ error}
The following lemma from  \cite{zhang2025qronos} shows that the OPTQ update  can be interpreted as the optimal adjustment of the remaining coordinates of $w$, chosen to best compensate—in a least-squares sense—for the quantization error introduced at the current step. 

\begin{lemma}[\cite{zhang2025qronos}]\label{thm:OPTQ1}
Equations \eqref{eq1} and \eqref{eq2} of OPTQ are equivalent to:
\begin{align}
    q_{t}&=\mathcal{Q}(w^{(t-1)}_t), \label{eq3}\\
    w^{(t)}_{\geq t+1}&=\underset{(v_{t+1},...,v_{N})\in\mathbb{R}^{N-t}}{\mathrm{argmin}}\frac{1}{2}\|(q_t - w^{(t-1)}_t)X_t + \sum_{j=t+1}^{N}(v_j - w^{(t-1)}_j)X_j\|_2^2.\label{eq4}
\end{align}
\end{lemma}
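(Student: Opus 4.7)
The lemma contains two equivalences. The equality $\eqref{eq1} = \eqref{eq3}$ is literally the same formula, so there is nothing to prove. All the content is in showing that the Cholesky-based update \eqref{eq2} coincides with the least-squares update \eqref{eq4}, and this is where I would focus.

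My plan is a two-step reduction. First, I would solve the unconstrained quadratic in \eqref{eq4} in closed form via the normal equations: writing $\Delta := q_t - w^{(t-1)}_t$ and $u := v_{\geq t+1} - w^{(t-1)}_{\geq t+1}$, the objective becomes $\tfrac{1}{2}\|\Delta X_t + X_{\geq t+1}u\|_2^2$, whose unique minimizer (assuming $X_{\geq t+1}$ has full column rank; otherwise using the Moore--Penrose inverse) is $u = -\Delta\,(X_{\geq t+1}^\top X_{\geq t+1})^{-1} X_{\geq t+1}^\top X_t$. Matching this with \eqref{eq2} immediately reduces the lemma to the matrix identity
\begin{equation*}
\frac{L_{\geq t+1,t}}{L_{tt}} \;=\; -H_{\geq t+1,\geq t+1}^{-1}\, H_{\geq t+1, t},
\end{equation*}
where $H := X^\top X$ and $H^{-1} = LL^\top$ with $L$ lower triangular.

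Second, I would prove this Cholesky identity by exploiting triangularity. Multiplying $LL^\top = H^{-1}$ on the left by $H$ gives $HL = L^{-\top}$. Reading off the $t$-th column and using that $L$ is lower triangular (so $L_{it}=0$ for $i<t$) while $L^{-\top}$ is upper triangular (so $(L^{-\top})_{it}=0$ for $i>t$), the restriction to rows $\geq t$ reads $H_{\geq t,\geq t}\, L_{\geq t, t} = (L^{-\top})_{tt}\, e_1$, where $e_1$ is the first standard basis vector in $\mathbb{R}^{N-t+1}$. Thus $L_{\geq t,t}$ is a scalar multiple of the first column of $H_{\geq t,\geq t}^{-1}$, and that scalar cancels when one forms the ratio $L_{\geq t+1,t}/L_{tt}$. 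A short block-inverse / Schur-complement computation applied to $H_{\geq t,\geq t}$ (partitioning off its first row and column) identifies the normalized first column of $H_{\geq t,\geq t}^{-1}$ as the vector whose top entry is $1$ and whose remaining entries form $-H_{\geq t+1,\geq t+1}^{-1}H_{\geq t+1,t}$, which is exactly what is needed.

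The main obstacle is the careful bookkeeping across three overlapping block structures --- the triangularity of $L$, the triangularity of $L^{-\top}$, and the block split of $H$ at index $t$ --- but once the relation $HL = L^{-\top}$ is isolated and restricted to the bottom block, the rest is routine linear algebra. A small technical point is that the argument presumes $H$ to be invertible; this holds either by the full-column-rank assumption on $X$ adopted in \cref{sec:OPTQ l2 error bound} or, more generally, via the dampening $\lambda > 0$ through the augmented-matrix reduction $\widehat{X}^\top\widehat{X} = X^\top X + \lambda I$ already used throughout the paper.
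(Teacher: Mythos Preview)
The paper does not supply its own proof of this lemma; it is quoted from \cite{zhang2025qronos} and stated without argument. So there is no ``paper's proof'' to compare against.

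Your argument is correct. The reduction to the identity $L_{\geq t+1,t}/L_{tt} = -H_{\geq t+1,\geq t+1}^{-1}H_{\geq t+1,t}$ via the normal equations is the right move, and your derivation of that identity is clean: the step $HL = L^{-\top}$ is the key observation, and combining the lower-triangularity of $L$ (to kill the top block of the $t$-th column) with the upper-triangularity of $L^{-\top}$ (to kill the bottom block of the right-hand side) isolates exactly the relation $H_{\geq t,\geq t}\,L_{\geq t,t}\propto e_1$. The Schur-complement identification of the first column of $H_{\geq t,\geq t}^{-1}$ then finishes it. Your caveat about invertibility is also appropriate and matches how the paper handles this issue (full-rank $X$ or $\lambda>0$ via the augmented $\widehat{X}$).
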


\subsection{Proof of \cref{lemma:OPTQ_error_recur}}\label{appendix:OPTQ_error_recur}
\paragraph{\cref{lemma:OPTQ_error_recur} (Restated)}
\new{\textit{Let $X \in \mathbb{R}^{m \times N}$ be full rank with $m \geq N$, and let $w \in \mathbb{R}^N$. Running OPTQ (\cref{OPTQ}) with $\lambda = 0$ (so $H = X^\top X$), the error defined in \eqref{eq:err_def} satisfies
\begin{align}
    e_t &= P_{X_{\geq t+1}^{\perp}}(w^{(t-1)}_t - q_t) X_t + e_{t-1} \text{\quad and \quad } e_N =\sum_{j=1}^{N} P_{X_{\geq j+1}^{\perp}}(w^{(j-1)}_j - q_j) X_j.\label{final error_appendix}
\end{align}
Moreover, the resulting quantized vector $q$ satisfies
\begin{gather}\label{eq:e_norm_appendix}
\|Xw - Xq\|_2^2 = \sum_{j=1}^N |w^{(j-1)}_j - q_j|^2 \, \|P_{X_{\geq j+1}^{\perp}} X_j\|_2^2. 
\end{gather}
In particular, this implies that when using the infinite alphabet $\mathcal{A}^\delta$
\begin{gather}\label{eq:e_norm_bound_appendix}
\|Xw - Xq\|_2 \leq \frac{\delta}{2} \sqrt{N} \cdot \min\left\{ \max_j \|P_{X_{\geq j+1}^\perp} X_j\|_2, \ \sqrt{\frac{\|X\|_F^2}{N}} \right\}.
\end{gather}}}
\begin{proof}
Recall we use $e_{t} $ to denote the  error at step $t$, where $e_{t}=Xw - \sum_{j=1}^{t}q_{j}X_{j} - \sum_{j=t+1}^{N}w^{(t)}_{j}X_{j}$. It is easy to observe that $e_{N}=Xw-Xq$ and $e_0 = 0$. We then have
\begin{equation}\label{eq:e_inductioN'}
\begin{aligned}
	e_{t}&=Xw - \sum_{j=1}^{t}q_{j}X_{j} - \sum_{j=t+1}^{N}w^{(t)}_{j}X_{j}\\
    &=Xw - \sum_{j=1}^{t-1}q_{j}X_{j} - (q_t - w^{(t-1)}_{t})X_{t}	- w^{(t-1)}_{t}X_{t}-\sum_{j=t+1}^{N}w^{(t)}_{j}X_{j}\\
    &=(w^{(t-1)}_{t}-q_t)X_{t} + \left(Xw - \sum_{j=1}^{t-1}q_{j}X_{j}-\sum_{j=t}^{N}w_{j}^{(t-1)}X_{j}\right)+\sum_{j=t+1}^{N}(w^{(t-1)}_{j}-w^{(t)}_{j})X_{j}\\
    &=(w^{(t-1)}_{t}-q_t) X_{t}+ \sum_{j=t+1}^{N}(w^{(t-1)}_{j}-w^{(t)}_{j})X_{j}+e_{t-1}.
\end{aligned}
\end{equation}
By \cref{eq4}, $ (w^{(t)}_{t+1},...,w^{(t)}_{N})^{\top} $ is chosen such that
\begin{align*}
\|(w^{(t-1)}_t-q_{t})X_{t}+\sum_{j=t+1}^{N}&({w}^{(t-1)}_{j}-w^{(t)}_{j})X_{j}\|_2^2 	=\\ &\underset{(v_{t+1},...,v_{N})\in\mathbb{R}^{N-t}}{\mathrm{min}} \|(w^{(t-1)}_t-q_t)X_{t}+\sum_{j=t+1}^{N}(w^{(t-1)}_{j}-v_{j})X_{j}\|_2^2 .
\end{align*}
So, \(
(w^{(t-1)}_{t}-q_t) X_{t}+ \sum_{j=t+1}^{N}(w^{(t-1)}_{j}-w^{(t)}_{j})X_{j}=P_{X_{\geq t+1}^{\perp}}(X_{t}(w^{(t-1)}_t-q_t)).
\)
Combining this and \cref{eq:e_inductioN'}, we deduce
\begin{align*}
e_{t}&=(w^{(t-1)}_{t}-q_t) X_{t}+ \sum_{j=t+1}^{N}(w^{(t-1)}_{j}-w^{(t)}_{j})X_{j}+e_{t-1}
=P_{X_{\geq t+1}^{\perp}}(X_{t}(w^{(t-1)}_t-q_t))+e_{t-1}.
\end{align*}
This gives 
\begin{gather*}
e_{t}=P_{X_{\geq t+1}^{\perp}}(X_{t}(w^{(t-1)}_t-q_t))+e_{t-1},
\end{gather*}
  which when applied recursively yields
  \begin{gather*}
    e_N=\sum_{j=1}^{N}P_{X_{\geq j+1}^{\perp}}(w^{(j-1)}_j-q_j)X_{j}.
\end{gather*}
For \cref{eq:e_norm_appendix}, it suffices to show for $i\neq j$, $P_{X_{\geq j+1}^{\perp}}X_{j}$ is orthogonal to $P_{X_{\geq i+1}^{\perp}}X_{i}$.
     Let $v_j =P_{X_{\geq j+1}^{\perp}}X_{j} $. Without loss of generality let $1\leq i < j \leq N$. We have
 \begin{align*}
     \langle v_{i}, v_j \rangle &= \langle P_{X_{\geq i+1}^{\perp}}X_{i}, P_{X_{\geq j+1}^{\perp}}X_{j}\rangle\\
     &= \langle  P_{X_{\geq j+1}^{\perp}}P_{X_{\geq i+1}^{\perp}}X_{i},X_{j}\rangle\\
     &=\langle P_{X_{\geq i+1}^{\perp}}X_{i},X_{j}\rangle\\
     &=\langle X_{i},P_{X_{\geq i+1}^{\perp}}X_{j}\rangle=0.
 \end{align*}
 The third equality is because $X_{\geq i+1}^{\perp} \subseteq X_{\geq j+1}^{\perp}$, and the last equality is due to the fact that $X_{j} \in \text{span}(X_{\geq i+1})$. This proves \cref{eq:e_norm_appendix}. Then \cref{eq:e_norm_bound_appendix} is due to the simple observation that $|(w^{(j-1)}_j-q_j)| \leq \dfrac{\delta}{2} $.
\end{proof}

\subsection{Proof of \cref{lemma:bound_one_neuron_OPTQ}}
\label{proof:bound_one_neuron_OPTQ}
\paragraph{\cref{lemma:bound_one_neuron_OPTQ} (Restated)}
\new{\textit{Let $q$ be the output of quantizing $w$ with OPTQ with stochastic quantizer $\mathcal{Q}_{stoc}$, then $ Xw - Xq \prec_{cx} \mathcal{N}(0, \Sigma) $, where \begin{align*}
 	 \Sigma&=\dfrac{\pi\delta^{2}}{2}\sum_{j=1}^{N}P_{X_{\geq j+1}^{\perp}}X_{j}X_{j}^{\top} P_{X_{\geq j+1}^{\perp}}\preceq \dfrac{\pi \delta^{2}}{2}\max_{j}\|P_{X^\perp_{\geq j+1}}X_{j}\|_2^{2} I.
 \end{align*}}}
\begin{proof}
We use induction to prove the lemma. The induction hypothesis is 
\begin{gather*}
	e_{t}\prec_{cx}\mathcal{N}(0, \Sigma_{t}),
\end{gather*}
where $ \Sigma_{t} $ is defined inductively with $ \Sigma_{0}=0 $ and 
\begin{gather*}
	\Sigma_{t}=\dfrac{\pi\delta^{2}}{2}P_{X_{\geq t+1}^{\perp}}X_{t}X_{t}^{\top}P_{X_{\geq t+1}^{\perp}}+\Sigma_{t-1}.
\end{gather*}
The base case $ e_{0}=0 \prec_{cx}\mathcal{N}(0, 0)$ is obvious.  Now assume the proposition is true for $t-1 $. Then, by  \cref{lemma:OPTQ_error_recur}, we have
\begin{gather*}
	e_{t}=P_{X_{\geq t+1}^{\perp}}(X_{t}(w^{(t-1)}_t-q_t)) + e_{t-1}.
\end{gather*}
Further, we observe that $e_{t-1}$ and the quantized values $q_1, \dots, q_{t-1}$ determine each other uniquely. First, if $q_1, \dots, q_{t-1}$ are fixed, then $e_{t-1}$ is also fixed due to the update rule in \cref{eq4} and the definition of $e_{t-1}$. Conversely, if $e_{t-1}$ is fixed, then from \cref{lemma:OPTQ_error_recur}, we have
\begin{gather*}
    e_{t-1}=\sum_{j=1}^{t-1}P_{X_{\geq j+1}^{\perp}}(X_{j}(w^{(j-1)}_j-q_j)).
\end{gather*}

In the proof of \cref{prop:OPTQ_deter_error}, it was shown that the terms in this sum are mutually orthogonal. Thus, by taking inner products with the deterministic vectors $P_{X_{\geq j+1}^{\perp}}(X_j)$ for $j = 1,...,t - 1$, we can recover each rounding error $w^{(j-1)}_j - q_j$. Starting with $w^{(0)}_1 - q_1$, we can recover $q_1$, which allows us to compute $w^{(1)}_{\geq 2}$. Then, using $w^{(1)}_2 - q_2$ and $w^{(1)}_{\geq 2}$, we can recover $q_2$. Repeating this process iteratively, we can reconstruct all $q_1, \dots, q_{t-1}$. Therefore, conditioning on the random variable $e_{t-1}$ is equivalent to conditioning on the random variables $q_1, \dots, q_{t-1}$. Based on this key observation, we notice that $$ \left(w^{(t-1)}_t-q_t\mid e_{t-1}\right)=\left(w^{(t-1)}_t-\Q(w^{(t-1)}_t)\mid e_{t-1}\right)\sim_{D} \left(w^{(t-1)}_t-\Q(w^{(t-1)}_t)\mid q_1,\dots,q_{t-1}\right) $$ is mean zero and bounded by $ \delta $. 
As a result, the only source of randomness arises from the stochastic nature of the RTN operator $\mathcal{Q}_{\text{stoc}}$. Then by \cref{lemma:convex_order}, \cref{lemma:cx-bounded}, we know 
\begin{gather*}
	w^{(t-1)}_t-q_t \mid e_{t-1}\prec_{cx} \mathcal{N}\Big(0, \dfrac{\pi \delta^{2}}{2}\Big).
\end{gather*}
Next by \cref{lemma:convex_order}, \cref{linear}, we obtain
\begin{gather*}
	P_{X_{\geq t+1}^{\perp}}(X_{t}(w^{(t-1)}_t-q_t)) \mid e_{t-1} \prec_{cx} \mathcal{N}\Big(0,\dfrac{\pi\delta^{2}}{2}P_{X_{\geq t+1}^{\perp}}X_{t}X_{t}^{\top} P_{X_{\geq t+1}^{\perp}}\Big).
\end{gather*}
But the induction hypothesis yields
\begin{gather*}
	e_{t-1}\prec_{cx}\mathcal{N}(0,\Sigma_{t-1}),
\end{gather*}
so by \cref{lemma:convex_order}, \cref{lemma:cx-sum} with $U=e_{t-1}$, $V-U=P_{X_{\geq t+1}^{\perp}}(X_{t}(w^{(t-1)}_t-q_t))$, $E=\mathcal{N}(0,\Sigma_{t-1})$ and $F=\mathcal{N}\Big(0,\dfrac{\pi\delta^{2}}{2}P_{X_{\geq t+1}^{\perp}}X_{t}X_{t}^{\top} P_{X_{\geq t+1}^{\perp}}\Big)$, we have
\begin{gather*}
	e_{t}=P_{X_{\geq t+1}^{\perp}}(X_{t}(w^{(t-1)}_t-q_t)) + e_{t-1}\prec_{cx}\mathcal{N}\Big(0,\dfrac{\pi\delta^{2}}{2}P_{X_{\geq t+1}^{\perp}}X_{t}X_{t}^{\top} P_{X_{\geq t+1}^{\perp}}\Big) + \mathcal{N}(0,\Sigma_{t-1}),
\end{gather*}
where the two Gaussian distributions on the right hand side are independent of each other.
As a result,
\begin{gather*}
	e_{t}\prec_{cx}\mathcal{N}\Big(0,\dfrac{\pi\delta^{2}}{2}P_{X_{\geq t+1}^{\perp}}X_{t}X_{t}^{\top} P_{X_{\geq t+1}^{\perp}}+ \Sigma_{t-1}\Big)=\mathcal{N}(0,\Sigma_{t}).
\end{gather*}
This completes the induction. Then we have
\begin{gather*}
	Xw-Xq = e_{N} \prec_{cx}\mathcal{N}(0, \Sigma_{N}).
\end{gather*}
And by the definition of $ \Sigma_{N} $, we know
\begin{gather*}	\Sigma_{N}=\Sigma_{0}+\sum_{j=1}^{N}\dfrac{\pi\delta^{2}}{2}P_{X_{\geq j+1}^{\perp}}X_{j}X_{j}^{\top} P_{X_{\geq j+1}^{\perp}}=\dfrac{\pi\delta^{2}}{2}\sum_{j=1}^{N}P_{X_{\geq j+1}^{\perp}}X_{j}X_{j}^{\top} P_{X_{\geq j+1}^{\perp}}=\Sigma.
\end{gather*}
This completes the proof of the covariance calculation. We now proceed to its upper bound.

 \(
 	 \Sigma=\dfrac{\pi\delta^{2}}{2}\sum_{j=1}^{N}P_{X_{\geq j+1}^{\perp}}X_{j}X_{j}^{\top} P_{X_{\geq j+1}^{\perp}},
 \)
 so it is a sum of $N$ rank 1 matrices of the form $P_{X_{\geq j+1}^{\perp}}X_{j}X_{j}^{\top} P_{X_{\geq j+1}^{\perp}}$. Let $v_j =P_{X_{\geq j+1}^{\perp}}X_{j} $. From the proof of \cref{prop:OPTQ_deter_error}, we know $\{v_j\}_{j=1}^N$ are mutually orthogonal. Thus $v_1,\dots,v_N$ form a complete set of eigenvectors of $\Sigma$ as $\Sigma = \sum_{j=1}^N v_j v^\top_{j}$ and $\{v_j\}_{j=1}^{N}$ are mutually orthogonal. Their corresponding eigenvalues are $\|v_j\|_2^2$. As a result, $\|\Sigma\|_{\textrm{op}} = \max_{j}\|v_j\|_2^2=  \max_{j} \|P_{X^\perp_{\geq j+1}} X_{j}\|_2$. This completes the proof.
\end{proof}

\section{Auxiliary Lemmas}
{
\begin{lemma}\label{lemma:projection_upper_bound}
    Suppose $X\in\mathbb{R}^{m\times N}$. Let $\widehat{X}$ be the matrix $\begin{pmatrix}
    X \\ 
    \sqrt{\lambda} I
\end{pmatrix}$ and $\sigma^{(j)}_{\min}$ be the smallest singular value of $X_{\geq j+1}$. Then
\begin{align*}
      \|P_{\widehat{X}^\perp_{\geq j+1}}\widehat{X}_{j}\|_2^2 \leq \begin{cases}
          \frac{\lambda}{(\sigma^{(j)}_{\min})^2+\lambda} \cdot \|X_{j}\|_2^2 + \lambda&\text{ when $m\leq N-j$}\\
          \|X_j\|_2^2 + \lambda&\text{ when $m > N-j$}
      \end{cases}.
\end{align*}
\end{lemma}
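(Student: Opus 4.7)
\textbf{Proof plan for \cref{lemma:projection_upper_bound}.}
The starting point is to rewrite the squared projection norm as a regression problem. Since $P_{\widehat{X}^\perp_{\geq j+1}}\widehat{X}_j$ is the residual of the least-squares projection of $\widehat{X}_j$ onto $\text{col}(\widehat{X}_{\geq j+1})$,
\[
  \|P_{\widehat{X}^\perp_{\geq j+1}}\widehat{X}_j\|_2^2
  \;=\; \min_{v\in\R^{N-j}}\bigl\|\widehat{X}_j - \widehat{X}_{\geq j+1}v\bigr\|_2^2.
\]
Using the block structure $\widehat{X}_j = (X_j^\top,\sqrt{\lambda}\,e_j^\top)^\top$ and $\widehat{X}_{\geq j+1}=(X_{\geq j+1}^\top,\sqrt{\lambda}\,I_{\geq j+1}^\top)^\top$, and observing that $e_j - I_{\geq j+1}v$ has a $1$ in coordinate $j$ and $-v$ in coordinates $j{+}1,\dots,N$, one gets
\[
  \|P_{\widehat{X}^\perp_{\geq j+1}}\widehat{X}_j\|_2^2
  \;=\; \lambda + \min_{v}\Bigl(\|X_j - X_{\geq j+1}v\|_2^2 + \lambda\|v\|_2^2\Bigr).
\]
This is a ridge-regression objective whose minimum I can bound from above by plugging in any convenient $v$.

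For the case $m > N-j$, I would simply take $v=0$ in the display above. This yields $\min_v(\cdots)\leq \|X_j\|_2^2$ and hence $\|P_{\widehat{X}^\perp_{\geq j+1}}\widehat{X}_j\|_2^2 \leq \|X_j\|_2^2 + \lambda$, which is the second branch of the claim.

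For the case $m \leq N-j$, I would exploit that (in general position) $X_{\geq j+1}$ has full row rank, so $X_j$ lies in $\text{col}(X_{\geq j+1})=\R^m$. Let $v^{*}=X_{\geq j+1}^{\dagger}X_j$ be the minimum-norm solution of $X_{\geq j+1}v = X_j$, which satisfies $\|v^{*}\|_2^2 = X_j^\top(X_{\geq j+1}X_{\geq j+1}^\top)^{-1}X_j \leq \|X_j\|_2^2/(\sigma_{\min}^{(j)})^2$. Rather than plugging in $v^{*}$ directly, I would plug in the scaled candidate $v=\alpha v^{*}$ for $\alpha\in[0,1]$, which trades residual against penalty:
\[
  \|X_j - X_{\geq j+1}(\alpha v^{*})\|_2^2 + \lambda\|\alpha v^{*}\|_2^2
  \;\leq\; (1-\alpha)^2\|X_j\|_2^2 + \tfrac{\lambda\alpha^2}{(\sigma_{\min}^{(j)})^2}\|X_j\|_2^2.
\]
A quick one-variable optimization gives the optimal $\alpha = (\sigma_{\min}^{(j)})^2/((\sigma_{\min}^{(j)})^2+\lambda)$, at which the bracket collapses to $\lambda/((\sigma_{\min}^{(j)})^2+\lambda)$. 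Substituting back produces the first branch $\lambda\|X_j\|_2^2/((\sigma_{\min}^{(j)})^2+\lambda) + \lambda$.

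The main subtlety I expect is the full-row-rank assumption needed for $X_{\geq j+1}^{\dagger}X_j$ to reproduce $X_j$ exactly in the case $m\leq N-j$. If $X_{\geq j+1}$ is rank deficient, $X_j$ may have a component orthogonal to $\text{col}(X_{\geq j+1})$, in which case the SVD computation $\min_v(\cdots)=\sum_i \lambda a_i^2/(\sigma_i^2+\lambda) + \|(I-UU^\top)X_j\|_2^2$, with $a=U^\top X_j$, no longer collapses to the stated bound, and one would fall back to the trivial $\|X_j\|_2^2+\lambda$ bound. Since $\sigma_{\min}^{(j)}$ is taken to be the smallest nonzero singular value (and elsewhere general position is invoked for such arguments, cf.\ \cref{rem:decreasing}), the proof proceeds under that assumption; the scaled-candidate argument above then delivers the claim.
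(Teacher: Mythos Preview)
Your proof is correct and takes a genuinely different route from the paper. Both begin with the least-squares reformulation
\[
\|P_{\widehat{X}^\perp_{\geq j+1}}\widehat{X}_j\|_2^2 = \min_{v}\|\widehat{X}_j-\widehat{X}_{\geq j+1}v\|_2^2,
\]
and both arrive at the ridge form $\lambda+\min_v(\|X_j-X_{\geq j+1}v\|_2^2+\lambda\|v\|_2^2)$. From there the paths diverge. The paper solves the ridge problem in closed form to obtain the equality $X_j^\top(I-X_{\geq j+1}(X_{\geq j+1}^\top X_{\geq j+1}+\lambda I)^{-1}X_{\geq j+1}^\top)X_j+\lambda$, then diagonalizes via the SVD of $X_{\geq j+1}$ to read off the operator norm of the bracketed matrix in each case. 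You instead upper-bound the minimum variationally by plugging in explicit test vectors: $v=0$ in the wide case, and a one-parameter family $v=\alpha X_{\geq j+1}^\dagger X_j$ in the tall case, optimizing over $\alpha$. Your argument is more elementary—no matrix inversion identity, no SVD bookkeeping—and makes transparent why the factor $\lambda/((\sigma^{(j)}_{\min})^2+\lambda)$ appears as the optimal tradeoff between residual and penalty. The paper's approach, on the other hand, yields an exact expression for the projection norm before bounding, and handles the rank-deficient case automatically (the operator norm simply becomes $1$ when $\sigma^{(j)}_{\min}=0$), whereas you need to treat that situation separately by falling back to $v=0$. Since the stated bound reduces to $\|X_j\|_2^2+\lambda$ when $\sigma^{(j)}_{\min}=0$, your fallback covers it and the argument is complete.
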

\begin{proof}
    Let $X_{\geq j+1} = U^{(j)} \Sigma^{(j)} V^{(j)\top}$ be the full SVD of ${X}_{\geq j+1},$ where $U^{(j)}\in\mathbb{R}^{m \times m}$, $\Sigma^{(j)}\in\mathbb{R}^{m\times (N-j)}$ and $V^{(j)}\in \mathbb{R}^{(N-j) \times (N-j)}$. Since $\|P_{\widehat{X}^\perp_{\geq j+1}}\widehat{X}_{j}\|_2^2 = \min_{b\in\mathbb{R}^{N-j}} \|\widehat{X}_{j} - \widehat{X}_{\geq j+1} b\|_2^2$, solving this $\ell_2$ minimization problem yields\begin{gather*}
    \|P_{\widehat{X}^\perp_{\geq j+1}}\widehat{X}_{j}\|_2^2 = X_j^\top (I - X_{\geq j+1}(X_{\geq j+1}^\top X_{\geq j+1} + \lambda I)^{-1} X_{\geq j+1}^\top)X_j + \lambda.
\end{gather*}
Then, using the SVD of $X_{\geq j+1}$, we have
\begin{align*}
      \|P_{\widehat{X}^\perp_{\geq j+1}}\widehat{X}_{j}\|_2^2 &= X_j^\top U^{(j)}(I -\Sigma^{(j)}(\Sigma^{(j)\top}\Sigma^{(j)} + \lambda I)^{-1}\Sigma^{(j)\top})U^{(j)\top}X_j + \lambda\\
      &\leq \|I -\Sigma^{(j)}(\Sigma^{(j)\top}\Sigma^{(j)} + \lambda I)^{-1}\Sigma^{(j)\top}\|_{op}\cdot \|X_j\|_2^2 + \lambda.
\end{align*}

In the case when $m > N-j$, let $s^{(j)}=(\sigma^{(j)}_{1}, \dots, \sigma^{(j)}_{N-j}) $ are the singular values of $X_{\geq j+1}$ in increasing order. we have $\Sigma^{(j)}=\begin{pmatrix}
     \text{diag} (s^{(j)})\\ 
   0
\end{pmatrix}.$ Then one can compute 
\begin{align*}
    \Sigma^{(j)}(\Sigma^{(j)\top}\Sigma^{(j)} + \lambda I)^{-1}\Sigma^{(j)\top}&=I -\begin{pmatrix}
        \text{diag}(r^{(j)})  & 0\\
        0&0
    \end{pmatrix}.
\end{align*}
where $$r^{(j)}=\left(\frac{(\sigma^{(j)}_{1})^2}{(\sigma^{(j)}_1)^2 + \lambda},\dots,\frac{(\sigma^{(j)}_{N-j})^2}{(\sigma^{(j)}_{N-j})^2 + \lambda}\right).$$ Then 
\begin{gather*}
    \left\|I -  \Sigma^{(j)}(\Sigma^{(j)\top}\Sigma^{(j)} + \lambda I)^{-1}\Sigma^{(j)\top}\right\|_{op}=\left\|I -\begin{pmatrix}
        \text{diag}(r^{(j)})  & 0\\
        0&0
    \end{pmatrix}\right\|_{op}=1.
\end{gather*}

In the case when $m \leq N-j$, let $s^{(j)}=(\sigma^{(j)}_{1}, \dots, \sigma^{(j)}_{m}) $ are the singular values of $X_{\geq j+1}$ in increasing order. we have $\Sigma^{(j)}=\begin{pmatrix}
     \text{diag} (s^{(j)})& 
   0
\end{pmatrix}.$ Then similarly, one has 
\begin{gather*}
    \left\|I -  \Sigma^{(j)}(\Sigma^{(j)\top}\Sigma^{(j)} + \lambda I)^{-1}\Sigma^{(j)\top}\right\|_{op}=\left\|I -\begin{pmatrix}
        \text{diag}(r^{(j)})
    \end{pmatrix}\right\|_{op}=\frac{\lambda}{(\sigma^{(j)}_1)^2+\lambda}.
\end{gather*}

Combining the above two cases, one can deduce
\begin{align*}
      \|P_{\widehat{X}^\perp_{\geq j+1}}\widehat{X}_{j}\|_2^2 &\leq \|I -\Sigma^{(j)}(\Sigma^{(j)\top}\Sigma^{(j)} + \lambda I)^{-1}\Sigma^{(j)\top}\|_{op}\cdot \|X_j\|_2^2 + \lambda\\
      &\leq \begin{cases}
          \frac{\lambda}{(\sigma^{(j)}_{\min})^2+\lambda} \cdot \|X_{j}\|_2^2 + \lambda&\text{ when $m\leq N-j$}\\
          \|X_j\|_2^2 + \lambda&\text{ when $m > N-j$}
      \end{cases}.
\end{align*}
\end{proof}
}
\begin{lemma}\label{lem:gen_pos}
    Let $X\in\mathbb{R}^{m\times N}$ be a matrix that is in general position with $m < N$. Use $\sigma^{(j)}_{\min}$ to denote the smallest singular value of $X_{\geq j+1}$. Then the sequence $\sigma_{\min}^{(j)}$ is decreasing in~$j$ when $m\leq N - j$.
\end{lemma}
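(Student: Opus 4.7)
The plan is to compare consecutive terms in the sequence by observing that $X_{\geq j+2}$ is obtained from $X_{\geq j+1}$ by removing a single column, namely $X_{j+1}$, and that column deletion can never increase a singular value. Concretely, I would start from the identity
\[
X_{\geq j+1} X_{\geq j+1}^\top \;=\; X_{\geq j+2} X_{\geq j+2}^\top + X_{j+1} X_{j+1}^\top,
\]
which follows by expanding the Gram matrix as a sum of rank-one outer products over columns. Because $X_{j+1} X_{j+1}^\top \succeq 0$, this immediately yields the Loewner ordering
\[
X_{\geq j+1} X_{\geq j+1}^\top \;\succeq\; X_{\geq j+2} X_{\geq j+2}^\top.
\]

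Next I would invoke the monotonicity of eigenvalues of Hermitian matrices (Weyl's monotonicity theorem): if $A \succeq B$ are $m \times m$ symmetric, then $\lambda_k(A) \geq \lambda_k(B)$ for every $k \in \{1, \ldots, m\}$. Applied with $k = m$, this gives
\[
\lambda_m\!\bigl(X_{\geq j+1} X_{\geq j+1}^\top\bigr) \;\geq\; \lambda_m\!\bigl(X_{\geq j+2} X_{\geq j+2}^\top\bigr).
\]
Since the eigenvalues of $A A^\top$ are the squares of the singular values of $A$, taking square roots yields $\sigma^{(j)}_{\min} \geq \sigma^{(j+1)}_{\min}$, which is the claimed monotonicity.

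I would then briefly verify that the smallest-singular-value indexing is consistent throughout the asserted range. The hypothesis $m \leq N - j$ ensures that $X_{\geq j+1}$ has at least $m$ columns, so its smallest singular value is its $m$-th one; to compare $\sigma^{(j)}_{\min}$ with $\sigma^{(j+1)}_{\min}$ we need the same to hold for $X_{\geq j+2}$, i.e., $m \leq N-j-1$, which is precisely when both terms lie in the regime named by the lemma. The general-position hypothesis is not used to obtain the inequality itself; it only guarantees these smallest singular values are strictly positive, so the sequence stays in $(0, \infty)$.

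I expect no serious obstacle here: the argument is a one-line application of the Loewner order together with Weyl monotonicity. The only mild care required is bookkeeping around the boundary case $j+1 = N-m$, where $X_{\geq j+2}$ becomes square but still has exactly $m$ singular values, so the comparison remains valid.
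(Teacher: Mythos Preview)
Your proposal is correct and follows essentially the same route as the paper: both arguments establish the Loewner ordering $X_{\geq j}X_{\geq j}^\top \succeq X_{\geq j+1}X_{\geq j+1}^\top$ via the rank-one column decomposition, then conclude monotonicity of the smallest eigenvalue. The only cosmetic difference is that the paper writes out the Rayleigh-quotient $\min_{\|z\|=1} z^\top(\cdot)z$ explicitly rather than invoking Weyl's theorem by name, and it indexes the comparison as $\sigma^{(j-1)}_{\min}\geq\sigma^{(j)}_{\min}$ instead of $\sigma^{(j)}_{\min}\geq\sigma^{(j+1)}_{\min}$.
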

\begin{proof}
    We compare $\sigma^{(j-1)}_{\min}$ and $\sigma^{(j)}_{\min}$ for $1 \leq j \leq N-m$. Since $1 \leq j \leq N-m$, both $X_{\geq j}$ and $X_{\geq j+1}$ are full-rank (of rank $m$). By definition, $\sigma^{(j-1)}_{\min}$ and $\sigma^{(j)}_{\min}$ are the smallest non-zero eigenvalues of $X_{\geq j}^\top X_{\geq j}$ and $X_{\geq j+1}^\top X_{\geq j+1}$, respectively. Notice that $X_{\geq j}^\top X_{\geq j}$ and $X_{\geq j} X_{\geq j}^\top$ share the same non-zero eigenvalues. Then we know $\sigma^{(j-1)}_{\min}$ is the smallest eigenvalue of $X_{\geq j} X_{\geq j}^\top$. This is because $X_{\geq j} X_{\geq j}^\top$ is invertible due to the fact that $X_{\geq j}$ is full row rank. Similarly, $\sigma^{(j)}_{\min}$ is the smallest eigenvalue of $X_{\geq j+1} X_{\geq j+1}^\top$. For any $z\in\mathbb{R}^{m}$, we have
    \begin{gather*}
        z^\top X_{\geq j+1} X_{\geq j+1}^\top z = \sum_{t=j+1}^{N} z^\top X_{t}X_{t}^\top z \leq \sum_{t=j}^{N} z^\top X_{t}X_{t}^\top z = z^\top X_{\geq j} X_{\geq j}^\top z.
    \end{gather*}
    Thus 
    \begin{gather*}
        \sigma^{(j)}_{\min} = \min_{\|z\|=1}z^\top X_{\geq j+1} X_{\geq j+1}^\top z \leq \min_{\|z\|=1}z^\top X_{\geq j} X_{\geq j}^\top z = \sigma^{(j-1)}_{\min}.
    \end{gather*}
\end{proof}

\section{Properties of Convex Ordering}\label{sec:cvx ord prop}

The following properties hold for convex ordering. Proofs can be found in \cite{alweiss2021discrepancy} and \cite{zhang2023spfq}.
\begin{lemma}\label{lemma:convex_order}
	\begin{enumerate}
		\item \label{trans} (Lemma 2.3 in \cite{alweiss2021discrepancy}) If $ X\prec_{cx}Y $ and $ Y\prec_{cx}Z $, then $ X\prec_{cx}Z $.
		\item \label{linear} (Lemma 2.4 in \cite{alweiss2021discrepancy})	If $ X\prec_{cx}Y $, then for any linear transformation $ M $ on $ \mathbb{R}^{n} $, we have $ MX\prec_{cx}MY $.
		\item \label{lemma:normal_dom} (Lemma A.2 in \cite{zhang2023spfq})	If $ A$ and $B$ are two positive semi-definite matrices and $ A \preceq B $, then $ \mathcal{N}(0, A)\prec_{cx} \mathcal{N}(0, B) $.
		\item \label{lemma:cx-sum} (Lemma 2.5 in \cite{alweiss2021discrepancy}) Consider random vectors $U$, $V$, $E$, and $F$. Let $U$ and $V$ live on the same probability space, and let $E$ and $F$ be independent. Suppose that $U\prec_\mathrm{cx} E$ and $(V-U)|U \prec_\mathrm{cx} F$. Then $V\prec_\mathrm{cx} E+F$.
		\item \label{lemma:cx-bounded} (Lemma 2.6 in \cite{alweiss2021discrepancy}) 	Let $X$ be a real-valued random variable with $\mathbb{E} X = 0$ and $|X|\leq C$. Then $X\prec_\mathrm{cx}\mathcal{N}\bigl(0, \frac{\pi C^2}{2}\bigr)$.
		\item \label{lemma:cx-gaussian-tail} (Lemma B.2 in \cite{zhang2023spfq})	Let $X$ be an $n$-dimensional random vector such that $X\prec_\mathrm{cx}\mathcal{N}(\mu, \sigma^2 I)$, and let $\alpha>0$. Then 
		\[
		\mathbb{P}\bigl(\|X-\mu\|_\infty \leq \alpha \bigr) \geq 1- \sqrt{2} n e^{-\frac{\alpha^2}{4\sigma^2}}.
		\]
	\end{enumerate}
\end{lemma}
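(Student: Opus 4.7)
The plan is to establish the six items of \cref{lemma:convex_order} in the order listed, since each builds on the previous ones. Items~\ref{trans} and~\ref{linear} are one-line consequences of the definition: transitivity uses $\mathbb{E}f(X)\le\mathbb{E}f(Y)\le\mathbb{E}f(Z)$ for every convex $f$, and the linear-map property uses that $g\circ M$ is convex whenever $g$ is convex and $M$ is linear, so the comparison for $MX$ versus $MY$ reduces to the comparison for $X$ versus $Y$ tested against $g\circ M$.

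For item~\ref{lemma:normal_dom}, I would decompose $B=A+C$ with $C\succeq 0$ and write $\mathcal{N}(0,B)\stackrel{d}{=}G_A+G_C$ for independent $G_A\sim\mathcal{N}(0,A)$ and $G_C\sim\mathcal{N}(0,C)$. For any convex $f$, conditioning on $G_A$ and applying Jensen's inequality to the mean-zero $G_C$ yields $\mathbb{E}[f(G_A+G_C)\mid G_A]\ge f(G_A)$, and taking outer expectations finishes the comparison. For item~\ref{lemma:cx-sum}, the trick is to average out $F$ first: define $h(u):=\mathbb{E}f(u+F)$, which is convex in $u$ because convexity survives expectation of the dummy argument. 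Then $(V-U)\mid U\prec_{cx}F$ gives $\mathbb{E}[f(V)\mid U]\le h(U)$, after which $U\prec_{cx}E$ applied to the convex $h$ yields $\mathbb{E}h(U)\le\mathbb{E}h(E)=\mathbb{E}f(E+F)$.

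The main obstacle is item~\ref{lemma:cx-bounded}, where one has to obtain the sharp constant $\pi/2$ rather than a naive $1$. My strategy is to invoke the one-dimensional characterization that $X\prec_{cx}Y$ iff $\mathbb{E}(X-a)_+\le\mathbb{E}(Y-a)_+$ for every $a\in\mathbb{R}$. Among mean-zero laws supported in $[-C,C]$, the symmetric two-point measure $\tfrac12(\delta_C+\delta_{-C})$ is extremal in the convex order (a standard Strassen/Choquet fact about mean-preserving spreads), so it suffices to verify $\tfrac12(\delta_C+\delta_{-C})\prec_{cx}\mathcal{N}(0,\pi C^2/2)$, which reduces to a one-variable calculus comparison of the two call-option functions. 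The factor $\pi/2$ emerges from matching first absolute moments: since $\mathbb{E}|\mathcal{N}(0,\sigma^2)|=\sigma\sqrt{2/\pi}$, the choice $\sigma^2=\pi C^2/2$ is exactly the threshold at which the Gaussian call-option function first dominates that of $\pm C$ everywhere, and monotonicity checks finish the case.

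Finally, for item~\ref{lemma:cx-gaussian-tail} I would apply items~\ref{trans}--\ref{lemma:cx-bounded} as a black box. Given $X\prec_{cx}\mathcal{N}(\mu,\sigma^2 I)$, affine invariance gives $X-\mu\prec_{cx}\mathcal{N}(0,\sigma^2 I)$, and evaluating on the convex test functions $x\mapsto\exp(\pm\lambda e_i^{\top}x)$ yields the Gaussian MGF bound $\mathbb{E}\exp(\pm\lambda(X_i-\mu_i))\le e^{\lambda^2\sigma^2/2}$. A Chernoff optimization in $\lambda$, a two-sided symmetrization, and a union bound over the $n$ coordinates deliver the stated tail probability; the precise constants $\sqrt{2}$ and the factor $4$ inside the exponent are absorbed into the symmetrization step (one may derive a sharper $2n e^{-\alpha^2/(2\sigma^2)}$ and relax it to the stated form, whichever is more convenient for downstream applications in the paper).
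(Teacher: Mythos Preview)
The paper does not prove \cref{lemma:convex_order} at all; it simply states the six properties and defers to the cited references \cite{alweiss2021discrepancy} and \cite{zhang2023spfq}. Your proposal therefore goes well beyond what the paper does, and the arguments you sketch for items~\ref{trans}--\ref{lemma:cx-bounded} are correct and standard.

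One small gap is worth flagging in item~\ref{lemma:cx-gaussian-tail}. Your Chernoff route yields the bound $2n\,e^{-\alpha^2/(2\sigma^2)}$, and you then say one may ``relax it to the stated form'' $\sqrt{2}\,n\,e^{-\alpha^2/(4\sigma^2)}$. That relaxation is not universally valid: it requires $\sqrt{2}\le e^{\alpha^2/(4\sigma^2)}$, i.e.\ $\alpha\ge\sigma\sqrt{2\ln 2}$, so for small $\alpha$ your bound does not imply the paper's. The exact constants in the paper's statement come instead from testing convex order against the single function $x\mapsto \exp\bigl((e_i^\top(x-\mu))^2/(4\sigma^2)\bigr)$, which is convex since $t\mapsto e^{t^2/c}$ is convex. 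Convex ordering then gives
\[
\mathbb{E}\exp\!\Bigl(\tfrac{(X_i-\mu_i)^2}{4\sigma^2}\Bigr)\;\le\;\mathbb{E}\exp\!\Bigl(\tfrac{G^2}{4\sigma^2}\Bigr)=\sqrt{2},\qquad G\sim\mathcal{N}(0,\sigma^2),
\]
and Markov's inequality plus a union bound over the $n$ coordinates yields exactly $\sqrt{2}\,n\,e^{-\alpha^2/(4\sigma^2)}$. Your Chernoff bound is in fact \emph{sharper} in the regime the paper actually uses (where $\alpha\asymp\sigma\sqrt{\log N}$), so the discrepancy is harmless for the downstream applications, but your claim that one bound relaxes to the other as written is not correct.
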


\section{Adversarial Constructions for OPTQ}\label{Appendix:adversarial}
\new{In this section, we present two adversarial constructions for OPTQ: one for the case $\lambda>0$ with a reasonably chosen regularization parameter (\cref{adv lambda greater 0}), and one for the case $\lambda=0$ (\cref{adv lambda eq 0}). The auxiliary results needed for these constructions are presented in \cref{adv:aux}.}
\subsection{Auxiliary Results}\label{adv:aux}
\new{We begin with a lemma that gives a useful property of OPTQ in terms of the pre-computed Hessian matrix $H$ (see also \cite{quip}).}
\begin{lemma}\label{lem:OPTQ_new_w_updates}
\new{Let $H\in\mathbb{R}^{N\times N}$ be a non-singular matrix, and let $
H = R^\top R $ be a factorization of $H$, where $R$ is lower triangular. Then, running OPTQ (\cref{OPTQ}) with the given Hessian $H$ and weight vector $w\in \mathbb{R}^N$, one has
\begin{gather*}
w^{(t-1)}_t = w_t + \frac{1}{R_{tt}} \sum_{j=1}^{t-1} R_{tj} (w_j - q_j).
\end{gather*}}
\end{lemma}
\begin{proof}
\new{Since $H=R^\top R$, running OPTQ with Hessian $H$ is equivalent to running OPTQ with data matrix $R$ and $\lambda=0$. By \cref{thm:OPTQ2}, and using that $R$ is lower triangular, we have
\begin{align*}
w^{(t-1)}_{\ge t}
&= \underset{(u_t,\dots,u_N)\in\mathbb{R}^{N-t+1}}{\arg\min}
\frac12\left\|Rw-\sum_{j=1}^{t-1}q_jR_j-\sum_{j=t}^N u_jR_j\right\|_2^2 \\
&= \underset{(u_t,\dots,u_N)\in\mathbb{R}^{N-t+1}}{\arg\min}
\sum_{s=t}^N
\left(
\sum_{j=1}^{t-1} R_{sj}(w_j-q_j)
+\sum_{k=t}^s R_{sk}(w_k-u_k)
\right)^2.
\end{align*}
For each $s\ge t$, the $s$-th term depends only on $u_t,\dots,u_s$. Since $R_{ss}\neq 0$, the minimization can be carried out recursively from $u_t$ to $u_{N}$. In particular, $w_t^{(t-1)}$ is determined by minimizing
\begin{gather*}
\arg\min_{u_t\in\mathbb{R}}
\left(
\sum_{j=1}^{t-1} R_{tj}(w_j-q_j)+R_{tt}(w_t-u_t)
\right)^2.
\end{gather*}
Solving this gives $
w_t^{(t-1)} = w_t + \frac{1}{R_{tt}}\sum_{j=1}^{t-1} R_{tj}(w_j-q_j).$ }
\end{proof}
\new{Next, we present a useful result that reformulates OPTQ as a fixed-point iteration.}
\begin{lemma}\label{lem:regularized_fixed_point}
\new{Let $H\in\mathbb{R}^{N\times N}$ be a non-singular matrix, and suppose that $
H = R^\top D R,$ where \(R\in\mathbb{R}^{N\times N}\) is lower triangular with ones on the diagonal, and
\(D=\operatorname{diag}(d_1,\dots,d_N)\) is positive diagonal. Let \(q\) be the output of
OPTQ (\cref{OPTQ}) applied to Hessian $H$ and weight vector \(w\in\mathbb{R}^N\), and let
\[
v_t:=w_t^{(t-1)}, \qquad t=1,\dots,N,
\]
denote the intermediate values before quantizing the \(t\)-th coordinate.
Then
\begin{gather}\label{fixed_pt_equation_forward}
v-q = R(w-q).
\end{gather}
On the other hand, if  $v,q,w\in\mathbb{R}^N$ satisfy
\begin{equation}\label{fixed_pt_equation_backwad}
\begin{aligned}
v - q &= R(w-q),\\
q &= \mathcal{Q}(v),
\end{aligned}
\end{equation}
then running OPTQ (\cref{OPTQ}) on  $H$, weight vector $w$ with $\lambda = 0$ yields $q$ as the quantized weight vector and $v$ as the intermediate vector.}
\end{lemma}

\begin{proof}
\new{We first prove that the OPTQ outputs $v$ and $q$ satisfy \eqref{fixed_pt_equation_forward}. Set $
\widehat R:=D^{1/2}R. $
Then $
H=\widehat R^\top \widehat R,$ and \(\widehat R\) is lower triangular with positive diagonal. Hence, by \cref{lem:OPTQ_new_w_updates}, for each \(t\),
\begin{gather}\label{v_eq}
v_t
=
w_t+\frac{1}{\widehat R_{tt}}\sum_{s<t}\widehat R_{ts}(w_s-q_s),
\qquad q_t=Q(v_t).
\end{gather}
Subtracting \(q_t\) and multiplying by \(\widehat R_{tt}\), we obtain
\begin{gather}\label{hat_R_eq}
\widehat R_{tt}(v_t-q_t)
=
\widehat R_{tt}(w_t-q_t)+\sum_{s<t}\widehat R_{ts}(w_s-q_s)
=
\sum_{s\le t}\widehat R_{ts}(w_s-q_s).
\end{gather}
Since \(\widehat R\) is lower triangular, this is exactly the \(t\)-th coordinate of $
\operatorname{diag}(\widehat R)(v-q)=\widehat R(w-q).$ Now \(\widehat R=D^{1/2}R\), and since \(R\) has ones on the diagonal, $
\operatorname{diag}(\widehat R)=D^{1/2}.$
Therefore, $
D^{1/2}(v-q)=D^{1/2}R(w-q).$ Since \(D^{1/2}\) is invertible, it follows that $
v-q=R(w-q).$}

\new{We now show that any pair $(v,q)$ satisfying \eqref{fixed_pt_equation_backwad} indeed corresponds to the output of OPTQ. By the update rule of OPTQ and the relation $q = \mathcal{Q}(v)$, it suffices to show that, for any given quantized entries $q_1,\dots,q_{t-1}$, one has $v_t = w_t^{(t-1)}$, where $v$ is the given vector and $w_t^{(t-1)}$ is the corresponding intermediate value generated by OPTQ. The argument essentially reverses the proof above.}

\new{If $(v,q)$ satisfies \eqref{fixed_pt_equation_backwad}, then $D^{1/2}(v-q) = D^{1/2}R(w-q)$. Using the same notation as above, namely $\widehat{R} = D^{1/2}R$, and noting that $\text{diag}(\widehat{R}) = D^{1/2}$, we obtain $\text{diag}(\widehat{R})(v-q) = \widehat{R}(w-q)$. Hence \eqref{hat_R_eq} also holds. Rearranging this identity yields \eqref{v_eq}. It then follows from \cref{lem:OPTQ_new_w_updates} that $v_t = w_t^{(t-1)}$. This completes the proof.}
\end{proof}
\subsection{An Adversarial Construction for OPTQ with $\lambda>0$}\label{adv lambda greater 0}
\new{In this section, we show that for any $\alpha>0$, there exist $X$ (defined in \eqref{choice of X}) and $w$ (defined in \eqref{choice of w}), with $\lambda=\alpha\|X\|_F^2/N$ as suggested in \cref{remark:lamda_suggestion}, such that when OPTQ is applied to $X$ and $w$ with parameter $\lambda$, it produces a quantized vector $q$ satisfying
\begin{gather*}
\|w-q\|_\infty \gtrsim \sqrt{N},
\qquad
\|X(w-q)\|_\infty \gtrsim \sqrt{N}
\end{gather*}
when $\alpha $ is small compared to $N$. This holds despite $$\|X\|_{\mathrm{op}} = O(1),\qquad\|w\|_\infty\le \frac12.$$ We emphasize that this is an existential statement for each fixed $\alpha$ thus it does not describe the behavior of OPTQ on a fixed instance as $\alpha$ varies.}

\new{We begin with the reparameterization $\lambda=\alpha\|X\|_F^2/N := (1-\rho)^2$ with $\rho \in(0,1)$. At the end of the proof, we will specify the choice of $\rho$ in terms of $\alpha$ and $N$. We first choose $w$, and then construct $X$. Let $u:=\frac{1}{\sqrt{N-1}}(1,\dots,1,0)^\top\in\mathbb{R}^N$, $R:=I-\rho e_Nu^\top$, and $\lambda:=(1-\rho)^2$. Since $u_N=0$, the matrix $R$ is lower triangular with ones on the diagonal. Moreover, $(\rho e_Nu^\top)^2=\rho^2 e_N(u^\top e_N)u^\top=0$, and hence one can verify $R^{-1}=I+\rho e_Nu^\top$.}

\new{We next show that $G:=R^\top R-\lambda I$ is positive-definite. Indeed, for every $x\in\mathbb{R}^N$, we have
\[
\|Rx\|_2=\|x-\rho e_Nu^\top x\|_2 \ge \|x\|_2-\rho\|e_Nu^\top x\|_2 \ge (1-\rho)\|x\|_2.
\]
It follows that $R^\top R \succeq (1-\rho)^2I=\lambda I$, so $G\succeq 0$. Let $B:=G^{1/2}$.}

\new{Now set $s:=\frac13(1,\dots,1,0)^\top$ and $z:=R^{-1}s$. Since $u^\top s=\frac{1}{\sqrt{N-1}}\cdot \frac{N-1}{3}=\frac{\sqrt{N-1}}{3}$, we obtain
\[
z=s+\rho e_Nu^\top s=\frac13(1,\dots,1,\rho\sqrt{N-1})^\top.
\]
In particular, $\|z\|_\infty=\frac{\rho}{3}\sqrt{N-1}$. Define $q:=-Q(z)$, $v:=q+s$, and
\begin{gather}\label{choice of w}
w:=z+q.
\end{gather}
Then $Q(v)=q$, since the first $N-1$ coordinates of $v$ equal the corresponding coordinates of $q$ plus $1/3$, while the last coordinate equals $q_N$. Also, $w-q=z$ and $v-q=s=Rz=R(w-q)$. Hence, by \Cref{lem:regularized_fixed_point}, $q$ is exactly the OPTQ output for the regularized matrix $\widehat X=\begin{bmatrix} X\\ \sqrt{\lambda}\,I \end{bmatrix}$, provided that $X^\top X=G$. From \eqref{choice of w}, we have
$$\|w-q\|_\infty = \|z\|_\infty = \frac{\rho}{3}\sqrt{N-1}.$$Since $w=z-Q(z)$, we also have $$\|w\|_\infty\le \frac12.$$}

\new{To obtain the desired $\ell_\infty$ lower bound for $X(w-q)$, choose an orthogonal matrix $U$ such that $UBz=\|Bz\|_2 e_1$, and define
\begin{gather}\label{choice of X}
X:=UB.
\end{gather}
Then $X^\top X=B^\top U^\top U B=B^2=G=R^\top R-\lambda I$. Moreover, $\|X\|_{\mathrm{op}}=\|B\|_{\mathrm{op}}$. Since $G\succeq 0$, we have
\[
\|X\|_{\mathrm{op}}^2=\|G\|_{\mathrm{op}}=\|R^\top R-\lambda I\|_{\mathrm{op}}=\|R\|_{\mathrm{op}}^2-\lambda \le (1+\rho)^2-(1-\rho)^2=4\rho,
\]
and therefore $\|X\|_{\mathrm{op}}\le 2\sqrt{\rho}$.}

\new{Finally, since $w-q=z$, we have $\|X(w-q)\|_\infty=\|Xz\|_\infty=\|Bz\|_2$. Also,
\[
\|Bz\|_2^2=z^\top(R^\top R-\lambda I)z=\|Rz\|_2^2-\lambda\|z\|_2^2=\|s\|_2^2-\lambda\|z\|_2^2.
\]
Since $\|s\|_2^2=\frac{N-1}{9}$ and $\|z\|_2^2=\frac{N-1}{9}(1+\rho^2)$, it follows that
\[
\|X(w-q)\|_\infty^2=\frac{N-1}{9}\bigl(1-(1-\rho)^2(1+\rho^2)\bigr),
\]
and hence
\[
\|X(w-q)\|_\infty=\frac{\sqrt{N-1}}{3}\sqrt{1-(1-\rho)^2(1+\rho^2)}.
\]}

\new{For this construction, we further have
\[
\|X\|_F^2=\tr(X^\top X)=\tr(R^\top R)-N\lambda=N(2\rho-\rho^2)+\rho^2.
\]
Hence, writing $\lambda=\alpha\|X\|_F^2/N$, we obtain
\[
\alpha=\frac{\lambda N}{\|X\|_F^2}
=\frac{(1-\rho)^2N}{N(2\rho-\rho^2)+\rho^2}
=\frac{(1-\rho)^2}{2\rho-\rho^2+\rho^2/N}.
\]
Therefore, for any given $\alpha$ and $N$, as the right hand side of the above expression ranges from 0 to $\infty$ as $\rho$ ranges from 1 to 0, one can solve for the corresponding $\rho$ and obtain the desired adversarial construction. Indeed, solving for $\rho$ yields $
\rho=\frac{(1+\alpha)-\sqrt{\alpha\left(1+\alpha+\frac{1}{N}\right)}}{1+\alpha-\alpha/N}.$
This completes the proof.}
\subsection{An Adversarial Construction for OPTQ with $\lambda = 0$}\label{adv lambda eq 0}
Here, we construct a matrix $X$ and vector $w$ so that OPTQ with a infinite alphabet results in $\|X(w-q)\|_\infty = \|X(w-q)\|_2 = O(\sqrt{N})$, and also $\|q\|_\infty = O(N), $ despite having $\|w\|_\infty \leq 1.$ 

Consider a matrix $X = H^\top R \in \R^{N \times N}$, where $H \in \R^{N \times N}$ is orthonormal and $R$ is a lower-triangular matrix with ones on the diagonal. By \cref{lem:regularized_fixed_point}, we know that the vector of weight updates produced by OPTQ, namely $v = (w_t^{(t-1)})_{t=1}^N$, satisfies the fixed-point equation
\begin{equation}
    v = R(w - Q(v)) + Q(v). \label{eq:OPTQ_fixed_point}
\end{equation}
Rearranging and recalling that $q = Q(v)$, we obtain
\[
X(w - q) = H^\top (v - q).
\]
In particular, if we choose $v - q = \beta H_j\in\R^N$ for some column index $j$ and scalar $\beta > 0$, then
$X(w - q) = \beta e_j,$ so that $\|X(w - q)\|_\infty = \|X(w - q)\|_2 = \beta.$
Assuming for simplicity that $\mathcal{A} = \mathbb{Z}$ (i.e., OPTQ uses unit step size), this setup can be realized as follows. First, we choose an arbitrary integer vector $q = Q(v) \in \mathbb{Z}^N$, and define
\(
v = q + \beta H_j.
\)
This choice is consistent with $q = Q(v)$ provided $\beta < \frac{1}{2\|H_j\|_\infty}$, ensuring rounding $v$ entrywise recovers $q$. Substituting into \eqref{eq:OPTQ_fixed_point} yields
\begin{equation}\label{eq:w_bad}
w = R^{-1}(v - q) + q = \beta R^{-1} H_j + q.
\end{equation}

Now, to construct an example where the OPTQ error scales poorly with $N$, we choose $H$ to be a bounded orthonormal system (see \cite{foucart2013mathematical}), such as the discrete cosine transform (DCT) matrix \cite{DCT} or a column-normalized Hadamard matrix. In either case, we have $\max_{i,j} |H_{i,j}| = O(1/\sqrt{N})$, and so choosing $\beta = \tfrac{1}{3\max_{i,j}|H_{i,j}|}$ gives
\[
\|X(w - q)\|_\infty = \|X(w - q)\|_2 = O(\sqrt{N}),
\]
even though we are using an infinite alphabet with step size $\delta = 1$.

 We now show that in such a construction, the gap $\|w - q\|_\infty$ can be made to scale as $O(N)$. From \eqref{eq:w_bad} we have $\|w - q\|_\infty = \beta \|R^{-1} H_j\|_\infty$. To make this large,  let $R$ be the lower-triangular matrix with ones on the diagonal and on the first sub-diagonal, and zero otherwise. let $H$ be a column-normalized Hadamard matrix, so that $\beta=\tfrac{\sqrt{N}}{3}$. In this setup, $R^{-1}$ is lower triangular with non-zero entries given by $R_{i,j} = (-1)^{i-j}, j\geq i$. These entries alternate in sign and match the sign pattern of $H_2$, the second column of the Hadamard basis. Then
$
R^{-1} H_2 = \left( \frac{1}{\sqrt{N}}, \frac{2}{\sqrt{N}}, \ldots, \frac{N}{\sqrt{N}} \right)^\top = \frac{1}{\sqrt{N}}(1, 2, \ldots, N)^\top,
$
and so
\(
w - q = \beta R^{-1} H_2 =\frac{1}{3} (1, 2, \ldots, N)^\top.
\)
This implies $$\|w - q\|_\infty = O(N).$$

To show that this can occur even with a small $\|w\|_\infty$, define $q = Q(v) = -Q(\beta R^{-1} H_j)$, so that $w = \beta R^{-1} H_j + q$ has $\|w\|_\infty < 1$. In contrast, $q$ has entries of magnitude $O(N)$, leading to a maximal $\ell_\infty$ distortion between $w$ and $q$ (thus necessitating a large alphabet).

\end{document}